\newif\ifIEEEFORMAT
\IEEEFORMATfalse

\ifIEEEFORMAT
  \documentclass[letterpaper,10pt,conference]{ieeeconf}
  \IEEEoverridecommandlockouts
\else
  \documentclass[10pt,journal,onecolumn]{IEEEtran}
\fi

\usepackage[T1]{fontenc}
\usepackage[utf8]{inputenc}

\ifIEEEFORMAT
  \usepackage{amsmath,amssymb,amsfonts}

  \usepackage{amsthm}
\else
  \usepackage{amsmath,amssymb,amsfonts,amsthm}
\fi

\usepackage{newtxtext,newtxmath}
\usepackage{microtype}
\usepackage{bm}
\usepackage[hidelinks]{hyperref}
\ifIEEEFORMAT
\else
  \usepackage{titlesec}
\fi
\usepackage{cite}
\usepackage{graphicx}
\usepackage{array}
\usepackage{enumitem}
\setlist[enumerate]{leftmargin=*}
\setlist[itemize]{leftmargin=*}
\usepackage{booktabs}
\usepackage{tabularx}
\usepackage{etoolbox}
\usepackage{xparse}

\usepackage{tikz}
\usetikzlibrary{
  arrows.meta,
  calc,
  positioning,
  fit,
  backgrounds,
  decorations.pathreplacing
}

\definecolor{Navy}{HTML}{15324A}
\definecolor{Blue}{HTML}{1F6F9C}
\definecolor{Teal}{HTML}{168477}
\definecolor{Orange}{HTML}{C66A1B}
\definecolor{Red}{HTML}{A63D40}
\definecolor{Ink}{HTML}{182026}
\definecolor{Mid}{HTML}{65727C}
\definecolor{Pale}{HTML}{F3F5F6}
\definecolor{PaleBlue}{HTML}{EAF2F7}
\definecolor{PaleOrange}{HTML}{FAEFE4}
\definecolor{LinkBlue}{HTML}{165F8A}

\colorlet{armgray}{Ink}
\colorlet{taskblue}{Blue}
\colorlet{ellipseorange}{Orange}

\tikzset{
  arm/.style={
    draw=armgray,
    line width=2.2pt,
    line cap=round,
    line join=round
  },
  joint/.style={
    circle,
    draw=armgray,
    fill=white,
    line width=0.8pt,
    inner sep=1.55pt
  },
  endpoint/.style={
    circle,
    draw=taskblue,
    fill=white,
    line width=1.0pt,
    inner sep=1.8pt
  },
  ellipsoid/.style={
    draw=ellipseorange,
    fill=ellipseorange,
    fill opacity=0.14,
    line width=0.9pt
  },
  taskaxis/.style={
    taskblue,
    -{Latex[length=2.0mm]},
    line width=0.8pt
  },
  paneltext/.style={font=\small},
  valuetext/.style={font=\scriptsize}
}

\newcommand{\V}{\mathcal{V}}
\newcommand{\W}{\mathcal{W}}
\newcommand{\Fw}{\mathcal{F}_{w}}
\newcommand{\R}{\mathbb{R}}
\newcommand{\dd}{\mathrm{d}}
\newcommand{\argmax}{\operatorname*{arg\,max}}
\newcommand{\argmin}{\operatorname*{arg\,min}}

\newtheorem{theorem}{Theorem}
\newtheorem{lemma}{Lemma}
\newtheorem{corollary}{Corollary}
\newtheorem{definition}{Definition}
\newtheorem{proposition}{Proposition}

\newtheorem{asmpt}{Assumption}

\ifIEEEFORMAT
\else
  \renewcommand{\thesection}{\arabic{section}}
  \renewcommand{\thesubsection}{\thesection.\arabic{subsection}}

  \titleformat{\section}
    {\normalfont\large\bfseries}{\thesection}{0.65em}{}
  \titleformat{\subsection}
    {\normalfont\normalsize\bfseries}{\thesubsection}{0.65em}{}

  \titlespacing*{\section}
    {0pt}{1.8ex plus 0.4ex minus 0.2ex}{0.6ex}
  \titlespacing*{\subsection}
    {0pt}{1.25ex plus 0.3ex minus 0.2ex}{0.4ex}

\fi

\newcommand{\PaperAuthorsAffiliated}
  {Antonio Franchi$^{1,2}$ and Mirko Mizzoni$^{1}$}

\newcommand{\AffiliationOne}{%
  $^1$ Robotics and Mechatronics Group, Faculty of Electrical Engineering,
  Mathematics and Computer Science, University of Twente, Enschede,
  The Netherlands, \texttt{schol@r-franchi.eu},
  \texttt{m.mizzoni@utwente.nl}.%
}

\newcommand{\AffiliationTwo}{%
  $^2$ Department of Computer, Control and Management Engineering,
  Sapienza University of Rome, 00185 Rome, Italy,
  \texttt{schol@r-franchi.eu}.%
}

\newcommand{\FundingStatement}{%
  This work was partially funded by the Horizon Europe research
  project AUTOASSESS under grant agreement No.~101120732.%
}

\newcommand{\PaperTitle}{}
\newcommand{\PaperKeywords}{}
\NewDocumentCommand{\shorttitle}{m}{}
\NewDocumentCommand{\shortauthors}{m}{}
\RenewDocumentCommand{\title}{o m}{\gdef\PaperTitle{#2}}
\RenewDocumentCommand{\author}{o m}{}
\NewDocumentCommand{\cormark}{o}{}
\NewDocumentCommand{\fnmark}{o}{}
\NewDocumentCommand{\ead}{o m}{}
\NewDocumentCommand{\affiliation}{o m}{}
\NewDocumentCommand{\cortext}{o m}{}
\NewDocumentCommand{\fntext}{o m}{}
\newcommand{\sep}{\unskip,\ }
\NewDocumentEnvironment{keywords}{+b}{\gdef\PaperKeywords{#1}}{}

\makeatletter
\RenewDocumentEnvironment{figure}{o}{%
  \IfNoValueTF{#1}
    {\@float{figure}}
    {\ifstrequal{#1}{pos=t}{\@float{figure}[t]}{\@float{figure}[#1]}}%
}{\end@float}
\RenewDocumentEnvironment{table}{o}{%
  \IfNoValueTF{#1}
    {\@float{table}}
    {\ifstrequal{#1}{pos=t}{\@float{table}[t]}{\@float{table}[#1]}}%
}{\end@float}
\makeatother

\ifIEEEFORMAT
\else
  \let\ArxivOriginalAbstract\abstract
  \let\endArxivOriginalAbstract\endabstract
  \RenewDocumentEnvironment{abstract}{}{%
    \ArxivOriginalAbstract
  }{%
    \endArxivOriginalAbstract
    \begin{IEEEkeywords}
      \PaperKeywords
    \end{IEEEkeywords}
  }
\fi

\ifIEEEFORMAT
\else
  \makeatletter
  \def\maketitle{%
    \par\begingroup
      \raggedright
      {\LARGE\bfseries \PaperTitle\par}
      \vspace{0.7em}
      {\normalsize \PaperAuthorsAffiliated\par}
      \vspace{0.55em}
      {\footnotesize
        \AffiliationOne\par
        \AffiliationTwo\par
        \FundingStatement\par
      }
      \vspace{1.0em}
    \endgroup
  }
  \makeatother
\fi

\begin{document}

\let\WriteBookmarks\relax
\def\floatpagepagefraction{1}
\def\textpagefraction{.001}

\shorttitle{Fiber-Normalized Manipulability and Determinant Proxies}
\shortauthors{A. Franchi and M. Mizzoni}

\title[mode=title]{
Fiber-Normalized Manipulability and Determinant Proxies:
Intrinsic Redundancy Optimization Across and Within Task Fibers
}

\author[aff1,aff2]{Antonio Franchi}
\cormark[1]
\fnmark[1]
\ead{schol@r-franchi.eu}

\author[aff1]{Mirko Mizzoni}
\ead{m.mizzoni@utwente.nl}

\affiliation[aff1]{
  organization={Robotics and Mechatronics Group},
  department={
    Faculty of Electrical Engineering,
    Mathematics and Computer Science
  },
  university={University of Twente},
  city={Enschede},
  country={The Netherlands}
}

\affiliation[aff2]{
  organization={
    Department of Computer, Control and Management Engineering
  },
  university={Sapienza University of Rome},
  city={Rome},
  postcode={00185},
  country={Italy}
}

\cortext[cor1]{Corresponding author.}

\fntext[fn1]{
  This work was partially funded by the Horizon Europe research
  project AUTOASSESS under grant agreement No.~101120732.
}

\begin{keywords}
Manipulability \sep
Redundancy optimization \sep
Fiber normalization \sep
Coordinate invariance \sep
Riemannian metrics \sep
Trajectory optimization
\end{keywords}

\maketitle

\begin{abstract}
This work establishes that determinant-based manipulability is an exact
objective for fixed-task redundancy optimization, despite its dependence
on task coordinates and the choice of task-space metric used for volume
measurement. On every regular task fiber, the determinant proxy, its
representation in any task chart, and every metric-completed
manipulability differ only by positive constants. They consequently
induce the same complete ordering, constrained extrema, gradient
directions, critical points, and local optimality classifications.

For comparisons and trajectory optimization across task fibers, this
work introduces \emph{fiber-normalized manipulability}: the capability
attained at an internal state divided by the best capability available
on the same fiber. The resulting dimensionless scalar is invariant
under coordinate changes on the internal-state and task manifolds and
independent of the task-space metric. Its associated loss provides an
intrinsic objective for physically admissible cross-fiber trajectories,
including problems with prescribed or free task evolution and temporal
coupling. Planar-manipulator and redundant aerodynamic-allocation
examples demonstrate fixed-fiber equivalence, task-dependent
cross-fiber differences, and invariant fiber-normalized trajectory
optimization.
\end{abstract}

\section{Introduction}
\label{sec:introduction}

Kinematic redundancy arises when the configuration variables of a
robotic or mechanical system provide more degrees of freedom than are
required to describe its task. The same task value can then be realized
by multiple, and often infinitely many, configurations. This freedom
can be exploited to satisfy secondary objectives, including joint-limit
avoidance, obstacle avoidance, reduction of actuator effort, and
improvement of dexterity.

A seminal formulation of differential inverse kinematics for robot
control is Whitney's resolved-motion-rate method~\cite{Whitney1969},
which maps a desired task-space velocity into joint velocities through
the instantaneous inverse of the task Jacobian. For redundant systems,
the general solution of the differential kinematics contains a
homogeneous component that does not affect the primary task velocity. A
seminal step in its systematic exploitation was Li\'egeois'
gradient-projection method~\cite{Liegeois1977}, which uses this
component to project the gradient of a configuration-dependent
secondary criterion into the null space of the task Jacobian. Task-
priority schemes and related formulations subsequently extended this
principle to multiple objectives and constraints
~\cite{NakamuraHanafusaYoshikawa1987,Siciliano1990}. These developments
established the framework underlying much of contemporary redundancy
resolution: the primary task specifies the required output motion,
while the remaining mobility is assigned to secondary performance
criteria.

Manipulability became one of the most influential criteria for this
purpose. Yoshikawa introduced a Jacobian-based measure associated with
the volume of the task-space velocity ellipsoid and used it to assess
the ability of a mechanism to generate task motion
~\cite{Yoshikawa1985,Yoshikawa1985Control}. The same idea was extended
to dynamic manipulability~\cite{Yoshikawa1985Dynamic} and to separate
translational and rotational capabilities~\cite{Yoshikawa1990}. Other
dexterity and performance measures were subsequently investigated for
mechanism design, optimal posture selection, and control
~\cite{KleinBlaho1987,Doty1995}; a broad account of their definitions,
classifications, scope, and limitations is provided by the survey
in~\cite{PatelSobh2015}.

In the familiar Euclidean representation, volume-based manipulability
is computed from the determinant of the Jacobian Gram matrix. We call
this unweighted coordinate expression the \emph{determinant proxy}.
The question addressed here is whether its coordinate dependence
prevents it from being an exact objective for fixed-task redundancy
optimization.

Manipulability optimization has since been incorporated into many
forms of redundancy resolution. Representative developments include
dexterity-optimal motion control~\cite{Iwatsuki1994}, null-space and
optimization-based controllers~\cite{Jin2017,Su2019}, and globally
oriented redundancy-resolution methods for serial
manipulators~\cite{HuberWollherr2019,HuberWollherr2021}. Manipulability
has also been included in collision-free path planning
~\cite{KadenThomas2019,Shen2023} and continuous-time trajectory
optimization~\cite{Maric2019}. These studies demonstrate the practical
value of maintaining configurations with favorable differential task
capability during motion. More recently, related determinant-based
quantities have also been employed beyond conventional serial
manipulation, including redundant aerodynamic actuation and multirotor
control allocation
~\cite{Franchi2026Coactivation,Franchi2026AeroPromptness}.

The geometric interpretation of manipulability has also received
considerable attention. Coordinate-free formulations regard the
forward kinematics as a smooth map between Riemannian manifolds and
define dexterity through geometric structures on the configuration and
task spaces~\cite{ParkBrockett1994,ParkKim1998}. These formulations
clarify that manipulability is not determined by a Jacobian matrix
alone: the matrix represents a differential only after coordinates have
been selected, while lengths, volumes, and comparisons of heterogeneous
task components require suitable metric structures. More recent
tensorial analyses have made the corresponding transformation laws
explicit and have shown how metrics and tensor contractions yield
coordinate-invariant manipulability descriptions~\cite{Lachner2020}.

These geometric formulations distinguish a change of coordinates from
a change of metric. The determinant proxy depends on the task chart;
a metric-completed measure is chart invariant, but its numerical
value depends on the task metric chosen to measure volume. We examine
the consequences for two uses: optimizing redundant configurations
at a prescribed task value, and comparing configurations that realize
different task values.

Our first result establishes exact optimization equivalence on a
fixed task fiber. For comparisons across fibers, we introduce
\emph{fiber-normalized manipulability}, which evaluates each
configuration relative to the best capability available at the same
task value. This separates relative redundancy utilization from
absolute task capability.

The main contributions of this work are:
\begin{enumerate}[label=(\roman*)]
    \item A geometric characterization of fixed-fiber equivalence:
    task-side coordinate and metric changes preserve constrained
    extrema, restricted gradient directions, critical points, and
    local optimality classifications.
    \item A fiber-normalized index, with well-posedness and range
    results, that is invariant under coordinate changes on both
    manifolds and independent of the task-space metric.
    \item An analysis of cross-fiber and trajectory objectives,
    illustrated on redundant-manipulator and aerodynamic-allocation
    examples with and without temporal coupling.
\end{enumerate}

Section~\ref{sec:setting} introduces the geometric construction.
Section~\ref{sec:main-results} distinguishes the optimization regimes
and establishes fixed-fiber equivalence.
Section~\ref{sec:fiber-normalized} develops the normalized index and
its use in trajectory objectives. Section~\ref{sec:examples} presents
the numerical case studies, and Section~\ref{sec:discussion} discusses
their practical implications and limitations.

 \section{Geometric formulation of manipulability}
\label{sec:setting}

Determinant-based manipulability indices arise from three ingredients:
the differential kinematics of the task, a notion of admissible
internal velocity, and a notion of volume in the task space. This
section introduces these objects and clarifies the geometric meaning
of the matrix whose determinant is used in classical manipulability
indices.

\subsection{Task map, redundancy, and fixed-task fibers}
\label{subsec:task-fibers}

Let $(\V,g)$ be an $n$-dimensional Riemannian manifold representing
the configuration, actuator, or internal state space of a mechanical
system. Its Riemannian metric is a smooth assignment
\begin{align}
    v\longmapsto g_v,
\end{align}
equivalently represented by the smooth map
\begin{align}
    g:T\V\times_{\V}T\V\to\R,
\end{align}
such that, for every $v\in\V$,
\begin{align}
    g_v:T_v\V\times T_v\V\to\R
\end{align}
is an inner product, that is, a positive-definite symmetric bilinear
form.
The metric determines the magnitude of an internal tangent vector
$\xi\in T_v\V$ through
\begin{align}
    \|\xi\|_{g_v}^{2}:=g_v(\xi,\xi).
\end{align}
For a robotic manipulator, $g$ may be the Euclidean joint-rate metric.
More generally, it may account for nonuniform, coupled, or
state-dependent actuation capabilities.

Let $\W$ be an $m$-dimensional task manifold, with $n>m$, and let
\begin{align}
    f:\V\to\W
    \label{eq:map_f}
\end{align}
be a smooth task map. At $v\in\V$, with $w=f(v)$, its differential
\begin{align}
    \mathrm df_v:T_v\V\to T_w\W
\end{align}
maps tangent vectors at $v$ into tangent vectors at $w$. Depending on
the physical meaning of the manifold variables, these tangent vectors
may represent joint rates, actuator-state rates, propeller accelerations, or
other first-order variations.

We restrict the analysis to an open regular region on which
\begin{align}
    \operatorname{rank}(\mathrm df_v)=m.
    \label{eq:regular-task-map}
\end{align}
Thus, $\mathrm df_v$ is surjective throughout the region of interest.
Global surjectivity of $f$ is not required, and the regular region is
denoted again by $\V$ for notational simplicity.

For a prescribed task value $w\in f(\V)$, the admissible internal
states form the fiber
\begin{align}
    \Fw:=f^{-1}(w).
    \label{eq:fiber-definition}
\end{align}
By the regular-level-set theorem, $\Fw$ is an embedded submanifold of
dimension $n-m$, and
\begin{align}
    T_v\Fw=\ker(\mathrm df_v),
    \quad v\in\Fw.
    \label{eq:fiber-tangent-space}
\end{align}
Hence, a fixed-task fiber is the nonlinear feasible set underlying
redundancy resolution, while its tangent space is the Jacobian null
space used in its differential formulation.

At each $v\in\V$, the inner product $g_v$ induces the orthogonal
decomposition
\begin{align}
    T_v\V=V_v\oplus H_v,
    \label{eq:vertical-horizontal-decomposition}
\end{align}
where
\begin{align}
    V_v:=T_v\mathcal F_w=\ker(\mathrm df_v),
    \quad
    H_v:=V_v^{\perp_{g_v}}.
\end{align}
The vertical space $V_v$ contains the tangent directions that leave
the task value unchanged to first order. The horizontal space $H_v$
is their $g_v$-orthogonal complement and contains the tangent
directions responsible for first-order task variations.

\subsection{From internal tangent vectors to a task-space co-metric}
\label{subsec:pushed-cometric}

Manipulability describes how admissible tangent vectors at an internal
state are mapped into tangent vectors at the corresponding task value.
To formulate this relation geometrically, we first introduce the
co-metric induced by $g_v$ on the internal cotangent space.

For every $v\in\V$, the inner product $g_v$ defines the flat musical
isomorphism
\begin{align}
    g_v^\flat:T_v\V\to T_v^*\V,
    \quad
    g_v^\flat(\xi):=g_v(\xi,\cdot).
    \label{eq:internal-flat-map}
\end{align}
Since $g_v$ is positive definite, $g_v^\flat$ is an isomorphism. Its
inverse is the sharp musical isomorphism
\begin{align}
    g_v^\sharp
    :=
    \left(g_v^\flat\right)^{-1}
    :
    T_v^*\V\to T_v\V.
    \label{eq:internal-sharp-map}
\end{align}
Thus, every covector $\eta\in T_v^*\V$ is associated with the unique
tangent vector $g_v^\sharp(\eta)\in T_v\V$ satisfying
\begin{align}
    \eta(\xi)
    =
    g_v\!\left(g_v^\sharp(\eta),\xi\right)
    \quad
    \text{for every }\xi\in T_v\V.
    \label{eq:sharp-characterization}
\end{align}

The inner product $g_v$ induces the co-metric
\begin{align}
    g_v^*:T_v^*\V\times T_v^*\V\to\R
\end{align}
defined by
\begin{align}
    g_v^*(\eta,\zeta)
    :=
    g_v\!\left(
        g_v^\sharp(\eta),
        g_v^\sharp(\zeta)
    \right)
    =
    \zeta\!\left(g_v^\sharp(\eta)\right).
    \label{eq:internal-cometric}
\end{align}
Therefore, $g_v^*$ is an inner product on the cotangent space
$T_v^*\V$.

Now let $w=f(v)$. The differential of the task map induces the
pullback
\begin{align}
    \mathrm df_v^*:T_w^*\W\to T_v^*\V,
    \label{eq:task-covector-pullback}
\end{align}
defined, for every $\alpha\in T_w^*\W$, by
\begin{align}
    \mathrm df_v^*\alpha
    :=
    \alpha\circ\mathrm df_v.
    \label{eq:task-covector-pullback-definition}
\end{align}
Equivalently, for every $\xi\in T_v\V$,
\begin{align}
    \left(\mathrm df_v^*\alpha\right)(\xi)
    =
    \alpha\!\left(\mathrm df_v(\xi)\right).
    \label{eq:pullback-evaluation}
\end{align}
Thus, $\mathrm df_v^*\alpha$ is the internal covector that evaluates
an internal tangent vector according to the task variation produced
by that tangent vector.

The internal co-metric $g_v^*$ can consequently be used to compare the
pullbacks of task covectors. For $\alpha,\beta\in T_w^*\W$, define
\begin{align}
    d_v(\alpha,\beta)
    :=
    g_v^*\!\left(
        \mathrm df_v^*\alpha,
        \mathrm df_v^*\beta
    \right).
    \label{eq:pushed-forward-cometric-bilinear}
\end{align}
Since $\mathrm df_v$ is surjective on the regular region, its pullback
$\mathrm df_v^*$ is injective. Hence, every nonzero task covector has a
nonzero pullback. Since $g_v^*$ is an inner product on $T_v^*\V$, it
follows that $d_v$ is an inner product on $T_w^*\W$, equivalently, a
co-metric at $w=f(v)$.

The co-metric $d_v$ is induced by the internal state $v$ and generally
varies among different internal states in the same fiber
$\mathcal F_w$. Therefore, the assignment $v\mapsto d_v$ does not, in
general, define a co-metric field depending only on the task value
$w$.

Since $d_v$ is a co-metric, its associated musical map sends task
covectors to task tangent vectors. To derive this map explicitly,
observe that
\begin{align}
    d_v(\alpha,\beta)
    =
    g_v^*\!\left(
        \mathrm df_v^*\alpha,
        \mathrm df_v^*\beta
    \right)
=
    \left(\mathrm df_v^*\beta\right)
    \!\left(
        g_v^\sharp\!\left(\mathrm df_v^*\alpha\right)
    \right).
    \label{eq:pushed-forward-cometric-intermediate}
\end{align}
Applying~\eqref{eq:pullback-evaluation} with
$\xi=g_v^\sharp(\mathrm df_v^*\alpha)$ gives
\begin{align}
    d_v(\alpha,\beta)
    =
    \beta\!\left(
        \mathrm df_v\!\left(
            g_v^\sharp\!\left(\mathrm df_v^*\alpha\right)
        \right)
    \right).
    \label{eq:pushed-forward-cometric-evaluation}
\end{align}
This identity identifies the sharp map associated with $d_v$ as
\begin{align}
    d_v^\sharp
    :=
    \mathrm df_v
    \circ g_v^\sharp
    \circ \mathrm df_v^*
    :
    T_w^*\W\to T_w\W,
    \label{eq:pushed-forward-cometric}
\end{align}
so that
\begin{align}
    d_v(\alpha,\beta)
    =
    \beta\!\left(d_v^\sharp(\alpha)\right).
    \label{eq:task-cometric-sharp-characterization}
\end{align}
The three maps in~\eqref{eq:pushed-forward-cometric} have distinct
roles: $\mathrm df_v^*$ pulls a task covector back to the internal
cotangent space, $g_v^\sharp$ associates it with an internal tangent
vector, and $\mathrm df_v$ maps that tangent vector to the task tangent
space.

Since $d_v$ is positive definite, its associated sharp map
$d_v^\sharp$ is an isomorphism. The inverse map
\begin{align}
    \bar d_v^\flat
    :=
    \left(d_v^\sharp\right)^{-1}
    =
    \left(
        \mathrm df_v
        \circ g_v^\sharp
        \circ \mathrm df_v^*
    \right)^{-1}
    :
    T_w\W\to T_w^*\W
    \label{eq:induced-task-flat-map}
\end{align}
is the flat map associated with an inner product $\bar d_v$ on
$T_w\W$. This inner product is defined by
\begin{align}
    \bar d_v(\zeta,\chi)
    :=
    \bar d_v^\flat(\zeta)(\chi),
    \quad
    \zeta,\chi\in T_w\W.
    \label{eq:induced-task-inner-product}
\end{align}
Equivalently,
\begin{align}
    \bar d_v(\zeta,\chi)
    =
    d_v\!\left(
        \bar d_v^\flat(\zeta),
        \bar d_v^\flat(\chi)
    \right).
\end{align}
Thus, $d_v$ is the state-induced co-metric on $T_w^*\W$, whereas
$\bar d_v$ is its inverse inner product on $T_w\W$. Their associated
musical isomorphisms satisfy
\begin{align}
    \bar d_v^\flat
    =
    \left(d_v^\sharp\right)^{-1},
    \quad
    d_v^\sharp
    =
    \left(\bar d_v^\flat\right)^{-1}.
\end{align}
Both $d_v$ and $\bar d_v$ generally depend on the internal state $v$.

To obtain the familiar matrix representation, choose coordinate
charts
\[
    x:\mathcal U\subset\V\to\mathbb R^n,
    \quad
    y:\mathcal O\subset\W\to\mathbb R^m,
\]
with $v\in\mathcal U$ and $w=f(v)\in\mathcal O$. Let
\[
    F:=y\circ f\circ x^{-1}
\]
be the local representation of the task map. The matrix representing
$\mathrm df_v$ in these coordinates is
\begin{align}
    J_f^{x,y}(v)
    :=
    DF_{x(v)}
    =
    D\!\left(y\circ f\circ x^{-1}\right)_{x(v)}.
    \label{eq:task-jacobian}
\end{align}

Let $G_x(v)$ denote the matrix representing the inner product $g_v$
in the $x$-coordinate basis. The inverse matrix $G_x(v)^{-1}$
represents the sharp isomorphism $g_v^\sharp$ in the associated
coordinate and dual bases. It also represents the induced co-metric
$g_v^*$ in the dual basis.

Consequently, the matrix representing $d_v^\sharp$, or equivalently
the matrix representing the co-metric $d_v$ in the task-coordinate
dual basis, is
\begin{align}
    D_{x,y}(v)
    :=
    J_f^{x,y}(v)
    G_x(v)^{-1}
    J_f^{x,y}(v)^\top.
    \label{eq:D-def}
\end{align}
After fixing the coordinate charts, we write simply $J_f(v)$,
$G(v)$, and $D(v)$.

The distinction among these objects is important. The matrix $G(v)$
represents the inner product $g_v$ on $T_v\V$. In the associated
coordinate and dual bases, $G(v)^{-1}$ represents both the sharp
isomorphism
\begin{align}
    g_v^\sharp:T_v^*\V\to T_v\V
\end{align}
and the co-metric $g_v^*$ on $T_v^*\V$.

Likewise, in the selected task-coordinate and dual bases, $D(v)$
represents both the sharp isomorphism
\begin{align}
    d_v^\sharp:T_w^*\W\to T_w\W
\end{align}
and the state-induced co-metric $d_v$ on $T_w^*\W$. Its inverse
$D(v)^{-1}$ represents both the flat isomorphism
\begin{align}
    \bar d_v^\flat:T_w\W\to T_w^*\W
\end{align}
and the corresponding state-induced inner product $\bar d_v$ on
$T_w\W$.

In the selected internal coordinates, the unit ball determined by
$g_v$ is
\begin{align}
    \mathcal B_v
    :=
    \left\{
        \dot x\in\mathbb R^n:
        \dot x^\top G(v)\dot x\leq 1
    \right\}.
    \label{eq:internal-unit-ball}
\end{align}
Here, $\dot x$ is the coordinate representation of a tangent vector
in $T_v\V$. Its physical interpretation depends on the quantities
represented by $\V$.

Since $\mathrm df_v$ is surjective on the regular region, the image of
$\mathcal B_v$ under $\mathrm df_v$ is the unit ball determined by the
state-induced inner product $\bar d_v$ on $T_w\W$. In the selected task
coordinates, this image is the ellipsoid
\begin{align}
    \mathcal E_v
    :=
    \left\{
        \dot y\in\mathbb R^m:
        \dot y^\top D(v)^{-1}\dot y\leq 1
    \right\},
    \label{eq:task-velocity-ellipsoid}
\end{align}
where $\dot y$ is the coordinate representation of a tangent vector
in $T_w\W$.

Thus, $D(v)^{-1}$ is the matrix of the inner product whose unit ball
is $\mathcal E_v$. Equivalently, $D(v)$ is the matrix of the dual
co-metric. Relative to the Euclidean structure of the selected task
coordinates, the eigenvectors of $D(v)$ determine the principal
directions of $\mathcal E_v$, and its eigenvalues are the squared
semiaxis lengths.

\subsection{Task-space volume and determinant proxies}
\label{subsec:determinant-proxies}

The state-induced inner product $\bar d_v$ and a task-space metric
serve different purposes. The inner product $\bar d_v$ is determined
by the internal metric $g_v$ and the differential $\mathrm df_v$; its
unit ball is precisely the capability ellipsoid $\mathcal E_v$.
Therefore, $\bar d_v$ describes the shape of the capability available
at the internal state $v$.

However, an ellipsoid does not determine a nontrivial numerical measure
of its own size. Indeed, if $\mathcal E_v$ is measured using the volume
form induced by $\bar d_v$, then, as the unit ball of $\bar d_v$, its
volume is always
\begin{align}
    \operatorname{vol}_{\bar d_v}(\mathcal E_v)=\omega_m,
\end{align}
where $\omega_m$ is the Euclidean volume of the unit ball in
$\mathbb R^m$. This value is independent of $v$ and therefore cannot
quantify variations in task capability.

A numerical comparison of capability ellipsoids consequently requires
a reference volume measure that is specified independently of the
ellipsoid being evaluated. We introduce this reference geometry by
endowing $\W$ with a Riemannian metric $h$. At each $w\in\W$, the
inner product
\begin{align}
    h_w:T_w\W\times T_w\W\to\R
\end{align}
defines the flat isomorphism
\begin{align}
    h_w^\flat:T_w\W\to T_w^*\W,
    \quad
    h_w^\flat(\zeta):=h_w(\zeta,\cdot).
    \label{eq:task-metric-flat-map}
\end{align}
Unlike $\bar d_v$, which generally varies among internal states in the
same fiber $\mathcal F_w$, the reference metric $h_w$ depends only on
the task point $w$. It specifies how tangent directions and volumes in
$T_w\W$ are to be compared.

The state-induced co-metric $d_v$ can be compared with the reference
metric $h_w$ through the composition
\begin{align}
    A_v
    :=
    d_v^\sharp\circ h_w^\flat
    :
    T_w\W\to T_w\W,
    \quad
    w=f(v).
    \label{eq:metric-completed-endomorphism}
\end{align}
This composition first maps a task tangent vector to a covector using
the reference metric $h_w$ and then maps that covector back to a task
tangent vector using the state-induced co-metric $d_v$. Hence, $A_v$
compares the state-induced capability geometry with the independently
selected reference geometry.

Unlike $d_v^\sharp$, whose domain and codomain are different vector
spaces, $A_v$ is an endomorphism of $T_w\W$. Its determinant is
therefore independent of the basis used to represent it. In the
selected task-coordinate and dual bases, $h_w^\flat$ and
$d_v^\sharp$ are represented by $H(w)$ and $D(v)$, respectively.
Consequently, $A_v$ is represented by
\begin{align}
    D(v)H(w).
\end{align}

This construction gives the metric-completed manipulability
\begin{align}
    \rho_H(v)
    :=
    \sqrt{\det A_v}
    =
    \sqrt{
        \det\!\bigl(D(v)H(f(v))\bigr)
    }.
    \label{eq:rhoH-def}
\end{align}
The corresponding volume of the capability ellipsoid, measured using
the reference metric $h_w$, is
\begin{align}
    \operatorname{vol}_{h_w}(\mathcal E_v)
    =
    \omega_m
    \sqrt{\det H(w)}
    \sqrt{\det D(v)}
    =
    \omega_m\rho_H(v).
    \label{eq:metric-ellipsoid-volume}
\end{align}
Hence, up to the dimension-dependent constant $\omega_m$,
$\rho_H(v)$ is the volume of the capability ellipsoid
$\mathcal E_v$ relative to the reference task metric $h$.

The geometric interpretation is consistent with the Riemannian
coarea formula. The metric-completed manipulability is the normal
Jacobian
\begin{align}
    J_{f,g,h}(v)
    &=
    \sqrt{
        \det\!\left(
            \mathrm df_v
            \circ g_v^\sharp
            \circ\mathrm df_v^*
            \circ h_{f(v)}^\flat
        \right)
    }
    =
    \rho_H(v).
\end{align}
Accordingly, for a suitable integrable scalar field
$\varphi:\V\to\R$,
\begin{align}
    \int_{\V}
    \varphi(v)\rho_H(v)\,
    \mathrm{dvol}_g(v)
    &=
    \int_{\W}
    \left(
        \int_{\mathcal F_w}
        \varphi(v)\,
        \mathrm{dvol}_{g|_{\mathcal F_w}}(v)
    \right)
    \mathrm{dvol}_h(w).
\end{align}
The reference task metric supplies the task-volume measure that turns
the determinant proxy into an intrinsic absolute-capability scalar.
Fiber normalization instead removes this task-volume factor by
comparing each state only with states on the same fiber.

If no task metric is specified, the usual determinant expression in
the selected task chart $y$ is
\begin{align}
    \rho_y(v)
    :=
    \sqrt{\det D(v)}.
    \label{eq:rho-def}
\end{align}
We call $\rho_y$ the raw determinant proxy. In the Euclidean case
$G(v)=I$, it reduces to
\begin{align}
    \rho_y(v)
    =
    \sqrt{
        \det\!\bigl(
            J_f(v)J_f(v)^\top
        \bigr)
    },
\end{align}
which is the classical volume-based manipulability expression.

Once a task chart has been fixed, we omit the subscript $y$ and write
simply $\rho$. The following sections examine how the determinant proxy and the
metric-completed manipulability behave under changes of coordinates and
task metric, and which optimization problems preserve their
solutions.

Figure~\ref{fig:pushed-forward-cometric} summarizes the complete tangent--cotangent construction, 
from the internal geometry induced by $g_v$ to the state-induced task geometry 
and its comparison with the reference task metric $h_w$.

\begin{figure}[pos=t]
    \centering
    \begin{tikzpicture}[
        node distance=12mm and 18mm,
        >=Stealth,
        font=\small
    ]

    \tikzset{
        space/.style={
            draw=Navy,
            rounded corners=2mm,
            minimum width=31mm,
            minimum height=16mm,
            align=center,
            fill=Pale,
            inner sep=2mm
        },
        inputspace/.style={
            space,
            draw=Red,
            fill=PaleOrange
        },
        explanationbox/.style={
            draw=Navy,
            dashed,
            rounded corners=2mm,
            fill=PaleBlue,
            align=left,
            font=\scriptsize,
            inner sep=2.5mm
        },
        geometrybox/.style={
            draw=Mid,
            dashed,
            rounded corners=2mm,
            inner sep=3mm
        },
        forward/.style={
            ->,
            thick,
            Blue
        },
        pullback/.style={
            ->,
            thick,
            Orange
        },
        induced/.style={
            ->,
            thick,
            Teal
        },
        reference/.style={
            ->,
            thick,
            Navy
        },
        explanationline/.style={
            dashed,
            thin,
            Navy
        },
        endomap/.style={
            ->,
            very thick,
            Red
        },
        annotation/.style={
            align=center,
            font=\scriptsize,
            text=Ink
        }
    }


    \node[space] (TV) {
        $T_v\V$\\[-0.1em]
        \scriptsize internal tangent vectors\\[-0.1em]
        \scriptsize inner product $g_v$
    };

    \node[space,below=of TV] (TVstar) {
        $T_v^*\V$\\[-0.1em]
        \scriptsize internal covectors\\[-0.1em]
        \scriptsize co-metric $g_v^*$
    };

    \node[space,right=25mm of TV] (TW) {
        $T_w\W$\\[-0.1em]
        \scriptsize output of $A_v$\\[-0.1em]
        \scriptsize induced inner product $\bar d_v$
    };

    \node[space,below=of TW] (TWstar) {
        $T_w^*\W$\\[-0.1em]
        \scriptsize task covectors\\[-0.1em]
        \scriptsize induced co-metric $d_v$
    };

    \node[inputspace,right=31mm of TW] (TWin) {
        $T_w\W$\\[-0.1em]
        \scriptsize input of $A_v$
    };

    \node[
        annotation,
        above=2mm of TWin
    ] (rho) {
        $\displaystyle
        \rho_H(v)=\sqrt{\det A_v}$
    };

    \node[
        explanationbox,
        below=8mm of TWin
    ] (Href) {
        \textbf{Reference task geometry}\\[0.3em]
        $h_w$ is an independently selected\\
        inner product on $T_w\W$.\\[0.2em]
        It depends on the task point $w$,\\
        rather than on the internal state $v$.
    };


    \draw[induced]
        ([xshift=-4mm]TV.south)
        --
        node[left] {$g_v^\flat$}
        ([xshift=-4mm]TVstar.north);

    \draw[induced]
        ([xshift=4mm]TVstar.north)
        --
        node[right] {$g_v^\sharp$}
        ([xshift=4mm]TV.south);


    \draw[forward]
        (TV.east)
        --
        node[above] {$\mathrm df_v$}
        (TW.west);

    \draw[pullback]
        (TWstar.west)
        --
        node[below] {$\mathrm df_v^*$}
        (TVstar.east);


    \draw[induced]
        ([xshift=-5mm]TWstar.north)
        --
        node[left] {$d_v^\sharp$}
        ([xshift=-5mm]TW.south);

    \draw[induced]
        ([xshift=5mm]TW.south)
        --
        node[right] {$\bar d_v^\flat$}
        ([xshift=5mm]TWstar.north);


    \draw[reference]
        (TWin.south west)
        to[bend left=-10]
        node[
            above left,
            align=center
        ] {$h_w^\flat$}
        (TWstar.north east);

    \coordinate (hflatmid) at
        ($(TWin.south west)!0.52!(TWstar.north east)$);

    \draw[explanationline]
        (Href.west)
        to[bend left=20]
        (hflatmid.south);


    \draw[endomap]
        (TWin.west)
        --
        node[
            above,
            text=Red,
            align=center
        ] {
            $A_v
            =
            d_v^\sharp\circ h_w^\flat$
        }
        (TW.east);


    \node[
        geometrybox,
        fit=(TV)(TVstar),
        label={
            [font=\scriptsize,text=Navy]
            above:internal geometry at $v$
        }
    ] {};

    \node[
        geometrybox,
        fit=(TW)(TWstar),
        label={
            [font=\scriptsize,text=Navy]
            above:state-induced task geometry at $(v,w)$
        }
    ] {};

    \node[
        geometrybox,
        fit=(TWin)(Href)(rho),
        label={
            [font=\scriptsize,text=Red]
            below:reference geometry and comparison input
        }
    ] {};

    \end{tikzpicture}

    \caption{
    Geometric construction associated with the task map
    $f:\V\to\W$, with $w=f(v)$. The internal metric $g_v$ induces
    the musical isomorphisms $g_v^\flat$ and $g_v^\sharp$ and the
    co-metric $g_v^*$ on $T_v^*\V$. The differential
    $\mathrm df_v$ maps internal tangent vectors to task tangent
    vectors, whereas its pullback $\mathrm df_v^*$ maps task covectors
    to internal covectors. Comparing these pullbacks through $g_v^*$
    produces the state-induced task co-metric $d_v$, whose sharp map
    is
    $d_v^\sharp
    =
    \mathrm df_v\circ g_v^\sharp\circ\mathrm df_v^*$.
    Its inverse
    $\bar d_v^\flat=(d_v^\sharp)^{-1}$ is the flat map associated
    with the state-induced inner product $\bar d_v$ on $T_w\W$.
    The unit ball of $\bar d_v$ is the capability ellipsoid
    $\mathcal E_v$. The independently selected task metric $h_w$
    defines the reference geometry used to measure this ellipsoid.
    Starting from the input copy of $T_w\W$, its flat map
    $h_w^\flat$ produces a task covector, which $d_v^\sharp$ maps to
    the output copy of $T_w\W$. Their composition is the endomorphism
    $A_v=d_v^\sharp\circ h_w^\flat:T_w\W\to T_w\W$.
    Its determinant defines the metric-completed manipulability
    $\rho_H(v)=\sqrt{\det A_v}$. In the selected coordinate and dual
    bases, $g_v^\sharp$, $d_v^\sharp$, $\bar d_v^\flat$,
    $h_w^\flat$, and $A_v$ are represented by $G(v)^{-1}$, $D(v)$,
    $D(v)^{-1}$, $H(w)$, and $D(v)H(w)$, respectively.
    }
    \label{fig:pushed-forward-cometric}
\end{figure}

\section{Coordinate Transformations and Fiberwise Optimization Equivalence} \label{sec:main-results}

This section establishes the coordinate-transformation laws of the determinant proxy and metric-completed manipulability 
and derives their consequences for optimization on a fixed task fiber. We first distinguish cross-fiber comparison 
from fiber-restricted optimization, then prove the transformation lemmas, the fiberwise constancy proposition, and the resulting ordering-equivalence theorem.

\begin{asmpt}[Regularity for the optimization results]
\label{asm:regularity}
The analysis is restricted to the open regular region introduced in
Section~\ref{subsec:task-fibers}, where $f$ is smooth and
$\mathrm df_v$ is surjective. Results involving constrained Hessians
additionally require the restricted objective functions to be of class
$C^2$.
\end{asmpt}

\subsection{Global comparison versus fiber-restricted optimization}
\label{sec:global-vs-fiber}

\paragraph{Global use}
A global use compares $\rho$ or $\rho_H$ at configurations belonging
to different fibers, hence realizing different task values.
Examples include~\cite{PatelSobh2015,Vahrenkamp2012,Maric2019,XuZhanCao2018,Chiu1988,KazerounianWang1988,KleinBlaho1987,HuberWollherr2019,Zhang2020}.
Task-coordinate and task-metric choices can affect these comparisons.

\paragraph{Fiber-restricted use}
A fiber-restricted use optimizes $\rho$ or $\rho_H$ only on one
prescribed fiber $\Fw$. This includes self-motion, fixed-task
null-space redundancy resolution, co-contraction tuning, and
secondary-objective optimization under an already satisfied primary
task~\cite{Jin2017,Su2019,Franchi2026Coactivation, Franchi2026AeroPromptness}.
The fixed-fiber equivalence theorem below concerns this regime.

\subsection{Coordinate changes and determinant-based manipulability}
\label{subsec:coordinate-changes}

Although the pushed-forward co-metric $d_v$ is a geometric object,
its matrices $J_f(v)$, $G(v)$, and $D(v)$ depend on the coordinates
used to represent it. Configuration- and task-coordinate changes must
be distinguished because they affect these matrices in different
ways.

\subsubsection{Configuration-chart cancellation}

We first change coordinates on the configuration manifold while
keeping the task chart fixed. Let $x$ and $\widetilde x$ be two
overlapping charts on $\V$, and define the coordinate transition
\begin{align}
    \Phi_B:=x\circ\widetilde x^{-1},
    \quad
    T_B(v):=
    D\Phi_B\bigl(\widetilde x(v)\bigr).
    \label{eq:configuration-coordinate-transition}
\end{align}
The coordinate representations of a tangent vector in $T_v\V$ are
then related by
\begin{align}
    \dot x=T_B(v)\dot{\widetilde x}.
    \label{eq:configuration-tangent-transformation}
\end{align}

\begin{lemma}[Configuration-chart cancellation]
\label{lem:configuration-chart-cancellation}
Let the task chart remain fixed. Let $J_f(v)$, $G(v)$, and $D(v)$
denote the matrices obtained using the $x$ coordinates, and let
$\widehat J_f(v)$, $\widehat G(v)$, and $\widehat D(v)$ denote the
corresponding matrices in the $\widetilde x$ coordinates. Then
\begin{align}
    \widehat J_f(v)
    =
    J_f(v)T_B(v),
    \quad
    \widehat G(v)
    =
    T_B(v)^\top G(v)T_B(v),
    \label{eq:configuration-transformation-rules}
\end{align}
and
\begin{align}
    \widehat D(v)=D(v).
    \label{eq:configuration-coordinate-invariance}
\end{align}
\end{lemma}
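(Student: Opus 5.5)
The plan is a direct computation in three steps, each following from the definitions in Section~\ref{subsec:pushed-cometric}.

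\emph{Jacobian rule.} With the task chart $y$ fixed, the local representation in the new chart factors through the transition:
\[
\widetilde F:=y\circ f\circ\widetilde x^{-1}=\bigl(y\circ f\circ x^{-1}\bigr)\circ\bigl(x\circ\widetilde x^{-1}\bigr)=F\circ\Phi_B .
\]
The chain rule at $\widetilde x(v)$, using $\Phi_B(\widetilde x(v))=x(v)$, then gives $\widehat J_f(v)=DF_{x(v)}\,D\Phi_B(\widetilde x(v))=J_f(v)T_B(v)$.

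\emph{Metric rule.} The inner product $g_v$ is chart independent, so its two matrix representations must agree on every pair of tangent vectors. Substituting the tangent transformation~\eqref{eq:configuration-tangent-transformation} gives
\[
\dot{\widetilde x}^\top\widehat G(v)\dot{\widetilde x}'
=g_v(\xi,\xi')
=\dot x^\top G(v)\dot x'
=\dot{\widetilde x}^\top T_B^\top G T_B\,\dot{\widetilde x}'
\]
for all $\dot{\widetilde x},\dot{\widetilde x}'$. This identifies $\widehat G=T_B^\top G T_B$.

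\emph{Invariance of $D$.} I substitute both rules into~\eqref{eq:D-def}. The key fact is that $T_B(v)$ is invertible, being the differential of a diffeomorphism between chart images, so $\widehat G^{-1}=T_B^{-1}G^{-1}T_B^{-\top}$. Hence
\[
\widehat D=J_fT_B\,T_B^{-1}G^{-1}T_B^{-\top}\,T_B^\top J_f^\top=J_fG^{-1}J_f^\top=D .
\]

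There is also a coordinate-free reason, which I would mention as a remark. $D(v)$ represents $d_v^\sharp=\mathrm df_v\circ g_v^\sharp\circ\mathrm df_v^*$ in the $y$-coordinate bases of $T_w\W$ and its dual, which are untouched by the change of chart on $\V$. I expect no real obstacle. The only care needed is bookkeeping: the direction of $T_B$, evaluating the chain rule at the right point, and justifying invertibility of $T_B$ so that the inverse can be taken.
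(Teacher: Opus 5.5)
Your proposal is correct and follows the paper's proof: the chain rule gives $\widehat J_f = J_f T_B$, the transformation law of $g_v$ gives $\widehat G = T_B^\top G T_B$, and substituting into $\widehat J_f\widehat G^{-1}\widehat J_f^\top$ makes the $T_B$ factors cancel. You additionally write out the factorization $\widetilde F = F\circ\Phi_B$ and the justification for the metric law and for the invertibility of $T_B$, which the paper treats as standard.
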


\begin{proof}
The transformation rule for $\widehat J_f(v)$ follows from the chain
rule, whereas the transformation rule for $\widehat G(v)$ is the
coordinate transformation law of the inner product $g_v$. Therefore,
\begin{align}
    \widehat D(v)
    &=
    \widehat J_f(v)
    \widehat G(v)^{-1}
    \widehat J_f(v)^\top
    =
    J_f(v)T_B(v)
    \left(
        T_B(v)^{-1}
        G(v)^{-1}
        T_B(v)^{-\top}
    \right)
    T_B(v)^\top J_f(v)^\top
    \nonumber\\
    &=
    J_f(v)G(v)^{-1}J_f(v)^\top
    =
    D(v).
\end{align}
\end{proof}

Thus, the coordinate dependencies of $J_f$ and $G$ cancel exactly in
the combination $J_fG^{-1}J_f^\top$. Since the task chart is
unchanged, $D(v)$ and $\widehat D(v)$ are expressed in the same
task-coordinate and dual bases and coincide pointwise. This equality
is the matrix manifestation of the fact that $d_v$ and
$d_v^\sharp$ are geometric objects independent of the coordinates
chosen on $\V$.

\subsubsection{Task-chart covariance and determinant scaling}

We next keep the configuration chart fixed and change the coordinates
on the task manifold. Let $\psi$ and $\widetilde\psi$ be two
overlapping task charts, and write
\begin{align}
    y=\psi(w),
    \quad
    \widetilde y=\widetilde\psi(w).
\end{align}
Define the task-coordinate transition and its Jacobian by
\begin{align}
    \Phi_C:=\widetilde\psi\circ\psi^{-1},
    \quad
    T_C(w):=
    D\Phi_C\bigl(\psi(w)\bigr).
    \label{eq:task-coordinate-transition}
\end{align}
The coordinate representations of a tangent vector in $T_w\W$ are
related by
\begin{align}
    \dot{\widetilde y}=T_C(w)\dot y.
    \label{eq:task-tangent-transformation}
\end{align}

\begin{lemma}[Task-chart covariance and determinant scaling]
\label{lem:task-chart-covariance}
Let the configuration chart remain fixed. Let $J_f(v)$ and $D(v)$
denote the matrices obtained using the task chart $y$, and let
$\widetilde J_f(v)$ and $\widetilde D(v)$ denote the corresponding
matrices in the task chart $\widetilde y$. Then
\begin{align}
    \widetilde J_f(v)
    =
    T_C(f(v))J_f(v),
    \label{eq:task-jacobian-transformation}
\end{align}
and
\begin{align}
    \widetilde D(v)
    =
    T_C(f(v))
    D(v)
    T_C(f(v))^\top.
    \label{eq:task-coordinate-D-transformation}
\end{align}
Consequently, the determinant proxy transforms according to
\begin{align}
    \widetilde\rho(v)
    =
    \left\lvert
        \det T_C(f(v))
    \right\rvert
    \rho(v).
    \label{eq:raw-proxy-coordinate-transformation}
\end{align}
\end{lemma}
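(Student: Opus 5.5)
The plan mirrors the proof of Lemma~\ref{lem:configuration-chart-cancellation}, with the difference that the transition now acts on the codomain side of $f$. Nothing cancels, so we obtain a congruence rather than an equality. The argument has three short steps.

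\textbf{Step 1 (chain rule).} Let $F=\psi\circ f\circ x^{-1}$ and $\widetilde F=\widetilde\psi\circ f\circ x^{-1}$. Then
\[
\widetilde F=\Phi_C\circ F
\]
on the overlap. The chain rule at $x(v)$ gives
\[
D\widetilde F_{x(v)}=D\Phi_C\bigl(F(x(v))\bigr)\,DF_{x(v)}.
\]
Since $F(x(v))=\psi(f(v))$, the first factor is exactly $T_C(f(v))$ by~\eqref{eq:task-coordinate-transition}. This yields~\eqref{eq:task-jacobian-transformation}. Equivalently, $\widetilde J_f$ is the matrix of the same map $\mathrm df_v$ after the change of basis~\eqref{eq:task-tangent-transformation} on $T_w\W$.

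\textbf{Step 2 (congruence of $D$).} The configuration chart is fixed, so $G(v)$ is unchanged. Substituting Step~1 into~\eqref{eq:D-def} gives
\[
\widetilde D(v)=T_C J_f G^{-1} J_f^\top T_C^\top=T_C D(v) T_C^\top,
\]
with $T_C=T_C(f(v))$ throughout. This is~\eqref{eq:task-coordinate-D-transformation}.

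As a consistency check, it is worth recording the geometric reading. $D(v)$ represents the co-metric $d_v$, a symmetric bilinear form on $T_w^*\W$. Covector components transform with $T_C^{-\top}$, so the matrix of a form on covectors transforms by congruence with $T_C$, as above. This confirms that the transformation is a change of representation of the same object $d_v^\sharp$ and not a change of geometry.

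\textbf{Step 3 (determinant scaling).} By multiplicativity of the determinant,
\[
\det\widetilde D(v)=\bigl(\det T_C(f(v))\bigr)^2\det D(v).
\]
Both $D$ and $\widetilde D$ are positive definite by the regularity in Assumption~\ref{asm:regularity}, so both determinants are positive. Moreover, $T_C$ is invertible because $\Phi_C$ is a diffeomorphism. Taking positive square roots in~\eqref{eq:rho-def} therefore gives~\eqref{eq:raw-proxy-coordinate-transformation}, with the absolute value coming from $\sqrt{(\det T_C)^2}=\lvert\det T_C\rvert$.

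The only point needing care is Step~1: the Jacobian of $\Phi_C$ must be evaluated at $\psi(f(v))$, so that the factor depends only on the task point $f(v)$ and not on $v$ separately. This is what later makes the scaling constant on each fiber. The rest is routine matrix algebra.
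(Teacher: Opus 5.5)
Your proposal is correct and follows the paper's proof step for step: the chain rule gives $\widetilde J_f$, substituting into $D=J_fG^{-1}J_f^\top$ with $G$ unchanged gives the congruence, and determinant multiplicativity followed by the nonnegative square root gives the scaling. Your explicit check that $T_C$ is evaluated at $\psi(f(v))$ and your covector-congruence reading add some detail but do not change the route.
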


\begin{proof}
Equation~\eqref{eq:task-jacobian-transformation} follows from the
chain rule. Since the configuration chart remains fixed, the matrix
$G(v)$ is unchanged. Hence,
\begin{align}
    \widetilde D(v)
    &=
    \widetilde J_f(v)
    G(v)^{-1}
    \widetilde J_f(v)^\top
    =
    T_C(f(v))
    J_f(v)G(v)^{-1}J_f(v)^\top
    T_C(f(v))^\top
    \nonumber\\
    &=
    T_C(f(v))
    D(v)
    T_C(f(v))^\top,
\end{align}
which proves~\eqref{eq:task-coordinate-D-transformation}. Taking
determinants gives
\begin{align}
    \det\widetilde D(v)
    =
    \det T_C(f(v))^2
    \det D(v).
\end{align}
Taking the nonnegative square root proves
\eqref{eq:raw-proxy-coordinate-transformation}.
\end{proof}

Equation~\eqref{eq:task-coordinate-D-transformation} is the
transformation law of a contravariant two-tensor. Thus, a change of
task chart modifies the matrix representing the state-induced
co-metric $d_v$, but it does not modify the underlying geometric
object. By contrast, the determinant proxy acquires the
task-chart-dependent factor in
\eqref{eq:raw-proxy-coordinate-transformation} and therefore does not
define a scalar under task-coordinate changes. The absolute value
accounts for orientation-reversing chart transitions.

\subsubsection{Chart invariance through task-metric completion}

We now fix a Riemannian metric $h$ on $\W$. This metric provides the
reference task geometry used to measure the state-induced capability
ellipsoid. Let $H(w)$ and $\widetilde H(w)$ denote the matrices
representing the same inner product $h_w$ in the task charts $y$ and
$\widetilde y$, respectively.

\begin{lemma}[Chart invariance of metric-completed manipulability]
\label{lem:metric-completed-chart-invariance}
The matrices representing $h_w$ in the two task charts satisfy
\begin{align}
    \widetilde H(w)
    =
    T_C(w)^{-\top}
    H(w)
    T_C(w)^{-1}.
    \label{eq:task-metric-coordinate-transformation}
\end{align}
Moreover, the corresponding matrices representing the endomorphism
$A_v=d_v^\sharp\circ h_{f(v)}^\flat$ are related by similarity:
\begin{align}
    \widetilde D(v)\widetilde H(f(v))
    =
    T_C(f(v))
    D(v)H(f(v))
    T_C(f(v))^{-1}.
    \label{eq:metric-completed-similarity}
\end{align}
Consequently,
\begin{align}
    \widetilde\rho_H(v)=\rho_H(v).
    \label{eq:metric-completed-coordinate-invariance}
\end{align}
\end{lemma}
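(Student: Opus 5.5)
The plan is to derive the metric transformation law first, then combine it with Lemma~\ref{lem:task-chart-covariance} to obtain the similarity relation, and finally take determinants. Throughout, write $T:=T_C(f(v))$ and $w=f(v)$.

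\textbf{Step 1: transformation of $H$.} The matrix of $h_w$ in a chart is defined by $h_w(\zeta,\chi)=\dot y_\zeta^\top H(w)\dot y_\chi$, where $\dot y_\zeta$ denotes the coordinate representation of $\zeta$. By~\eqref{eq:task-tangent-transformation}, the same vector has representation $\dot{\widetilde y}=T\dot y$, so $\dot y=T^{-1}\dot{\widetilde y}$. Invariance of the scalar $h_w(\zeta,\chi)$ then gives
$\dot{\widetilde y}_\zeta^\top \widetilde H\dot{\widetilde y}_\chi=\dot{\widetilde y}_\zeta^\top T^{-\top}HT^{-1}\dot{\widetilde y}_\chi$.
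Since this holds for all vectors, the bilinear forms coincide, which yields~\eqref{eq:task-metric-coordinate-transformation}. The transition $T$ is invertible because $\Phi_C$ is a diffeomorphism between chart images.

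\textbf{Step 2: similarity.} Multiply~\eqref{eq:task-coordinate-D-transformation} by the result of Step~1:
$\widetilde D\widetilde H=(TDT^\top)(T^{-\top}HT^{-1})=TDHT^{-1}$.
This is~\eqref{eq:metric-completed-similarity}. Geometrically, it is simply the change-of-basis rule for the endomorphism $A_v$ under the basis change $T$, consistent with the remark that $\det A_v$ is basis independent.

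\textbf{Step 3: invariance.} Similar matrices have equal determinants, so $\det(\widetilde D\widetilde H)=\det(DH)$. Taking the square root gives~\eqref{eq:metric-completed-coordinate-invariance}. The root is well defined because $DH$ is similar to $H^{1/2}DH^{1/2}$, which is positive definite, so the determinant is positive.

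None of these steps is a real obstacle. The only point requiring care is the index placement in Step~1: $h$ is covariant, so it transforms with $T^{-\top}(\cdot)T^{-1}$, whereas $D$ is contravariant and transforms with $T(\cdot)T^\top$. These two laws are precisely what makes the factors of $T^\top$ and $T^{-\top}$ cancel in the product.
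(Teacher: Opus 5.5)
Your proposal is correct and follows essentially the same route as the paper: the covariant transformation law for $H$, combined with the contravariant law for $D$ from Lemma~\ref{lem:task-chart-covariance} to obtain the similarity, and then equality of determinants of similar matrices. Your explicit derivation of the $H$ law from $\dot{\widetilde y}=T_C\dot y$ and your positivity remark via $H^{1/2}DH^{1/2}$ only fill in details that the paper leaves implicit.
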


\begin{proof}
Equation~\eqref{eq:task-metric-coordinate-transformation} is the
coordinate transformation law of the covariant tensor $h_w$.
Combining this relation with
Lemma
~\ref{lem:task-chart-covariance} gives
\begin{align}
    \widetilde D(v)\widetilde H(f(v))
    &=
    T_C(f(v))
    D(v)
    T_C(f(v))^\top
    T_C(f(v))^{-\top}
    H(f(v))
    T_C(f(v))^{-1}
    \nonumber\\
    &=
    T_C(f(v))
    D(v)H(f(v))
    T_C(f(v))^{-1}.
\end{align}
Thus, $D(v)H(f(v))$ and
$\widetilde D(v)\widetilde H(f(v))$ are similar matrices and have
the same determinant:
\begin{align}
    \det\!\left(
        \widetilde D(v)\widetilde H(f(v))
    \right)
    =
    \det\!\left(
        D(v)H(f(v))
    \right).
\end{align}
Taking the nonnegative square root yields
\eqref{eq:metric-completed-coordinate-invariance}.
\end{proof}

Therefore, once the task metric $h$ has been fixed, the
metric-completed manipulability is independent of the task chart.
This chart invariance does not imply independence from the choice of
task metric. Different task metrics define different reference
geometries and volume measures. The task metric is part of the model,
whereas the task chart is only a choice of representation.

\subsection{Restriction to a fixed task fiber}
\label{subsec:fixed-fiber-scaling}

We now restrict the task-side transformation laws to a prescribed
fiber $\mathcal F_{w_0}$.
For later use, define
\begin{align}
    c_H(w)
    :=
    \sqrt{\det H(w)},
    \quad
    c_C(w)
    :=
    \left\lvert\det T_C(w)\right\rvert.
    \label{eq:task-side-factors}
\end{align}
Both functions are strictly positive wherever the task metric and
the coordinate transition are defined.

\begin{proposition}[Fiberwise constancy of task-side factors]
\label{prop:task-side-scaling}

Fix a regular task value $w_0\in f(\V)$ contained in the relevant
task charts. Then
\begin{align}
    \left.\rho_H\right|_{\mathcal F_{w_0}}
    =
    c_H(w_0)
    \left.\rho\right|_{\mathcal F_{w_0}},
    \quad
    \left.\widetilde\rho\right|_{\mathcal F_{w_0}}
    =
    c_C(w_0)
    \left.\rho\right|_{\mathcal F_{w_0}}.
    \label{eq:fiberwise-factorization}
\end{align}
Hence, both the metric-completed manipulability and the
coordinate-transformed determinant proxy are positive constant
rescalings of the determinant proxy on the entire fiber.
\end{proposition}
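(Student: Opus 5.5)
The plan is a direct pointwise computation, followed by the observation that every task-side factor depends on $v$ only through $f(v)$, which is constant on the fiber.

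\textbf{Step 1: the metric-completed identity.} Fix charts containing $w_0$ and take $v\in\mathcal F_{w_0}$. By definition~\eqref{eq:rhoH-def} and multiplicativity of the determinant,
\begin{align}
    \rho_H(v)^2=\det\!\bigl(D(v)H(f(v))\bigr)=\det D(v)\,\det H(f(v)).
\end{align}
Both factors are positive: $D(v)$ is positive definite because $d_v$ is an inner product on the regular region, and $H$ represents the Riemannian metric $h$. Taking nonnegative square roots gives
\begin{align}
    \rho_H(v)=c_H(f(v))\,\rho(v).
\end{align}
This holds at every $v$ in the chart domain.

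\textbf{Step 2: the chart-change identity.} Lemma~\ref{lem:task-chart-covariance}, equation~\eqref{eq:raw-proxy-coordinate-transformation}, gives directly
\begin{align}
    \widetilde\rho(v)=c_C(f(v))\,\rho(v)
\end{align}
for every $v$ in the overlap.

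\textbf{Step 3: restriction to the fiber.} For $v\in\mathcal F_{w_0}$ we have $f(v)=w_0$ by~\eqref{eq:fiber-definition}. Hence $c_H(f(v))=c_H(w_0)$ and $c_C(f(v))=c_C(w_0)$ are the same numbers for every point of the fiber, which yields~\eqref{eq:fiberwise-factorization}. Strict positivity of these constants follows from positive definiteness of $H(w_0)$ and invertibility of the transition Jacobian $T_C(w_0)$.

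\textbf{Main subtlety: well-definedness on the whole fiber.} The only delicate point is that "on the entire fiber" requires the matrices to be defined at every point of $\mathcal F_{w_0}$. The task-side charts live on $\W$, so a single task chart around $w_0$ covers all of $f^{-1}(w_0)$; the statement's hypothesis that $w_0$ lies in the relevant task charts provides exactly this.

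The configuration chart, by contrast, may not cover the whole fiber. I would handle this with Lemma~\ref{lem:configuration-chart-cancellation}. Since $D(v)$ does not depend on the configuration chart, $\rho$, $\widetilde\rho$ and $\rho_H$ are well-defined functions on the entire fiber regardless of which configuration charts are used to compute them. The pointwise identities of Steps~1--2 can therefore be patched across configuration charts without changing the constants.
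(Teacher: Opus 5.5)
Your proof is correct and follows essentially the same route as the paper: it evaluates the pointwise identities $\rho_H(v)=c_H(f(v))\rho(v)$ (from multiplicativity of the determinant) and $\widetilde\rho(v)=c_C(f(v))\rho(v)$ (from Lemma~\ref{lem:task-chart-covariance}) at $f(v)=w_0$. Your extra remark is a sound clarification that the paper leaves implicit here: by Lemma~\ref{lem:configuration-chart-cancellation}, $\rho$ is well defined on the whole fiber regardless of which configuration charts are used.
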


\begin{proof}
If $v\in\mathcal F_{w_0}$, then $f(v)=w_0$. Substitution into the
pointwise identities
\[
    \rho_H(v)=c_H(f(v))\rho(v),
    \quad
    \widetilde\rho(v)=c_C(f(v))\rho(v)
\]
gives~\eqref{eq:fiberwise-factorization}.
\end{proof}

Table~\ref{tab:coordinate-transformation-summary} collects the three
transformation results. To keep the formulas compact, the arguments
are suppressed according to
$T_B=T_B(v)$, $T_C=T_C(f(v))$, $D=D(v)$, and $H=H(f(v))$.

\begin{table}[pos=t]
    \centering
    \caption{
    Coordinate behavior of the manipulability constructions.
    In the matrix column, the arguments are suppressed according to
    $T_B=T_B(v)$, $T_C=T_C(f(v))$, $D=D(v)$, and
    $H=H(f(v))$.
    }
    \label{tab:coordinate-transformation-summary}

    \renewcommand{\arraystretch}{0}
    \setlength{\tabcolsep}{5pt}

    \begin{tabularx}{\linewidth}{
        @{}
        >{\raggedright\arraybackslash}p{0.20\linewidth}
        >{\centering\arraybackslash}p{0.25\linewidth}
        >{\centering\arraybackslash}X
        @{}
    }
        \toprule

        \textbf{Change}
        &
        \textbf{Matrix relations}
        &
        \textbf{Manipulability identities}
        \\

        \midrule

        \vspace{0pt}
        \textbf{Configuration chart}
        \newline
        {\footnotesize Exact cancellation}
        &
        \vspace{0pt}
        \(
        \begin{aligned}[t]
            \widehat J_f
            =
            J_fT_B&,
            \quad
            \widehat G
            =
            T_B^\top GT_B,
            \\
            \widehat D
            &=
            D
        \end{aligned}
        \)
        &
        \vspace{0pt}
        \(
        \begin{aligned}[t]
            \widehat\rho(v)
            &=
            \rho(v),
            \\
            \widehat\rho_H(v)
            &=
            \rho_H(v)
        \end{aligned}
        \)
        \\

        \midrule

        \vspace{0pt}
        \textbf{Task chart}
        \newline
        {\footnotesize Covariance and scaling}
        &
        \vspace{0pt}
        \(
        \begin{aligned}[t]
            \widetilde J_f
            &=
            T_CJ_f,
            \\
            \widetilde D
            &=
            T_CDT_C^\top
        \end{aligned}
        \)
        &
        \vspace{0pt}
        \(
        \begin{aligned}[t]
            \widetilde\rho(v)
            &=
            \left\lvert
                \det T_C(f(v))
            \right\rvert
            \rho(v)
        \end{aligned}
        \)
        \\

        \midrule

        \vspace{0pt}
        \textbf{Task chart and reference metric}
        \newline
        {\footnotesize Similarity and invariance}
        &
        \vspace{0pt}
        \(
        \begin{aligned}[t]
            \widetilde H
            &=
            T_C^{-\top}HT_C^{-1},
            \\
            \widetilde D\,\widetilde H
            &=
            T_C(DH)T_C^{-1}
        \end{aligned}
        \)
        &
        \vspace{0pt}
        \(
        \begin{aligned}[t]
            \rho_H(v)
            &=
            \sqrt{\det H(f(v))}\,\rho(v),
            \\
            \widetilde\rho_H(v)
            &=
            \sqrt{\det\widetilde H(f(v))}
            \,\widetilde\rho(v)
            \\
            &=
            \sqrt{\det H(f(v))}\,\rho(v)
            =
            \rho_H(v)
        \end{aligned}
        \)
        \\

        \bottomrule
    \end{tabularx}
\end{table}

\subsection{Fiberwise optimization equivalence}

Proposition~\ref{prop:task-side-scaling} converts the pointwise
transformation laws established above into positive constant scaling
relations on each prescribed fiber. The resulting equivalence is
stronger than equality of the optimizer sets: the objectives induce
the same complete ordering on the fiber.

\begin{theorem}[Complete fiberwise ordering equivalence]
\label{thm:main}
Fix a regular task value $w_0\in f(\V)$. Let $\rho$ be the determinant
proxy in one task chart, let $\widetilde\rho$ be the determinant proxy
in any overlapping task chart, and let $\rho_H$ be the
metric-completed manipulability associated with any Riemannian task
metric $h$.

Then, for every $u,v\in\mathcal F_{w_0}$,
\begin{align}
\begin{aligned}
    \rho(u)\leq\rho(v)
    \quad\Longleftrightarrow\quad
    \widetilde\rho(u)\leq\widetilde\rho(v)
    \quad\Longleftrightarrow\quad
    \rho_H(u)\leq\rho_H(v).
\end{aligned}
\label{eq:fiberwise-ordering-equivalence}
\end{align}
Thus, the three objectives induce the same complete ordering on
$\mathcal F_{w_0}$. In particular,
\begin{align}
    \argmax_{v\in\mathcal F_{w_0}}\rho(v)
    &=
    \argmax_{v\in\mathcal F_{w_0}}\widetilde\rho(v)
    =
    \argmax_{v\in\mathcal F_{w_0}}\rho_H(v),
    \label{eq:fiberwise-argmax-equivalence}
    \\
    \argmin_{v\in\mathcal F_{w_0}}\rho(v)
    &=
    \argmin_{v\in\mathcal F_{w_0}}\widetilde\rho(v)
    =
    \argmin_{v\in\mathcal F_{w_0}}\rho_H(v).
    \label{eq:fiberwise-argmin-equivalence}
\end{align}
These conclusions are unchanged by any consistent change of
configuration chart.
\end{theorem}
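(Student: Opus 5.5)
The proof will reduce everything to Proposition~\ref{prop:task-side-scaling}. On the fiber $\mathcal F_{w_0}$, that result gives $\rho_H = c_H(w_0)\,\rho$ and $\widetilde\rho = c_C(w_0)\,\rho$ as functions. Here $c_H(w_0)=\sqrt{\det H(w_0)}>0$ because $h_{w_0}$ is positive definite. Also $c_C(w_0)=\lvert\det T_C(w_0)\rvert>0$ because the transition $\Phi_C$ is a diffeomorphism, so its Jacobian is invertible. The whole argument is that multiplying by a fixed positive constant is a strictly increasing bijection of $[0,\infty)$, and composing a function with such a map preserves order.

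\textbf{Orderings.} Take $u,v\in\mathcal F_{w_0}$. Both points lie on the same fiber, so the constant is the same at $u$ and $v$. Therefore $\widetilde\rho(u)\le\widetilde\rho(v)$ is equivalent to $c_C(w_0)\rho(u)\le c_C(w_0)\rho(v)$, and dividing by the positive constant gives $\rho(u)\le\rho(v)$. The same computation with $c_H(w_0)$ handles $\rho_H$. Combining the two gives~\eqref{eq:fiberwise-ordering-equivalence}. This is the only place where the fiber restriction matters. Off the fiber the factors $c_H(f(u))$ and $c_H(f(v))$ differ, and the argument fails, which is consistent with Section~\ref{sec:global-vs-fiber}.

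\textbf{Extremizer sets.} A point $v^\star$ belongs to $\argmax_{\mathcal F_{w_0}}\rho$ exactly when $\rho(u)\le\rho(v^\star)$ for all $u\in\mathcal F_{w_0}$. By the ordering equivalence, this condition is the same as the corresponding one for $\widetilde\rho$ and for $\rho_H$. This gives~\eqref{eq:fiberwise-argmax-equivalence}, including the case where all three sets are empty. The argmin statement~\eqref{eq:fiberwise-argmin-equivalence} follows from the reverse inequalities in the same way.

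\textbf{Configuration-chart invariance.} Lemma~\ref{lem:configuration-chart-cancellation} gives $\widehat D=D$ pointwise. Hence $\rho$ and $\widetilde\rho$ are unchanged by a change of configuration chart, and so is $\rho_H$, since $H$ depends only on the task point. The fiber $f^{-1}(w_0)$ is defined without reference to coordinates. All the functions being compared are therefore literally the same functions on the same set, and the conclusions carry over unchanged.

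The only step that needs any care is confirming that the scaling constants are strictly positive and identical across the whole fiber, and Proposition~\ref{prop:task-side-scaling} provides exactly that. The rest is elementary order theory.
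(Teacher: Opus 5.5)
Your proposal is correct and follows essentially the same route as the paper. You use configuration-chart invariance from Lemma~\ref{lem:configuration-chart-cancellation}, the positive constant fiberwise scaling of Proposition~\ref{prop:task-side-scaling}, and order preservation under multiplication by a positive constant; the argmax and argmin equalities then follow. The one addition in the paper is an explicit appeal to Lemma~\ref{lem:metric-completed-chart-invariance}, showing that $\rho_H$ does not depend on the task chart used to compute it, which your argument uses implicitly when writing $\rho_H=c_H(w_0)\rho$ in the same chart as $\rho$.
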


\begin{proof}
By Lemma~\ref{lem:configuration-chart-cancellation}, a
consistent change of configuration chart leaves $D(v)$ unchanged
pointwise. Therefore, it also leaves the determinant proxy and the
metric-completed manipulability unchanged pointwise.

By Proposition~\ref{prop:task-side-scaling}, the restrictions of
$\widetilde\rho$ and $\rho_H$ to $\mathcal F_{w_0}$ satisfy
\begin{align}
    \left.\widetilde\rho\right|_{\mathcal F_{w_0}}
    =
    c_C(w_0)
    \left.\rho\right|_{\mathcal F_{w_0}},
    \quad
    \left.\rho_H\right|_{\mathcal F_{w_0}}
    =
    c_H(w_0)
    \left.\rho\right|_{\mathcal F_{w_0}},
\end{align}
where $c_C(w_0)>0$ and $c_H(w_0)>0$. Multiplication by a positive
constant preserves the complete ordering of the objective values on
the fiber. This proves
\eqref{eq:fiberwise-ordering-equivalence}, from which
\eqref{eq:fiberwise-argmax-equivalence} and
\eqref{eq:fiberwise-argmin-equivalence} follow.

Finally,
Lemma~\ref{lem:metric-completed-chart-invariance} ensures that
$\rho_H$ is independent of the task chart used to represent it.
\end{proof}

The theorem establishes more than equality of the optimizer sets:
the determinant proxy, its representation in any overlapping task
chart, and every metric-completed manipulability induce exactly the
same ordering on a prescribed regular fiber. Therefore, the
determinant proxy is an exact objective for fixed-task redundancy
optimization. Its numerical values do not, however, define an
intrinsic ordering across different fibers because the positive
factors $c_C(w)$ and $c_H(w)$ generally vary with the task value.

\subsection{Fiberwise differential and local equivalence}

For completeness, we record the differential consequences of the
positive scaling relations in
Proposition~\ref{prop:task-side-scaling}. Let
\begin{align}
    g_v^{\mathcal F}
    :=
    \left.g_v\right|_{
        T_v\mathcal F_{w_0}\times T_v\mathcal F_{w_0}
    }
\end{align}
denote the inner product induced by $g_v$ on the tangent space of the
fiber, and let
\begin{align}
    \Pi_v:
    T_v\V\to T_v\mathcal F_{w_0}=V_v
\end{align}
be the $g_v$-orthogonal projection. For every smooth scalar field
$J$ defined near $\mathcal F_{w_0}$,
\begin{align}
    \operatorname{grad}_{g^{\mathcal F}}
    \left(
        \left.J\right|_{\mathcal F_{w_0}}
    \right)(v)
    =
    \Pi_v\operatorname{grad}_g J(v).
    \label{eq:restricted-gradient-projection}
\end{align}

\begin{corollary}[First-order fiberwise equivalence]
\label{cor:restricted-first-order}
Let $r\in\{\widetilde\rho,\rho_H\}$, and let $c_r(w_0)$ denote the
corresponding positive factor $c_C(w_0)$ or $c_H(w_0)$. For every
$v\in\mathcal F_{w_0}$,
\begin{align}
    \mathrm d\!\left(
        \left.r\right|_{\mathcal F_{w_0}}
    \right)_v
    &=
    c_r(w_0)
    \mathrm d\!\left(
        \left.\rho\right|_{\mathcal F_{w_0}}
    \right)_v,
    \label{eq:restricted-differential-scaling}
    \\
    \operatorname{grad}_{g^{\mathcal F}}
    \left(
        \left.r\right|_{\mathcal F_{w_0}}
    \right)(v)
    &=
    c_r(w_0)
    \operatorname{grad}_{g^{\mathcal F}}
    \left(
        \left.\rho\right|_{\mathcal F_{w_0}}
    \right)(v).
    \label{eq:restricted-gradient-scaling}
\end{align}
Consequently, $\rho$, $\widetilde\rho$, and $\rho_H$ have the same
constrained critical points on $\mathcal F_{w_0}$. Wherever their
restricted gradients are nonzero, the gradients define the same
oriented direction along the fiber.
\end{corollary}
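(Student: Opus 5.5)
The plan is to reduce the corollary to Proposition~\ref{prop:task-side-scaling} and the linearity of the differential and of the musical isomorphism of $g^{\mathcal F}$. By Proposition~\ref{prop:task-side-scaling}, the restricted functions on the embedded submanifold $\mathcal F_{w_0}$ satisfy $\left.r\right|_{\mathcal F_{w_0}}=c_r(w_0)\left.\rho\right|_{\mathcal F_{w_0}}$ identically, where $c_r(w_0)$ is a single positive number independent of $v$. Differentiability of the restrictions is covered by Assumption~\ref{asm:regularity}. On the regular region, $D(v)$ is positive definite, so $\rho=\sqrt{\det D}$ is smooth, and restriction to the embedded fiber preserves smoothness. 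Differentiating a constant multiple of a smooth function on the manifold $\mathcal F_{w_0}$ then gives \eqref{eq:restricted-differential-scaling} at once. No term involving the derivative of $c_r$ appears, precisely because $f$ is constant on the fiber.

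For \eqref{eq:restricted-gradient-scaling}, I would use that the gradient with respect to $g^{\mathcal F}$ is $(g_v^{\mathcal F})^\sharp$ applied to the restricted differential. Since the sharp map is linear, the factor $c_r(w_0)$ passes through. An equivalent route uses \eqref{eq:restricted-gradient-projection}. One extends $r$ near the fiber by its pointwise form $c_r(f(\cdot))\rho$. The ambient gradient then contains an extra term $\rho\,\operatorname{grad}_g(c_r\circ f)$. This term is $g_v$-orthogonal to $\ker \mathrm df_v=V_v$, because $\mathrm d(c_r\circ f)_v$ annihilates $V_v$, so it lies in $H_v$ and is killed by $\Pi_v$. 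I would mention this as a consistency check, but the intrinsic argument on $\mathcal F_{w_0}$ is cleaner, and I would take it as primary.

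The consequences follow directly. Constrained critical points are the zeros of the restricted differential, and because $c_r(w_0)\neq 0$ these zero sets coincide for $\rho$, $\widetilde\rho$ and $\rho_H$. Wherever the restricted gradient is nonzero, it is multiplied by a strictly positive scalar, so its oriented direction (its normalized vector) is unchanged. The only step needing care is justifying that the proportionality holds as an identity of functions on the fiber rather than merely pointwise at $v$, since only then is differentiation along the fiber legitimate. Proposition~\ref{prop:task-side-scaling} supplies exactly that. One must also handle the overlap of charts: $w_0$ lies in the relevant task charts, so $c_C$ is defined on all of $\mathcal F_{w_0}$.
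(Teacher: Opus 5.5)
Your proposal is correct and follows essentially the same route as the paper. You differentiate the fiberwise identity $\left.r\right|_{\mathcal F_{w_0}}=c_r(w_0)\left.\rho\right|_{\mathcal F_{w_0}}$ from Proposition~\ref{prop:task-side-scaling} along the fiber, pass the constant through the linear Riemannian dual with respect to $g^{\mathcal F}$, and use $c_r(w_0)>0$ to preserve zeros and orientation. Your extra consistency check via \eqref{eq:restricted-gradient-projection} is also correct, since the term $\rho\,\operatorname{grad}_g(c_r\circ f)$ lies in $H_v$ and is annihilated by $\Pi_v$, but the argument does not need it.
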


\begin{proof}
Proposition~\ref{prop:task-side-scaling} gives
\begin{align}
    \left.r\right|_{\mathcal F_{w_0}}
    =
    c_r(w_0)
    \left.\rho\right|_{\mathcal F_{w_0}},
\end{align}
where $c_r(w_0)$ is constant and positive. Differentiation on
$\mathcal F_{w_0}$ proves
\eqref{eq:restricted-differential-scaling}. Taking the Riemannian dual
with respect to the induced inner product $g_v^{\mathcal F}$ proves
\eqref{eq:restricted-gradient-scaling}. Positivity of $c_r(w_0)$ then
preserves both the zeros and the orientation of the restricted
gradients.
\end{proof}

\begin{proposition}[Constrained Hessian scaling]
\label{prop:hessian}
Suppose that the restrictions of the objectives to
$\mathcal F_{w_0}$ are of class $C^2$. For every
$r\in\{\widetilde\rho,\rho_H\}$ and $v\in\mathcal F_{w_0}$,
\begin{align}
    \operatorname{Hess}_{g^{\mathcal F}}
    \left(
        \left.r\right|_{\mathcal F_{w_0}}
    \right)_v
    =
    c_r(w_0)
    \operatorname{Hess}_{g^{\mathcal F}}
    \left(
        \left.\rho\right|_{\mathcal F_{w_0}}
    \right)_v.
    \label{eq:restricted-hessian-scaling}
\end{align}
\end{proposition}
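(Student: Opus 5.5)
The plan is to reduce the statement to the linearity of the Riemannian Hessian with respect to constant scalar multiples, applied on the fixed fiber. By Proposition~\ref{prop:task-side-scaling}, for each $r\in\{\widetilde\rho,\rho_H\}$ we have the functional identity $\left.r\right|_{\mathcal F_{w_0}}=c_r(w_0)\left.\rho\right|_{\mathcal F_{w_0}}$ on all of $\mathcal F_{w_0}$, where $c_r(w_0)>0$ is a constant. It does not vary along the fiber, because $f$ is constant there. This holds as an identity of functions on the embedded submanifold $\mathcal F_{w_0}$, not merely pointwise at $v$. Consequently any operator on functions of $\mathcal F_{w_0}$ that is $\R$-linear and depends only on the function's values on the fiber will commute with multiplication by $c_r(w_0)$.

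Next I will write the Hessian intrinsically with respect to the induced metric $g^{\mathcal F}$ and its Levi-Civita connection $\nabla^{\mathcal F}$. For a $C^2$ function $\phi$ on $\mathcal F_{w_0}$ and tangent vector fields $X,Y$ on the fiber,
\begin{align}
\operatorname{Hess}_{g^{\mathcal F}}(\phi)(X,Y)=X(Y\phi)-(\nabla^{\mathcal F}_XY)\phi .
\end{align}
Both terms are $\R$-linear in $\phi$. The connection $\nabla^{\mathcal F}$ depends only on $g^{\mathcal F}$, not on the objective, so the same connection is used for $r$ and for $\rho$. Substituting $\phi=c_r(w_0)\left.\rho\right|_{\mathcal F_{w_0}}$ and pulling out the constant, using $X(c\psi)=cX\psi$ for constant $c$, gives \eqref{eq:restricted-hessian-scaling} at every $v$, since the value at $v$ depends only on $X_v$ and $Y_v$. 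Alternatively, one can write $\operatorname{Hess}(\phi)=\nabla^{\mathcal F}\mathrm d\phi$ and invoke \eqref{eq:restricted-differential-scaling} from Corollary~\ref{cor:restricted-first-order}, with $\nabla^{\mathcal F}$ linear over constants.

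The only point needing care, and the one I expect to be the main obstacle, is that the result must concern the intrinsic fiber Hessian. One could instead work with the bordered or Lagrangian Hessian in ambient coordinates, $\nabla^2 r+\sum_i\lambda_i\nabla^2 f^i$ restricted to $\ker\mathrm df_v$. In that setting the ambient functions $r$ and $c_r(w_0)\rho$ are not proportional off the fiber, since $c_r(f(v))$ varies. To avoid this issue I work only with restrictions, as the statement does. If a reader prefers the Lagrangian form, I will note that it equals the intrinsic Hessian of the restriction, with the multipliers absorbing the normal derivatives of the ambient extension. Hence it is extension-independent, and the scaling carries over. The $C^2$ hypothesis from Assumption~\ref{asm:regularity} ensures both Hessians exist, and positivity of $c_r(w_0)$ will then be remarked to preserve signature, so that local optimality classifications agree.
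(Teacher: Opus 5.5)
Your proof is correct and is essentially the paper's own argument: Proposition~\ref{prop:task-side-scaling} makes the restrictions proportional with the constant factor $c_r(w_0)>0$, and the $g^{\mathcal F}$-covariant Hessian is linear under constant multiples, which your formula $X(Y\phi)-(\nabla^{\mathcal F}_XY)\phi$ simply makes explicit. Your optional asides are slightly loose, though they do not affect the proof: a Lagrangian Hessian built from coordinate second derivatives agrees with the $g^{\mathcal F}$-Hessian in general only at constrained critical points, and equal Hessian signatures decide local optimality only at nondegenerate critical points, whereas the direct proportionality of the restricted functions, as used in Corollary~\ref{cor:secondorder}, also covers degenerate ones.
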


\begin{proof}
The restricted objectives differ by the constant factor $c_r(w_0)$,
and the covariant Hessian is linear under multiplication by a
constant.
\end{proof}

\begin{corollary}[Preservation of local optimality type]
\label{cor:secondorder}
A point $v^\star\in\mathcal F_{w_0}$ is a strict or non-strict local
maximum, local minimum, or saddle point of
$\left.\rho\right|_{\mathcal F_{w_0}}$ if and only if it has the same
classification for
$\left.\widetilde\rho\right|_{\mathcal F_{w_0}}$ and
$\left.\rho_H\right|_{\mathcal F_{w_0}}$.
\end{corollary}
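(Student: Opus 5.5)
The plan is to reduce every classification to a statement about the values of the restricted objectives near $v^\star$, and then invoke Proposition~\ref{prop:task-side-scaling}. Write $r\in\{\widetilde\rho,\rho_H\}$ and $c:=c_r(w_0)>0$. That proposition gives the identity $\left.r\right|_{\mathcal F_{w_0}}=c\,\left.\rho\right|_{\mathcal F_{w_0}}$ on the whole fiber, and not only on a neighborhood of $v^\star$. The classifications will be taken in the standard topological sense on the embedded submanifold $\mathcal F_{w_0}$:
\begin{itemize}
\item $v^\star$ is a (strict) local maximum if there is a neighborhood $U$ of $v^\star$ in $\mathcal F_{w_0}$ with $\rho(u)\le\rho(v^\star)$ (respectively $<$ for $u\neq v^\star$) for all $u\in U$;
\item local minima are defined symmetrically;
\item a saddle point is a constrained critical point that is neither a local maximum nor a local minimum.
\end{itemize}

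The first step treats extrema. For $u\in U$ we have $r(u)-r(v^\star)=c\,\bigl(\rho(u)-\rho(v^\star)\bigr)$. Since $c>0$, this difference has the same sign as $\rho(u)-\rho(v^\star)$, including the zero case. Hence every defining inequality for a strict or non-strict local maximum or minimum of $\rho$ on $U$ holds if and only if the same inequality holds for $r$ on the same $U$. This is exactly the local version of the ordering equivalence in Theorem~\ref{thm:main}, and I would cite it directly.

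The second step treats saddles. By Corollary~\ref{cor:restricted-first-order}, $\rho$ and $r$ have the same constrained critical points. The first step shows they have the same local maxima and minima, so the set of critical points that are neither also coincides. This settles saddle points.

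As a complementary remark under the $C^2$ hypothesis of Assumption~\ref{asm:regularity}, Proposition~\ref{prop:hessian} shows that the restricted Hessians at $v^\star$ differ by the factor $c>0$. They therefore share inertia, so second-order sufficient conditions (negative or positive definiteness) and indefiniteness, which signals a saddle, transfer as well.

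The main subtlety is not computational but definitional: the classification must be stated for the restricted functions, with neighborhoods taken in $\mathcal F_{w_0}$ rather than in $\V$. I would make this explicit and note that the Hessian-based argument alone would not cover degenerate critical points. That is why the primary argument goes through the value ordering, which needs no differentiability and covers non-strict and degenerate cases.
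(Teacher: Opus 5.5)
Your proposal is correct and follows essentially the same route as the paper. The paper's proof also rests on the fiberwise identity $r(u)-r(v^\star)=c_r(w_0)\bigl(\rho(u)-\rho(v^\star)\bigr)$ with $c_r(w_0)>0$, which preserves the sign of the difference in every neighborhood of $v^\star$ within the fiber and so also covers degenerate critical points. Your explicit treatment of saddles through the common critical points of Corollary~\ref{cor:restricted-first-order} is a harmless refinement of what the paper leaves implicit in that sign-preservation argument.
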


\begin{proof}
For $r\in\{\widetilde\rho,\rho_H\}$ and every
$u,v^\star\in\mathcal F_{w_0}$,
\begin{align}
    r(u)-r(v^\star)
    =
    c_r(w_0)
    \bigl(
        \rho(u)-\rho(v^\star)
    \bigr).
\end{align}
Since $c_r(w_0)>0$, the sign of the objective difference is preserved
in every neighborhood of $v^\star$ within the fiber. The conclusion
therefore also applies to degenerate critical points, whose local type
need not be determined by the Hessian.
\end{proof}

\section{Fiber-normalized manipulability}
\label{sec:fiber-normalized}

We use fixed-fiber equivalence to construct a dimensionless index for
comparisons across task values. Each configuration is compared with
the best capability attainable on its own fiber.

\subsection{Definition and well-posedness}

\begin{asmpt}[Finite fiberwise reference]
\label{asm:fiberwise-reference}
For every task value under consideration, the restriction of
$\rho_H$ to $\mathcal F_w$ has a finite positive supremum. Whenever a
result characterizes fiberwise maximizers, this supremum is additionally
assumed to be attained.
\end{asmpt}

For $w\in f(\V)$, define the fiberwise reference value
\begin{align}
    \rho_H^\star(w)
    :=
    \sup_{u\in\mathcal F_w}\rho_H(u).
    \label{eq:rhoH-star}
\end{align}
On the regular region, $D(u)$ and $H(w)$ are positive definite for
every $u\in\mathcal F_w$, and therefore $\rho_H(u)>0$. Hence,
$\rho_H^\star(w)>0$ whenever the supremum is finite.

\begin{definition}[Fiber-normalized manipulability]
\label{def:fiber-normalized}
Under Assumption~\ref{asm:fiberwise-reference}, the
\emph{fiber-normalized manipulability} and its associated loss are
\begin{align}
    \mu_{\mathcal F}(v):=
    \frac{\rho_H(v)}{\rho_H^\star(f(v))},
    \quad
    \ell_{\mathcal F}(v):=1-\mu_{\mathcal F}(v).
    \label{eq:fiber-normalized}
\end{align}
\end{definition}
The denominator is evaluated on the fiber through $v$, rather than on
the whole configuration manifold. Thus, $\mu_{\mathcal F}(v)$ is the
fraction of the best capability available at the current task value
that the configuration attains. If the supremum is not attained, the
invariance results below remain valid, although
$\mu_{\mathcal F}=1$ need not be attained on the fiber.

\begin{proposition}[Range and fiberwise optimality]
\label{prop:fiber-normalized-range}
Under Assumption~\ref{asm:fiberwise-reference},
\begin{align}
    0<\mu_{\mathcal F}(v)\leq1,
    \quad
    0\leq\ell_{\mathcal F}(v)<1.
    \label{eq:mu-range}
\end{align}
Moreover, if the supremum on the fiber through $v$ is attained, then
\begin{align}
    \mu_{\mathcal F}(v)=1
    \quad\Longleftrightarrow\quad
    \ell_{\mathcal F}(v)=0
    \quad\Longleftrightarrow\quad
    v\in\argmax_{u\in\mathcal F_{f(v)}}\rho_H(u).
    \label{eq:mu-max-characterization}
\end{align}
\end{proposition}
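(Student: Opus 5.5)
The plan is to argue directly from Definition~\ref{def:fiber-normalized}, using only the positivity of $\rho_H$ on the regular region and Assumption~\ref{asm:fiberwise-reference}. Fix $v\in\V$ and set $w=f(v)$, so that $v\in\mathcal F_w$.

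\emph{Lower bound.} On the regular region, $D(v)$ is positive definite because $\mathrm df_v$ is surjective, and $H(w)$ is positive definite. Hence $\det(D(v)H(w))=\det D(v)\det H(w)>0$, so $\rho_H(v)>0$. By Assumption~\ref{asm:fiberwise-reference}, $\rho_H^\star(w)$ is finite and positive. The quotient $\mu_{\mathcal F}(v)$ is therefore well defined and strictly positive.

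\emph{Upper bound.} Because $v$ lies on $\mathcal F_w$, the definition of the supremum in \eqref{eq:rhoH-star} gives $\rho_H(v)\leq\rho_H^\star(w)$. Dividing by the positive number $\rho_H^\star(w)$ yields $\mu_{\mathcal F}(v)\leq1$. The range $0\leq\ell_{\mathcal F}(v)<1$ follows immediately from $\ell_{\mathcal F}=1-\mu_{\mathcal F}$.

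\emph{Characterization of the maximum.} The equivalence $\mu_{\mathcal F}(v)=1\Leftrightarrow\ell_{\mathcal F}(v)=0$ holds by definition of the loss. For the second equivalence, note that $\mu_{\mathcal F}(v)=1$ holds exactly when $\rho_H(v)=\sup_{u\in\mathcal F_w}\rho_H(u)$.
\begin{itemize}
\item If $\rho_H(v)$ equals the supremum, then $\rho_H(v)\geq\rho_H(u)$ for every $u\in\mathcal F_w$, so $v$ belongs to the argmax.
\item Conversely, if $v$ is a maximizer on $\mathcal F_w$, then its value bounds every $\rho_H(u)$ from above and is itself attained on the fiber. It therefore equals the supremum.
\end{itemize}
The attainment hypothesis is needed only to guarantee that the argmax set is nonempty and coincides with the level set $\{\mu_{\mathcal F}=1\}$. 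Without it, both sets are simply empty.

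\emph{Main obstacle.} There is no real obstacle; the one point requiring care is the strict inequality $\mu_{\mathcal F}>0$. It depends on the regularity assumption, which ensures $\rho_H>0$ pointwise, together with finiteness of the supremum. Both hypotheses have to be invoked explicitly.
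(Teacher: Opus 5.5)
Your proof is correct and follows the paper's argument: divide $0<\rho_H(v)\leq\rho_H^\star(f(v))$ by the positive denominator, note that equality holds exactly when $v$ attains the fiberwise supremum, and transfer the statements to $\ell_{\mathcal F}$ through $\ell_{\mathcal F}=1-\mu_{\mathcal F}$. One small correction to your side remark: the equivalence in \eqref{eq:mu-max-characterization} holds even without attainment, because then both sides are false; the attainment hypothesis only makes the characterization non-vacuous and is not needed for the two sets to coincide.
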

\begin{proof}
The bounds follow by dividing
$0<\rho_H(v)\leq\rho_H^\star(f(v))$ by the positive denominator.
Equality holds precisely when $v$ attains the fiberwise supremum;
the loss statements follow from
$\ell_{\mathcal F}=1-\mu_{\mathcal F}$.
\end{proof}

\subsection{Independence of the task metric}

\begin{theorem}[Independence of the task metric]
\label{thm:task-metric-independence}
Let $h$ and $\widehat h$ be arbitrary Riemannian task metrics, and
let $\rho_H$ and $\rho_{\widehat H}$ be the corresponding
metric-completed manipulability measures defined by~\eqref{eq:rhoH-def}.
Under Assumption~\ref{asm:fiberwise-reference},
\begin{align}
    \frac{\rho_H(v)}
    {\displaystyle\sup_{u\in\mathcal F_{f(v)}}\rho_H(u)}
    =
    \frac{\rho_{\widehat H}(v)}
    {\displaystyle\sup_{u\in\mathcal F_{f(v)}}\rho_{\widehat H}(u)}
    \quad \forall v\in\V.
    \label{eq:metric-independent-mu}
\end{align}
Hence, $\mu_{\mathcal F}$ is independent of the task metric.
\end{theorem}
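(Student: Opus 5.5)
The plan is to reduce both metric-completed measures to the raw determinant proxy in a single fixed task chart and then cancel the task-side factor. Because it is a ratio, that factor is constant on each fiber.

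First, fix $v\in\V$ and set $w:=f(v)$. Choose a task chart $y$ around $w$. By Lemma~\ref{lem:metric-completed-chart-invariance}, both $\rho_H$ and $\rho_{\widehat H}$ are independent of this choice, so the chart introduces no ambiguity. By~\eqref{eq:rhoH-def} and multiplicativity of the determinant, we have $\rho_H(u)=\sqrt{\det H(f(u))}\,\rho(u)$ for every $u$ in the chart domain. The same holds for $\widehat H$, as recorded in Table~\ref{tab:coordinate-transformation-summary}. Restricting to $\mathcal F_w$ as in Proposition~\ref{prop:task-side-scaling} gives
\begin{align}
    \left.\rho_H\right|_{\mathcal F_w}=c_H(w)\left.\rho\right|_{\mathcal F_w},
    \quad
    \left.\rho_{\widehat H}\right|_{\mathcal F_w}=c_{\widehat H}(w)\left.\rho\right|_{\mathcal F_w},
\end{align}
where $c_H(w)$ and $c_{\widehat H}(w)$ are strictly positive constants.

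Next, I will pass to the supremum. Since $c_H(w)>0$ is constant on $\mathcal F_w$, we get $\sup_{\mathcal F_w}\rho_H=c_H(w)\sup_{\mathcal F_w}\rho$, and likewise for $\widehat H$. Assumption~\ref{asm:fiberwise-reference} makes the left-hand side finite and positive, so $\sup_{\mathcal F_w}\rho$ is finite and positive as well. In the ratio~\eqref{eq:metric-independent-mu}, $c_H(w)$ cancels on the left and $c_{\widehat H}(w)$ on the right. Both sides then equal $\rho(v)/\sup_{\mathcal F_w}\rho$, which proves the identity. Attainment of the supremum is not needed.

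The main obstacle is a technical point: the factorization is only valid where a single task chart covers $w$. This is harmless, because every $u\in\mathcal F_w$ satisfies $f(u)=w$. Any chart containing the single point $w$ therefore serves all of $\mathcal F_w$ simultaneously, even if the fiber is noncompact or spread out in $\V$. If one wants to avoid charts entirely, I would instead argue intrinsically. The endomorphism $h_w^{-1}\widehat h_w$ of $T_w\W$ gives $\det \widehat A_u=\det A_u\cdot\det(h_w^{\sharp}\circ\widehat h_w^{\flat})$, and this last factor depends only on $w$. The same cancellation then applies.
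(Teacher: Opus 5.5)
Your proof is correct and follows the paper's argument: fix $w=f(v)$, use Proposition~\ref{prop:task-side-scaling} in a task chart about $w$ to get $\rho_H=c_H(w)\rho$ on $\mathcal F_w$, and cancel the positive constant $c_H(w)$ between the numerator and the fiberwise supremum; the same cancellation for $\widehat h$ gives the same ratio. Your additional chart-free remark, $\widehat A_u=A_u\circ\bigl(h_w^\sharp\circ\widehat h_w^\flat\bigr)$, is a valid intrinsic version of the same cancellation.
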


\begin{proof}
Fix $v$ and write $w=f(v)$. In a task chart about $w$,
Proposition~\ref{prop:task-side-scaling} gives
$\rho_H(u)=c_H(w)\rho(u)$ for every $u\in\Fw$.
Since $c_H(w)>0$ is constant on this fiber,
\begin{align}
    \mu_{\mathcal F}(v)
    =
    \frac{c_H(w)\rho(v)}
    {\displaystyle c_H(w)\sup_{u\in\Fw}\rho(u)}
    =
    \frac{\rho(v)}
    {\displaystyle\sup_{u\in\mathcal F_{f(v)}}\rho(u)}.
    \label{eq:mu-unweighted}
\end{align}
The same cancellation applies to $\widehat h$, yielding the same
right-hand side and proving~\eqref{eq:metric-independent-mu}.
\end{proof}

Equation~\eqref{eq:mu-unweighted} also provides a way to compute the
normalized index directly from the determinant proxy in any task chart.

\subsection{Coordinate invariance}

\begin{theorem}[Coordinate invariance]
\label{thm:mu-coordinate-invariance}
The fiber-normalized manipulability $\mu_{\mathcal F}$ is invariant
under coordinate changes on both $\V$ and $\W$.
\end{theorem}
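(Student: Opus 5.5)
The plan is to reduce the statement to pointwise identities already proved and then observe that the fiber and its supremum are chart-free objects. First, I fix a Riemannian task metric $h$. By Theorem~\ref{thm:task-metric-independence}, $\mu_{\mathcal F}$ does not depend on which $h$ is chosen, so it suffices to prove invariance for this one fixed $h$. The key observation is that $\mu_{\mathcal F}(v)=\rho_H(v)/\rho_H^\star(f(v))$ involves only three ingredients: the scalar function $\rho_H$ on $\V$, the map $f$, and the fibers $\mathcal F_w=f^{-1}(w)$. The latter two are defined without reference to any chart. Hence, once $\rho_H$ is shown to be a genuine scalar field, independent of charts on both manifolds, the numerator, the supremum over the set $\mathcal F_{f(v)}$, and therefore the quotient are all chart independent.

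\textbf{Step 1 (configuration charts).} By Lemma~\ref{lem:configuration-chart-cancellation}, changing from $x$ to $\widetilde x$ leaves $D(v)$ unchanged pointwise. The task chart is fixed during this change, so $H(f(v))$ is also unchanged. Therefore $\rho_H(v)=\sqrt{\det(D(v)H(f(v)))}$ is unchanged pointwise.

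\textbf{Step 2 (task charts).} By Lemma~\ref{lem:metric-completed-chart-invariance}, $\widetilde\rho_H(v)=\rho_H(v)$ for every $v$ in the overlap. A general coordinate change is the composition of a configuration change and a task change, so the two steps together give $\rho_H$ as a well-defined function on $\V$.

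\textbf{Step 3 (the reference value).} Since $\mathcal F_{f(v)}$ is the same subset of $\V$ in every chart, $\rho_H^\star(f(v))=\sup_{u\in\mathcal F_{f(v)}}\rho_H(u)$ is unchanged. The quotient defining $\mu_{\mathcal F}$ is then invariant, and so is $\ell_{\mathcal F}$. As a cross-check, I can also argue from \eqref{eq:mu-unweighted}: the raw proxy in a second task chart equals $c_C(w)\rho$ on $\mathcal F_w$ by Proposition~\ref{prop:task-side-scaling}, and the positive constant cancels between numerator and supremum exactly as in the proof of Theorem~\ref{thm:task-metric-independence}.

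The main obstacle I expect is a locality subtlety. A single task chart covers a neighbourhood of $w$, which contains the whole fiber $\mathcal F_w$, but a configuration chart need not cover all of $\mathcal F_w$, while the supremum ranges over the entire fiber. I would resolve this by avoiding any global chart: I invoke the pointwise invariance of Steps 1--2 at each $u\in\mathcal F_w$ separately, using whatever local configuration chart contains $u$. This is legitimate because Lemma~\ref{lem:configuration-chart-cancellation} shows the value of $\rho_H(u)$ does not depend on which chart contains $u$. So $\rho_H$ is a globally well-defined function, and its supremum over the fiber needs no chart at all.
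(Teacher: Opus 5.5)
Your proposal is correct and matches the paper's own argument: $\rho_H$ is pointwise invariant by Lemma~\ref{lem:configuration-chart-cancellation} and Lemma~\ref{lem:metric-completed-chart-invariance}, and the physical fiber does not depend on coordinates, so both the numerator and the fiberwise supremum are unchanged. Your treatment of configuration charts that do not cover the whole fiber makes explicit what the paper leaves implicit; the one slip is the wording that a neighbourhood of $w$ ``contains'' $\mathcal F_w$, which should say that $f$ maps every point of $\mathcal F_w$ into that task-chart domain.
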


\begin{proof}
For a fixed task metric, Lemma~\ref{lem:configuration-chart-cancellation} and
\eqref{eq:metric-completed-coordinate-invariance} show that $\rho_H$
has the same value at corresponding physical configurations.
Coordinate changes also leave the physical fiber unchanged.
Consequently, both $\rho_H(v)$ and its supremum over
$\mathcal F_{f(v)}$ are unchanged, and so is their quotient.
\end{proof}

Together, the two theorems identify $\mu_{\mathcal F}$ as an
intrinsic, dimensionless scalar determined by the task map $f$ and
the internal Riemannian metric $g$. Its independence from the task
metric $h$ does not imply independence from $f$ and $g$: changing
$f$ changes the task fibers, whereas changing $g$ can change the
relative ordering of internal states within each fiber.

\subsection{Cross-fiber comparison and optimization}

\begin{corollary}[Metric-independent cross-fiber optimization]
\label{cor:cross-fiber-optimization}
Let $\mathcal S\subseteq\V$ be a fixed feasible set, not necessarily
contained in one fiber. Provided that $\mu_{\mathcal F}$ is defined
on $\mathcal S$, the solution sets of
\begin{align}
    \argmax_{v\in\mathcal S}\mu_{\mathcal F}(v)
    \quad\text{and}\quad
    \argmin_{v\in\mathcal S}\ell_{\mathcal F}(v)
    \label{eq:cross-fiber-max}
\end{align}
are invariant under coordinate changes on $\V$ and $\W$ and under
changes of the task metric $h$.
\end{corollary}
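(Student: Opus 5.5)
The plan is to reduce the corollary to the pointwise invariance of the objective function. The solution set of an optimization problem over a fixed set depends only on the values the objective takes at the points of that set. So it suffices to show that $\mu_{\mathcal F}$, and hence $\ell_{\mathcal F}$, take the same value at each physical point $v\in\mathcal S$ under all admissible changes of representation. Likewise, $\mathcal S$ itself must be the same set of physical points in every representation.

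First, I will fix the meaning of the feasible set. $\mathcal S$ is a subset of the manifold $\V$, specified independently of charts and of $h$. A change of coordinates on $\V$ or $\W$ only relabels its points, and a change of $h$ does not affect it. Thus the feasible set is a common domain for every representation under consideration. The hypothesis that $\mu_{\mathcal F}$ is defined on $\mathcal S$ means that Assumption~\ref{asm:fiberwise-reference} holds on each fiber meeting $\mathcal S$, for the metrics considered. Under this hypothesis the quotient in Definition~\ref{def:fiber-normalized} is well defined on all of $\mathcal S$.

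Second, I will invoke the two invariance theorems pointwise.
\begin{itemize}
\item Theorem~\ref{thm:mu-coordinate-invariance} gives that $\mu_{\mathcal F}(v)$ is unchanged under coordinate changes on $\V$ and $\W$.
\item Theorem~\ref{thm:task-metric-independence} gives that, for any two task metrics $h,\widehat h$, the normalized values coincide for every $v\in\V$, in particular for every $v\in\mathcal S$.
\end{itemize}
Composing the two handles simultaneous changes: change the metric first with the charts fixed, then change the charts with the metric fixed. Hence the function $\mu_{\mathcal F}|_{\mathcal S}$ is a single well-defined function, independent of all these choices. The same then holds for $\ell_{\mathcal F}|_{\mathcal S}=1-\mu_{\mathcal F}|_{\mathcal S}$.

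Finally, identical objective functions on identical feasible sets have identical argmax sets. The identity
\[
\argmin_{\mathcal S}\ell_{\mathcal F}=\argmax_{\mathcal S}\mu_{\mathcal F}
\]
follows because $t\mapsto 1-t$ is strictly decreasing. This also covers the case where the sets are empty, since emptiness is preserved as well.

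The only delicate point is the well-definedness bookkeeping. Assumption~\ref{asm:fiberwise-reference} is stated for $\rho_H$ with a given $h$, so I need finiteness and positivity of the fiberwise supremum to transfer to $\rho_{\widehat H}$. The transfer comes from the fiberwise factorization of Proposition~\ref{prop:task-side-scaling}, $\rho_{\widehat H}=\bigl(c_{\widehat H}(w)/c_H(w)\bigr)\rho_H$ on $\mathcal F_w$, with a positive finite factor. This ensures that "defined on $\mathcal S$" is itself a representation-independent condition.
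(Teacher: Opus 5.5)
Your proposal is correct and follows the paper's proof. The paper likewise argues that the preceding invariance theorems leave the objective values unchanged on the same physical feasible set, and that maximizing $\mu_{\mathcal F}$ is equivalent to minimizing $1-\mu_{\mathcal F}$. Your extra check that Assumption~\ref{asm:fiberwise-reference} transfers between task metrics through the positive fiberwise factor $c_{\widehat H}(w)/c_H(w)$ is a valid refinement that the paper leaves implicit.
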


\begin{proof}
The preceding theorems leave the objective values unchanged on the
same physical feasible set. Maximizing $\mu_{\mathcal F}$ is
equivalent to minimizing $1-\mu_{\mathcal F}$.
\end{proof}

\subsection{Cross-fiber trajectory optimization}
\label{subsec:trajectory-extension}

The preceding results concern static optimization, in which the
decision variable is an internal state selected from a feasible subset
of $\V$. We now consider trajectory optimization, in which the
decision variable is an internal-state trajectory
\begin{align}
    v:[0,T]\to\V.
\end{align}
The corresponding task trajectory is $f\circ v:[0,T]\to\W$. Depending
on the application, the task trajectory may be prescribed in advance
or selected by the optimizer.

The purpose of fiber normalization in this setting is to provide an
intrinsic state-dependent objective that remains meaningful while the
trajectory crosses different task fibers. The loss
$\ell_{\mathcal F}(v(t))$ evaluates the internal state at time $t$
relative to the best capability available on its instantaneous fiber
$\mathcal F_{f(v(t))}$. Physical requirements on the motion, such as
continuity, endpoint conditions, state constraints, speed bounds, or
motion penalties, can then be imposed independently through the
admissible trajectory class and the internal metric $g$.

\subsubsection{Intrinsic objectives for physically admissible trajectories}

Let $\mathcal A$ be a common physical admissible class of absolutely
continuous internal-state trajectories. The class may impose endpoint
conditions, state constraints, a prescribed task trajectory, and
intrinsic constraints on the tangent vector $\dot v$. Let
\begin{align}
    \Phi:[0,1]\times[0,\infty)\to\R
\end{align}
be continuous, and assume that the following integral is well defined
for every $v\in\mathcal A$:
\begin{align}
    \mathcal J_{\mathcal F,\Phi}[v]
    :=
    \int_0^T
    \Phi\!\left(
        \ell_{\mathcal F}(v(t)),
        g_{v(t)}\!\left(\dot v(t),\dot v(t)\right)
    \right)
    \,\dd t.
    \label{eq:intrinsic-cross-fiber-trajectory-objective}
\end{align}

\begin{theorem}[Intrinsic cross-fiber trajectory optimization]
\label{thm:intrinsic-cross-fiber-trajectory-optimization}
The value of
$\mathcal J_{\mathcal F,\Phi}[v]$ is invariant under coordinate
changes on $\V$ and $\W$ and independent of the reference task metric
$h$. Consequently, if the minimum is attained, the complete solution
set
\begin{align}
    \argmin_{v\in\mathcal A}
    \mathcal J_{\mathcal F,\Phi}[v]
    \label{eq:intrinsic-cross-fiber-trajectory-minimizers}
\end{align}
is coordinate invariant and independent of $h$.
\end{theorem}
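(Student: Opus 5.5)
The plan is to show that the integrand of $\mathcal J_{\mathcal F,\Phi}[v]$ is unchanged pointwise in $t$ under all three kinds of change. The statement about minimizers then follows from the fact that $\mathcal A$ is a physical class of trajectories, independent of any representation. The integrand depends on the trajectory only through two scalar functions of time:
\begin{align}
    a(t):=\ell_{\mathcal F}(v(t)),
    \quad
    b(t):=g_{v(t)}\!\left(\dot v(t),\dot v(t)\right).
\end{align}
Since $\Phi$ is a fixed continuous function on $[0,1]\times[0,\infty)$, it suffices to show that $a$ and $b$ are invariant. The domain is correct because $a(t)\in[0,1)$ by Proposition~\ref{prop:fiber-normalized-range} and $b(t)\geq 0$ by positive definiteness of $g$.

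For $a$, I will invoke Theorem~\ref{thm:mu-coordinate-invariance} and Theorem~\ref{thm:task-metric-independence}. Together they show that $\mu_{\mathcal F}$, and hence $\ell_{\mathcal F}=1-\mu_{\mathcal F}$, takes the same value at the same physical state $v(t)$ for every choice of charts on $\V$ and $\W$ and every task metric $h$. Assumption~\ref{asm:fiberwise-reference} guarantees that the quantity is defined along the trajectory.

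For $b$, the key point is that $g$ is fixed data of the model; it is neither the task metric nor a chart artifact. Under a change of configuration chart with transition Jacobian $T_B$, the coordinate velocity transforms as $\dot x=T_B\dot{\widetilde x}$, as in~\eqref{eq:configuration-tangent-transformation}, and the metric matrix transforms as $\widehat G=T_B^\top G T_B$, as in Lemma~\ref{lem:configuration-chart-cancellation}. Hence $\dot{\widetilde x}^\top\widehat G\dot{\widetilde x}=\dot x^\top G\dot x$. Task charts and $h$ do not enter $b$ at all. One subtlety needs a remark: for an absolutely continuous trajectory, $\dot v(t)$ exists only for almost every $t$, and a smooth chart change preserves absolute continuity. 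Where a trajectory crosses between chart domains, I will handle it by working on subintervals, or by observing that $\dot v(t)\in T_{v(t)}\V$ is intrinsic.

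Pointwise invariance a.e.\ then gives equality of the integrals, so $\mathcal J_{\mathcal F,\Phi}$ is the same functional on the same physical set $\mathcal A$. The set $\mathcal A$ itself is described intrinsically: endpoint conditions, state constraints, the prescribed task trajectory $f\circ v$, and constraints on $\dot v$ through $g$ all refer to physical objects. Therefore the argmin set in~\eqref{eq:intrinsic-cross-fiber-trajectory-minimizers} coincides, exactly as in Corollary~\ref{cor:cross-fiber-optimization}.

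I expect the main obstacle to be the bookkeeping that $\mathcal A$ and the well-definedness of the integral are representation-independent. This is mostly a matter of carefully stating that constraints are imposed on physical objects rather than on coordinate expressions; the analytic content is already supplied by the earlier theorems.
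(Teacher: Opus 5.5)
Your proposal is correct and follows essentially the same argument as the paper. Both show pointwise invariance of $\ell_{\mathcal F}(v(t))$, via the coordinate-invariance and task-metric-independence theorems, and of the intrinsic scalar $g_{v(t)}(\dot v(t),\dot v(t))$, then integrate over the fixed physical admissible class $\mathcal A$; your extra checks on the domain of $\Phi$, almost-everywhere differentiability, and chart overlaps fill in details the paper leaves implicit.
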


\begin{proof}
The fiber-normalized loss $\ell_{\mathcal F}$ is invariant under
coordinate changes on $\V$ and $\W$ and independent of $h$. Moreover,
\begin{align}
    g_{v(t)}\!\left(\dot v(t),\dot v(t)\right)
\end{align}
is an intrinsic scalar and is therefore unchanged under coordinate
changes on $\V$. Hence, the integrand in
\eqref{eq:intrinsic-cross-fiber-trajectory-objective} has the same
value in every coordinate representation and for every choice of
$h$. Integration preserves this equality on every physical trajectory
in $\mathcal A$. The functional and, whenever the minimum is attained,
its complete minimizer set therefore have the stated properties.
\end{proof}

A useful instance of
\eqref{eq:intrinsic-cross-fiber-trajectory-objective} combines the
fiber-normalized loss with a quadratic penalty on internal motion:
\begin{align}
    \mathcal J_{\mathcal F,\lambda}[v]
    :=
    \int_0^T
    \left[
        \ell_{\mathcal F}(v(t))
        +
        \frac{\lambda}{2}
        g_{v(t)}\!\left(\dot v(t),\dot v(t)\right)
    \right]
    \,\dd t,
    \quad
    \lambda>0.
    \label{eq:fiber-normalized-regularized-trajectory-objective}
\end{align}
The first term favors internal states that attain a large fraction of
the capability available on their instantaneous fibers. The second
term penalizes rapid internal motion and couples the choices made at
different times. The parameter $\lambda$ determines the compromise
between relative redundancy utilization and motion regularity.

Alternatively, motion regularity can be imposed through the admissible
class rather than through the objective. For example, the intrinsic
speed-bounded class
\begin{align}
    \mathcal A_{\bar s}
    :=
    \left\{
        v\in\mathrm{AC}([0,T],\V)
        \ \middle|\
        \sqrt{
            g_{v(t)}\!\left(\dot v(t),\dot v(t)\right)
        }
        \leq\bar s
        \text{ for almost every }t
    \right\}
    \label{eq:intrinsic-speed-bounded-class}
\end{align}
rules out jumps and bounds the rate of internal motion. Minimizing
\begin{align}
    \mathcal J_{\mathcal F}[v]
    :=
    \int_0^T
    \ell_{\mathcal F}(v(t))
    \,\dd t
    \label{eq:normalized-path-fixed-param}
\end{align}
over $\mathcal A_{\bar s}$, possibly together with endpoint, state, or
task constraints, therefore defines an intrinsic optimization problem
over absolutely continuous trajectories crossing different fibers.

Theorem~\ref{thm:intrinsic-cross-fiber-trajectory-optimization}
establishes that these trajectory problems are intrinsically defined.
It does not, by itself, establish existence, uniqueness, or additional
smoothness of their minimizers. Such properties require further
assumptions on the admissible class, the fiber-normalized loss, and the
motion term.

\subsubsection{Prescribed task trajectories without temporal coupling}

We next identify the special case in which a prescribed-task
trajectory problem reduces to independent static optimizations on the
instantaneous fibers. Let
\begin{align}
    w:[0,T]\to\W
\end{align}
be a prescribed task trajectory. An admissible internal-state
trajectory must satisfy
\begin{align}
    f(v(t))=w(t),
    \quad
    \text{equivalently}
    \quad
    v(t)\in\mathcal F_{w(t)},
\end{align}
for almost every $t\in[0,T]$.

Consider first the uncoupled admissible class
\begin{align}
    \mathcal A_w^0
    :=
    \left\{
        v:[0,T]\to\V
        \ \middle|\
        \begin{array}{l}
            v \text{ is measurable},\\
            f(v(t))=w(t)
            \text{ for almost every }t
        \end{array}
    \right\}.
    \label{eq:uncoupled-admissible-class}
\end{align}
This class imposes the task constraint independently at each time. It
does not impose continuity, absolute continuity, endpoint conditions,
or bounds on $\dot v$. Therefore, the internal state selected at one
time does not restrict the selections available at other times.

For each task value, define the common fiberwise-optimal set
\begin{align}
    \mathcal M(w)
    :=
    \argmax_{u\in\mathcal F_w}\rho(u).
    \label{eq:common-fiberwise-optimal-set}
\end{align}
By Theorem~\ref{thm:main},
\begin{align}
\begin{aligned}
    \mathcal M(w)
    =
    \argmax_{u\in\mathcal F_w}\widetilde\rho(u)
    =
    \argmax_{u\in\mathcal F_w}\rho_H(u)
    =
    \argmin_{u\in\mathcal F_w}\ell_{\mathcal F}(u).
\end{aligned}
\label{eq:common-fiberwise-optimal-set-equivalence}
\end{align}

\begin{proposition}[Uncoupled prescribed-task equivalence]
\label{prop:prescribed-task-pointwise}
Let $w:[0,T]\to\W$ be a prescribed measurable task trajectory.
Assume that the fiberwise maxima defining $\mathcal M(w(t))$ are
attained for almost every $t$ and that there exists a measurable
selection $v^\star\in\mathcal A_w^0$ such that
\begin{align}
    v^\star(t)\in\mathcal M(w(t))
    \quad
    \text{for almost every }t.
    \label{eq:measurable-pointwise-minimizer}
\end{align}
Assume also that, for every $v\in\mathcal A_w^0$, the four integrands
below are measurable and integrable on $[0,T]$. Define
\begin{align}
    \mathcal J_\rho[v]
    &:=
    -\int_0^T \rho(v(t))\,\dd t,
    &
    \mathcal J_{\widetilde\rho}[v]
    &:=
    -\int_0^T \widetilde\rho(v(t))\,\dd t,
    \nonumber\\
    \mathcal J_{\rho_H}[v]
    &:=
    -\int_0^T \rho_H(v(t))\,\dd t,
    &
    \mathcal J_{\mathcal F}[v]
    &:=
    \int_0^T \ell_{\mathcal F}(v(t))\,\dd t.
    \label{eq:uncoupled-trajectory-objectives}
\end{align}
Then their complete minimizer sets on $\mathcal A_w^0$ coincide:
\begin{align}
\begin{aligned}
    \argmin_{v\in\mathcal A_w^0}\mathcal J_\rho[v]
    &=
    \argmin_{v\in\mathcal A_w^0}
    \mathcal J_{\widetilde\rho}[v]
    =
    \argmin_{v\in\mathcal A_w^0}
    \mathcal J_{\rho_H}[v]
    =
    \argmin_{v\in\mathcal A_w^0}
    \mathcal J_{\mathcal F}[v]
    \\
    &=
    \left\{
        v\in\mathcal A_w^0
        \ \middle|\
        v(t)\in\mathcal M(w(t))
        \text{ for almost every }t
    \right\}.
\end{aligned}
\label{eq:uncoupled-trajectory-equivalence}
\end{align}
\end{proposition}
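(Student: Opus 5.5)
The plan is to reduce each trajectory problem to a pointwise comparison against the fiberwise optimal value. The key observation is that on $\mathcal A_w^0$ every admissible $v$ satisfies $v(t)\in\mathcal F_{w(t)}$ for almost every $t$. Each of the four integrands can therefore be bounded below, pointwise in $t$, by its fiberwise infimum. For almost every $t$, write $\rho^\star(w(t)):=\max_{u\in\mathcal F_{w(t)}}\rho(u)$, which is attained by hypothesis. By Proposition~\ref{prop:task-side-scaling}, the corresponding maxima of $\widetilde\rho$ and $\rho_H$ are $c_C(w(t))\rho^\star(w(t))$ and $c_H(w(t))\rho^\star(w(t))$. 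By \eqref{eq:mu-unweighted}, the loss is $\ell_{\mathcal F}(v(t))=1-\rho(v(t))/\rho^\star(w(t))$. All four integrands thus have the form
\[
a(t)-b(t)\,\rho(v(t)),
\]
with $b(t)>0$ independent of $v$: $b=1$, $c_C(w(t))$, $c_H(w(t))$, or $1/\rho^\star(w(t))$. The integrand is minimized at time $t$ exactly when $v(t)\in\mathcal M(w(t))$, using \eqref{eq:common-fiberwise-optimal-set-equivalence}.

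The argument then has two inclusions. First, I would show that the set $\mathcal S:=\{v\in\mathcal A_w^0: v(t)\in\mathcal M(w(t))\text{ a.e.}\}$ consists of minimizers of each functional. For any $v\in\mathcal A_w^0$ and $v'\in\mathcal S$, the pointwise inequality between integrands holds almost everywhere. Integrating, which is allowed by the assumed integrability, gives $\mathcal J[v']\le\mathcal J[v]$. So every element of $\mathcal S$ attains the infimum, and $\mathcal S$ is nonempty because $v^\star\in\mathcal S$. The common minimum value equals $\mathcal J[v^\star]$.

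Second, I would show that every minimizer lies in $\mathcal S$. Let $v$ be a minimizer of a given functional, and compare it with $v^\star$. The difference of integrands $e(t):=(\text{integrand at }v)-(\text{integrand at }v^\star)$ is nonnegative almost everywhere and integrable, and its integral vanishes because $\mathcal J[v]=\mathcal J[v^\star]$. Hence $e=0$ almost everywhere. Since $e(t)=b(t)\bigl(\rho^\star(w(t))-\rho(v(t))\bigr)$ with $b(t)>0$, it follows that $\rho(v(t))=\rho^\star(w(t))$ almost everywhere. That is, $v(t)\in\mathcal M(w(t))$ almost everywhere, so $v\in\mathcal S$. Doing this for each of the four functionals shows that every minimizer set equals $\mathcal S$, which is \eqref{eq:uncoupled-trajectory-equivalence}.

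The main obstacle is measure-theoretic bookkeeping rather than geometry. Step two needs the integrand evaluated at $v^\star$ to be integrable, and it needs the exceptional null sets to combine into a single null set. The first is covered by the assumption that all integrands are integrable for every element of $\mathcal A_w^0$, applied to $v^\star$. The second holds because there are only countably many null sets: one for the fiber constraint, one for attainment of the maxima, and one for the selection property. One further point needs care: the pointwise positive factor $b(t)$ must not be integrated separately. The argument only uses its positivity at each $t$, so no measurability of $c_C\circ w$, $c_H\circ w$, or $\rho^\star\circ w$ beyond what the integrability hypothesis implicitly provides is needed.
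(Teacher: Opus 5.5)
Your proof is correct and follows essentially the paper's argument: the four integrands share the pointwise optimal set $\mathcal M(w(t))$ on each instantaneous fiber, and integrating gives the first inclusion. For the converse, the paper patches $v^\star$ into $v$ on the suboptimal set $B$ to obtain a strictly better admissible selection, whereas you compare $v$ directly with $v^\star$ and use that a nonnegative integrable function with zero integral vanishes a.e.; both reduce to the same estimate $\int_B e>0$, and your version simply skips the patching step.
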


\begin{proof}
For almost every $t$, every admissible trajectory satisfies
$v(t)\in\mathcal F_{w(t)}$. By
\eqref{eq:common-fiberwise-optimal-set-equivalence}, the four
instantaneous objectives have the common optimal set
$\mathcal M(w(t))$ on that fiber. Therefore, every measurable
selection satisfying
\eqref{eq:measurable-pointwise-minimizer} attains the pointwise
minimum of all four integrands almost everywhere and consequently
minimizes all four accumulated objectives.

Conversely, fix one of the four objectives and suppose that
$v\in\mathcal A_w^0$ is pointwise suboptimal for its integrand on a
measurable set $B\subseteq[0,T]$ of positive measure. Define the
measurable patched selection
\begin{align}
    \widehat v(t)
    :=
    \begin{cases}
        v^\star(t), & t\in B,\\
        v(t),       & t\notin B.
    \end{cases}
\end{align}
Since both $v$ and $v^\star$ satisfy the prescribed task constraint
almost everywhere, $\widehat v\in\mathcal A_w^0$. The integrand
evaluated at $\widehat v(t)$ is no larger than that evaluated at
$v(t)$ almost everywhere and is strictly smaller on $B$. Their
difference is measurable, integrable, nonnegative, and positive on a
set of positive measure. Hence, the corresponding accumulated
objective is strictly smaller at $\widehat v$ than at $v$.

Thus, a minimizer of any of the four objectives must belong to
$\mathcal M(w(t))$ almost everywhere. Since this instantaneous optimal
set is common to all four integrands, their complete minimizer sets
coincide and are given by
\eqref{eq:uncoupled-trajectory-equivalence}.
\end{proof}

Proposition~\ref{prop:prescribed-task-pointwise} identifies the
uncoupled limit of prescribed-task trajectory optimization. Since no
condition links different times, the problem reduces to independent
fiberwise optimizations, and all four objectives yield the same
measurable pointwise-optimal selections. Such selections need not be
continuous and may contain instantaneous jumps when the set
$\mathcal M(w)$ changes discontinuously or has disconnected branches.

\subsubsection{Temporally coupled cross-fiber trajectories}

Physical trajectory optimization generally requires more than
measurability. Continuity, absolute continuity, endpoint conditions,
intrinsic speed bounds, or motion penalties constrain how an internal
state selected on one fiber can be connected to states on neighboring
fibers. The resulting problem no longer decomposes into independent
instantaneous optimizations.

For a prescribed task trajectory, these constraints define a problem
of selecting a physically admissible lift
\begin{align}
    v:[0,T]\to\V,
    \quad
    f(v(t))=w(t),
\end{align}
that moves across the fibers $\mathcal F_{w(t)}$. For a free task
trajectory, both the internal-state trajectory and its image
$f\circ v$ are selected by the optimizer. In either case, the
fiber-normalized loss supplies an intrinsic measure of relative
redundancy utilization across the visited fibers, while the admissible
class or the motion term determines how those fibers can be traversed.

The positive task-dependent factors relating $\rho$,
$\widetilde\rho$, and $\rho_H$ are constant on each instantaneous
fiber but generally vary along a trajectory that crosses fibers.
When different times are coupled, these varying factors can change
the relative importance assigned to different portions of a candidate
trajectory. The unnormalized objectives can therefore select different
transitions or paths.

By contrast,
Theorem~\ref{thm:intrinsic-cross-fiber-trajectory-optimization}
shows that objectives constructed from $\ell_{\mathcal F}$ and
intrinsic motion quantities remain coordinate invariant and independent
of the reference task metric on any common physical admissible class.
This is the principal trajectory-level role of fiber normalization:
it enables meaningful optimization of physically admissible
cross-fiber trajectories without introducing task-chart-dependent or
task-metric-dependent weighting of the visited fibers.

The prescribed-task speed-bounded problem and the free-task planning
problem are illustrated in
Sections~\ref{sec:prescribed-task-example}
and~\ref{sec:free-task-example}, respectively.

Table~\ref{tab:fiber-normalized-optimization-summary} summarizes the successive uses of fiber normalization, from pointwise cross-fiber comparison to physically admissible trajectory optimization.

\begin{table}[pos=t]
    \centering
    \caption{
    Summary of the optimization problems enabled by fiber-normalized
    manipulability. The admissible sets are physical sets and are
    therefore kept fixed when coordinates or the reference task metric
    are changed.
    }
    \label{tab:fiber-normalized-optimization-summary}

    \renewcommand{\arraystretch}{0}
    \setlength{\tabcolsep}{5pt}

    \begin{tabularx}{\linewidth}{
        @{}
        >{\raggedright\arraybackslash}p{0.22\linewidth}
        >{\raggedright\arraybackslash}p{0.40\linewidth}
        >{\raggedright\arraybackslash}X
        @{}
    }
        \toprule

        \textbf{Problem}
        &
        \textbf{Construction}
        &
        \textbf{Result}
        \\

        \midrule

        \vspace{0pt}
        \textbf{Fiber-normalized value}
        \newline
        {\footnotesize Pointwise comparison}
        &
        \vspace{0pt}
        \(
        \begin{aligned}[t]
            \mu_{\mathcal F}(v)
            &=
            \frac{\rho_H(v)}
            {\displaystyle
             \sup_{u\in\mathcal F_{f(v)}}\rho_H(u)},
            \\
            \ell_{\mathcal F}(v)
            &=
            1-\mu_{\mathcal F}(v)
        \end{aligned}
        \)
        &
        \vspace{0pt}
        \(
        \begin{aligned}[t]
            0
            &<
            \mu_{\mathcal F}(v)
            \leq 1,
            \\
            0
            &\leq
            \ell_{\mathcal F}(v)
            <1
        \end{aligned}
        \)
        \newline
        {\footnotesize
        Coordinate invariant and independent of \(h\).}
        \\

        \midrule

        \vspace{0pt}
        \textbf{Static cross-fiber optimization}
        \newline
        {\footnotesize \(v\in\mathcal S\subseteq\V\)}
        &
        \vspace{0pt}
        \(
        \begin{aligned}[t]
            \max_{v\in\mathcal S}
            \mu_{\mathcal F}(v)
            \quad
            \Longleftrightarrow
            \quad
            \min_{v\in\mathcal S}
            \ell_{\mathcal F}(v)
        \end{aligned}
        \)
        &
        \vspace{0pt}
        {\footnotesize
        The complete optimizer set is coordinate invariant and
        independent of \(h\).}
        \\

        \midrule

        \vspace{0pt}
        \textbf{Intrinsic trajectory optimization}
        \newline
        {\footnotesize \(v\in\mathcal A\subset
        \mathrm{AC}([0,T],\V)\)}
        &
        \vspace{0pt}
        \(
        \begin{aligned}[t]
            \mathcal J_{\mathcal F,\Phi}[v]
            &:=
            \int_0^T
            \Phi\!\left(
                \ell_{\mathcal F}(v(t)),
                \|\dot v(t)\|_{g_{v(t)}}^2
            \right)
            \,\dd t
        \end{aligned}
        \)
        &
        \vspace{0pt}
        {\footnotesize
        The functional and its complete minimizer set are coordinate
        invariant and independent of \(h\).}
        \\

        \midrule

        \vspace{0pt}
        \textbf{Prescribed task, uncoupled}
        \newline
        {\footnotesize
        \(v\in\mathcal A_w^0\)}
        &
        \vspace{0pt}
        \(
        \begin{aligned}[t]
            f(v(t))
            &=
            w(t),
            \\
            v(t)
            &\in
            \mathcal M(w(t))
            \quad\text{a.e.}
        \end{aligned}
        \)
        &
        \vspace{0pt}
        \(
        \begin{aligned}[t]
            \argmin\mathcal J_\rho
            &=
            \argmin\mathcal J_{\widetilde\rho}
            =
            \argmin\mathcal J_{\rho_H}
            \\
            &=
            \argmin\mathcal J_{\mathcal F}
        \end{aligned}
        \)
        \newline
        {\footnotesize
        Pointwise selections may be discontinuous.}
        \\

        \midrule

        \vspace{0pt}
        \textbf{Cross-fiber motion with temporal coupling}
        \newline
        {\footnotesize Prescribed or free task}
        &
        \vspace{0pt}
        \(
        \begin{aligned}[t]
            v
            &\in
            \mathrm{AC}([0,T],\V),
            \\
            \|\dot v(t)\|_{g_{v(t)}}
            &\leq
            \bar s
            \quad\text{a.e.}
        \end{aligned}
        \)
        \newline
        {\footnotesize
        Endpoint, state, or task constraints may also be imposed.}
        &
        \vspace{0pt}
        {\footnotesize
        Unnormalized objectives may select different paths.
        Fiber-normalized objectives remain coordinate invariant and
        independent of \(h\).}
        \\

        \bottomrule
    \end{tabularx}
\end{table}

\section{Numerical case studies and robotics implications}
\label{sec:examples}

\subsection{Organization and common comparison protocol}
\label{sec:examples-roadmap}

The planar 2R manipulator and dual-rotor allocation model illustrate
three optimization settings:
\begin{enumerate}[label=(\roman*)]
    \item static fiberwise and global optimization for both systems
    (Sec.~\ref{sec:static-examples});
    \item allocation along a prescribed dual-rotor task trajectory,
    first without temporal coupling and then under a speed bound
    (Sec.~\ref{sec:prescribed-task-example});
    \item free-task planning between fixed configurations for both
    systems (Sec.~\ref{sec:free-task-example}).
\end{enumerate}
These experiments illustrate the theoretical results; they are not
intended as independent proofs.

Throughout, $c_C$ and $c_H$ are the task-side factors
in~\eqref{eq:task-side-factors}. For static optimization and landscape
visualization, define
\begin{align}
    L(v)=-\log\rho(v),\quad
    \widetilde L(v)=L(v)-\log c_C(f(v)),\quad
    L_H(v)=L(v)-\log c_H(f(v)).
    \label{eq:example-log-costs}
\end{align}
Since $-\log$ is strictly decreasing, minimizing $L$ is equivalent
to maximizing $\rho$. Theorem~\ref{thm:main} therefore gives the same
fiberwise minimizers for all three logarithmic costs.

Their fiber-optimum profiles are
\begin{align}
    L^\star(w):=\min_{u\in\Fw}L(u),\quad
    \widetilde L^\star(w):=\min_{u\in\Fw}\widetilde L(u),\quad
    L_H^\star(w):=\min_{u\in\Fw}L_H(u).
    \label{eq:common-fiber-optimum-profiles}
\end{align}
Proposition~\ref{prop:task-side-scaling} gives
\begin{align}
    \widetilde L^\star(w)=L^\star(w)-\log c_C(w),
    \quad
    L_H^\star(w)=L^\star(w)-\log c_H(w).
    \label{eq:common-profile-transformations}
\end{align}
These task-dependent shifts can change which fiber has the lowest
optimum value. The trajectory problems use the running costs
$-\rho$, $-\widetilde\rho$, $-\rho_H$, and $\ell_{\mathcal F}$
directly, not their logarithmic representations.

The nonconstant task-coordinate and task-metric factors used below
are deliberately chosen to make their cross-fiber effect visible.
Their particular analytical expressions are not required by the
theory; only smoothness, nonsingularity of the coordinate change, and
positive definiteness of the metric are used.

Changes of configuration coordinates are not plotted separately:
Lemma~\ref{lem:configuration-chart-cancellation} shows that the corresponding
transformations of $J_f$ and $G$ cancel exactly and leave $D$
unchanged pointwise.

Two graphical conventions are used below. In the static landscape
figures, red curves represent task fibers, yellow curves represent
the common fiberwise-optimal locus, and square, circular, and diamond
markers identify global selections associated with the determinant-
proxy, task-coordinate, and task-metric constructions, respectively.
In the trajectory-planning figures, the objective fields are rendered
in grayscale, thin gray curves denote task fibers, and gold curves
denote the common fiberwise-optimal locus. The optimized lifts are
drawn using the high-contrast colors and line styles identified in
the corresponding legends.

\subsection{Static fiberwise and global optimization}
\label{sec:static-examples}

\subsubsection{Planar 2R manipulator}
\label{sec:2r-static}

Consider a planar two-link manipulator with joint configuration
$q=(q_1,q_2)\in\mathbb T^2$ and link lengths
$\ell_1=1$ and $\ell_2=0.8$. We equip the configuration manifold with
the flat product metric, whose matrix in the joint coordinates is
$G(q)=I_2$. As a one-dimensional task, consider the horizontal
end-effector position
\begin{align}
    f(q)=x_E(q)
    =\ell_1\cos q_1+\ell_2\cos(q_1+q_2)=w.
\end{align}
The arm is therefore redundant with respect to this scalar task, with
one-dimensional regular fibers in its two-dimensional configuration
manifold.
Its Jacobian is
\begin{align}
    J_f(q)=
    \begin{bmatrix}
        -\ell_1\sin q_1-\ell_2\sin(q_1+q_2)
        &
        -\ell_2\sin(q_1+q_2)
    \end{bmatrix}.
\end{align}
Consequently, on the regular set $D(q)>0$,
\begin{align}
    D(q)=J_f(q)J_f(q)^\top,
    \quad
    \rho(q)=\sqrt{D(q)}=\lVert J_f(q)\rVert_2,
\end{align}
with $L$ given by~\eqref{eq:example-log-costs}.

For $w\in[-1.65,1.65]$, denote the fiberwise optimizer set by
\begin{align}
    \mathcal Q^\star(w):={\arg\min}_{q\in\mathcal F_w}L(q).
    \label{eq:2r-fiberwise-minimizers}
\end{align}
To illustrate task-side dependence, consider the smooth,
orientation-preserving task-coordinate change
\begin{align}
    \widetilde w=\phi(w):=\frac{e^{0.9w}-1}{0.9},
    \quad
    c_C(w)=\phi'(w)=e^{0.9w}>0.
\end{align}
The modified logarithmic cost is therefore
\begin{align}
    \widetilde L(q)=L(q)-0.9f(q).
\end{align}
We also consider the one-dimensional task-space Riemannian metric with matrix representation
\begin{align}
    H(w)=e^{-1.8w}>0.
\end{align}
Since $c_H(w)=\sqrt{H(w)}=e^{-0.9w}$, its logarithmic cost is
\begin{align}
    L_H(q)=L(q)+0.9f(q).
\end{align}

For the profiles defined in
\eqref{eq:common-fiber-optimum-profiles},
\eqref{eq:common-profile-transformations} specializes to
\begin{align}
    \widetilde L^\star(w)
    =
    L^\star(w)-0.9w,
    \quad
    L_H^\star(w)
    =
    L^\star(w)+0.9w.
    \label{eq:2r-fiber-profile-transformations}
\end{align}

Figure~\ref{fig:2R-landscapes} shows visibly different
configuration-space cost landscapes with a common fiberwise-optimal
locus. The red curves represent selected fixed-task fibers, while the
yellow branches form the locus
\begin{align}
    \bigcup_{w\in[-1.65,1.65]}
    \mathcal Q^\star(w).
\end{align}
Its coincidence across the three panels illustrates the fiberwise
ordering equivalence established by Theorem~\ref{thm:main}, despite
the different cross-fiber values of the costs.

The marked and annotated configurations in
Fig.~\ref{fig:2R-landscapes} are global minimizers of the cost shown
in the corresponding panel. The square identifies a global minimizer
of $L$, the circle identifies a global minimizer of
$\widetilde L$, and the diamond identifies a global minimizer of
$L_H$. The adjacent annotations report their joint coordinates.
Although all three marked configurations belong to the common
fiberwise-optimal locus, the task-dependent modifications can select
global minimizers on different fibers.

Figure~\ref{fig:2R-summary} examines the distinction between
fiberwise and global optimization more directly. To illustrate the
fiberwise comparison, fix $w_0=0.60$. The fiber
$\mathcal F_{w_0}$ can be parameterized by $q_1$ through the two
inverse-kinematics branches
\begin{align}
    q_2^\pm(q_1;w_0)
    =
    \pm\arccos\!\left(
        \frac{w_0-\cos q_1}{0.8}
    \right)-q_1,
\end{align}
where defined. Panel~(a) shows the restrictions of $L$,
$\widetilde L$, and $L_H$ to these branches. Since
$f(q)=w_0$ everywhere on the fiber, the three restricted costs differ
only by constant vertical translations and therefore have the same
minimizers.

Panel~(b) displays the complete common fiberwise-optimal locus and
locates on it the global selections obtained from the three costs.
Panel~(c) shows the corresponding fiber-optimum profiles
$L^\star$, $\widetilde L^\star$, and $L_H^\star$. The opposite
task-dependent shifts in
\eqref{eq:2r-fiber-profile-transformations} explain why the three
costs preserve the complete ordering on every prescribed fiber while
selecting different task values under global comparison.

\begin{figure}[pos=t]
    \centering
    \includegraphics[width=\linewidth]{%
        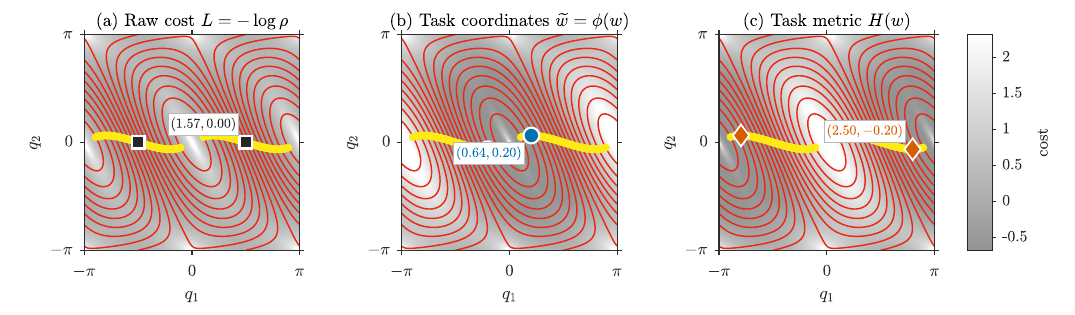}
    \caption{
    Configuration-space logarithmic-cost landscapes for the planar
    2R manipulator. The grayscale background represents the logarithmic cost, with darker
gray indicating lower displayed values and white indicating higher
displayed values. The same cost scale is used across all three panels.
(a) Determinant-proxy cost $L=-\log\rho$.
    (b) Cost $\widetilde L$ obtained after the nonlinear
    task-coordinate change. (c) Cost $L_H$ obtained using the
    reference task metric. Red curves show representative fixed-task
    fibers, and the common yellow branches form the fiberwise-optimal
    locus. The square, circle, and diamond identify global minimizers
    of $L$, $\widetilde L$, and $L_H$, respectively; the adjacent
    annotations report their joint coordinates. The three landscapes
    differ across fibers, whereas their minimizers coincide on every
    prescribed fiber.
    }
    \label{fig:2R-landscapes}
\end{figure}

    \begin{figure}[pos=t]
    \centering
    \includegraphics[width=\linewidth]{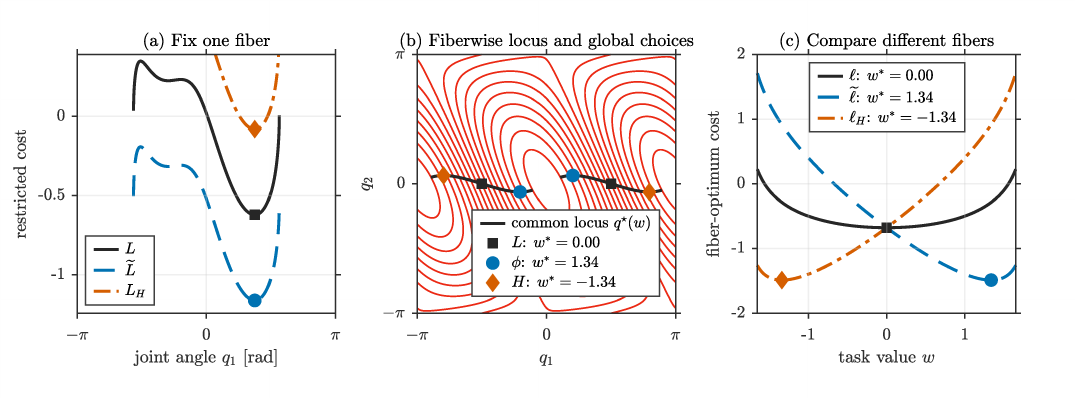}
    \caption{
    Fiberwise and global comparisons for the planar 2R manipulator.
    (a) Restrictions of $L$, $\widetilde L$, and $L_H$ to
    $\mathcal F_{w_0}$, with $w_0=0.60$, parameterized by $q_1$.
    The three curves differ only by constant vertical translations
    and have the same minimizers. (b) Common fiberwise-optimal locus,
    with the square, circle, and diamond identifying the global
    selections produced by $L$, $\widetilde L$, and $L_H$,
    respectively. (c) Fiber-optimum profiles
    $L^\star$, $\widetilde L^\star$, and $L_H^\star$.
    Task-side modifications preserve the complete ordering on each
    fixed fiber but can change the task value selected by a global
    comparison.
    }
    \label{fig:2R-summary}
\end{figure}

\subsubsection{Dual-rotor aerodynamic allocation}
\label{sec:dual-static}

Consider two collinear rotors with spin-rate coordinates
$v=(v_1,v_2)\in\mathbb R^2$ and scalar generalized-force task
\begin{align}
    f(v)=v_1\lvert v_1\rvert+0.7v_2\lvert v_2\rvert=w.
    \label{eq:dual-task-map}
\end{align}
For each motor--propeller unit, consider
\begin{align}
    m_i\dot v_i=-b_iv_i\lvert v_i\rvert+\tau_i,
    \quad
    \lvert\tau_i\rvert\leq\bar\tau_i,
\end{align}
with \(
    b=(0.10,0.10),
    \quad
    \bar\tau=(1,1),
    \quad
    m=(0.05,0.05)\).
The corresponding symmetric acceleration capacities (SAC)
are~\cite{Franchi2026AeroPromptness}
\begin{align}
    s_i(v_i)
    =\frac{\bar\tau_i-b_iv_i^2}{m_i}
    =20-2v_i^2,
    \quad i\in\{1,2\}.
    \label{eq:dual-sac}
\end{align}
Following the drag-aware aerodynamic manipulability construction
of~\cite{Franchi2026AeroPromptness}, we equip the open feasible
spin-rate space
\begin{align}
    \mathcal E=(-\sqrt{10},\sqrt{10})^2
\end{align}
with the SAC metric
\begin{align}
    G(v)=\operatorname{diag}\!\left(
        \frac{1}{s_1(v_1)^2},
        \frac{1}{s_2(v_2)^2}
    \right).
\end{align}
The metric coefficients diverge as the available symmetric
acceleration approaches zero at the actuator boundary. Since
\begin{align}
    J_f(v)=
    \begin{bmatrix}
        2\lvert v_1\rvert & 1.4\lvert v_2\rvert
    \end{bmatrix},
\end{align}
the matrix representing the state-induced task co-metric is the scalar
\begin{align}
    D(v)
    =
    J_f(v)G(v)^{-1}J_f(v)^\top
    =
    4v_1^2\left(20-2v_1^2\right)^2
    +1.96v_2^2\left(20-2v_2^2\right)^2.
    \label{eq:dual-D}
\end{align}
The corresponding determinant proxy $\rho$ is the drag-aware
aerodynamic manipulability (DAAM), with logarithmic cost given
by~\eqref{eq:example-log-costs}. The task map
in~\eqref{eq:dual-task-map} is of class $C^1$ and is a submersion on
$\mathcal E_{\mathrm{reg}}=\mathcal E\setminus\{0\}$. Consequently,
Proposition~\ref{prop:task-side-scaling} and
Theorem~\ref{thm:main} apply on this regular set. The constrained
Hessian result in Proposition~\ref{prop:hessian} and the local
classification in Corollary~\ref{cor:secondorder} apply away from the
coordinate axes, where the required second derivatives exist.

For $w\in[-10,10]$, define
\begin{align}
    \mathcal V^\star(w)
    :=
    \argmin_{v\in\mathcal F_w}L(v),
    \quad
    \mathcal F_w\subset\mathcal E.
\end{align}
Consider the smooth task-coordinate change
\begin{align}
    \widetilde w=\phi(w)
    :=w+1.35\tanh\!\left(\frac{w-5}{3}\right),
\end{align}
with
\begin{align}
    c_C(w)=\phi'(w)
    =1+0.45\left[
        1-\tanh^2\!\left(\frac{w-5}{3}\right)
    \right]>0.
\end{align}
The positivity of $c_C$ makes $\phi$ orientation preserving. The
transformation fixes $w=5$ and produces a localized coordinate
stretching around that value, where $c_C(5)=1.45$, while
$c_C(w)\to1$ away from the transition region. It therefore modifies
cross-fiber comparisons most strongly for task values near $w=5$.

We also introduce the scalar reference task metric with matrix
representation
\begin{align}
    H(w)=\left[
        1+0.60\exp\!\left(-\frac{(w+5)^2}{18}\right)
    \right]^2,
\end{align}
so that its volume-density factor is
\begin{align}
    c_H(w)=\sqrt{H(w)}
    =1+0.60\exp\!\left(-\frac{(w+5)^2}{18}\right)>0.
\end{align}
This choice produces a localized metric weighting centered at
$w=-5$, opposite to the coordinate stretching centered at $w=5$.
The costs and fiber-optimum profiles follow
\eqref{eq:example-log-costs} and
\eqref{eq:common-profile-transformations}. For instance, on
$\mathcal F_{w_0}$ with $w_0=1$,
\begin{align}
    c_C(w_0)\simeq1.109,
    \quad
    c_H(w_0)=1+0.60e^{-2}\simeq1.081,
\end{align}
so the three restricted logarithmic costs differ only by constant
vertical translations.

Figure~\ref{fig:dual-rotor-landscapes} shows visibly different
actuator-space cost landscapes with a common fiberwise-optimal locus.
The red curves represent selected fixed-task fibers, while the yellow
branches form the locus of the sets $\mathcal V^\star(w)$. Its
coincidence across the three panels illustrates
Theorem~\ref{thm:main} under the non-Euclidean SAC metric.

The determinant-proxy cost is even in each spin-rate coordinate,
\begin{align}
    L(\varepsilon_1v_1,\varepsilon_2v_2)
    =
    L(v_1,v_2),
    \quad
    \varepsilon_i\in\{-1,1\},
\end{align}
and therefore has four symmetry-related global minima. The four
squares in Fig.~\ref{fig:dual-rotor-landscapes}(a) mark these
minima, one of which is annotated at approximately
$(-1.83,-1.83)$. Since $f(-v)=-f(v)$ and neither $c_C$ nor $c_H$ is
even in $w$, the task-side modifications break this global symmetry.
The circle in panel~(b) marks a selected global minimizer of
$\widetilde L$ near $(1.76,1.73)$, while the diamond in panel~(c)
marks a selected global minimizer of $L_H$ near $(-1.77,-1.74)$.

Figure~\ref{fig:dual-rotor-summary} examines the same distinction
through fiber-restricted costs and fiber-optimum profiles. Panel~(a)
shows that the three costs restricted to $\mathcal F_{w_0}$ differ
only by constant vertical translations and have the same minimizers.
Panel~(b) displays the common fiberwise-optimal locus and locates the
global selections on it. Panel~(c) shows that the task-coordinate
stretching favors a task value near $w=5$, whereas the reference
metric favors one near $w=-5$. The disconnected branches of the
common locus also motivate the prescribed-task experiment below.

\begin{figure}[pos=t]
    \centering
    \includegraphics[width=\linewidth]{%
        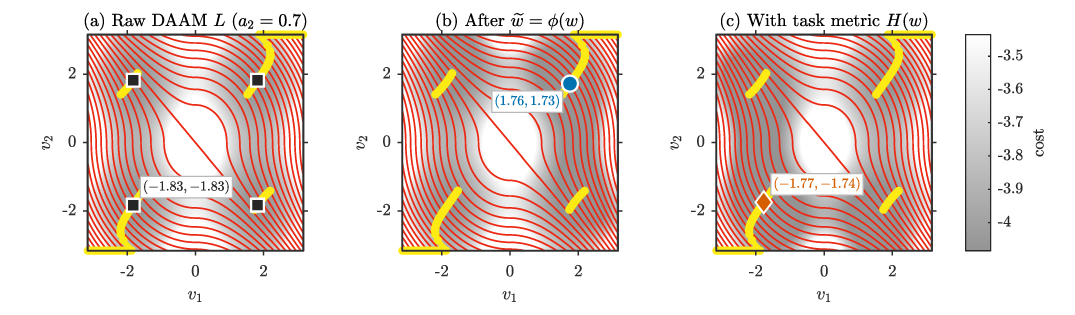}
    \caption{
    Actuator-space logarithmic-cost landscapes for the dual-rotor
    system. The grayscale background represents the logarithmic cost, with darker
gray indicating lower displayed values and white indicating higher
displayed values. The same cost scale is used across all three panels. (a) Determinant-proxy DAAM cost $L=-\log\rho$.
    (b) Cost $\widetilde L$ after the localized nonlinear
    task-coordinate stretching centered at $w=5$.
    (c) Cost $L_H$ obtained using the reference task metric, whose
    weighting is centered at $w=-5$. Red curves are fixed-task fibers,
    and the common yellow branches form the fiberwise-optimal locus.
    The four squares in panel~(a) mark the symmetry-related global
    minima of $L$. The circle and diamond mark selected global minima
    of $\widetilde L$ and $L_H$, respectively; the annotations report
    the corresponding spin-rate coordinates.
    }
    \label{fig:dual-rotor-landscapes}
\end{figure}

\begin{figure}[pos=t]
    \centering
    \includegraphics[width=0.97\linewidth]{%
        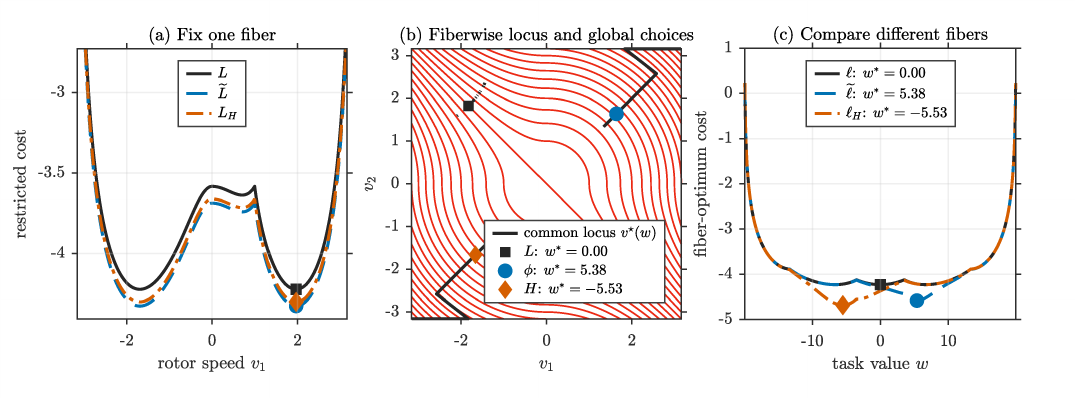}
    \caption{
    Fiberwise and global comparisons for the dual-rotor system.
    (a) Restrictions of $L$, $\widetilde L$, and $L_H$ to
    $\mathcal F_{w_0}$, parameterized by $v_1$; constant vertical
    translations preserve the minimizers. (b) Common fiberwise-optimal
    locus, with the square, circle, and diamond marking the global
    selections obtained from $L$, $\widetilde L$, and $L_H$,
    respectively. (c) Corresponding fiber-optimum profiles. The
    task-coordinate and task-metric constructions preserve the
    optimizer on every prescribed fiber but favor different fibers
    under global comparison.
    }
    \label{fig:dual-rotor-summary}
\end{figure}

\subsection{Allocation along a prescribed task trajectory}
\label{sec:prescribed-task-example}

We prescribe the smooth task trajectory
\begin{align}
    w(t)
    &=
    w_+
    +(w_--w_+)\left(3s^2-2s^3\right),
    \quad
    s=\frac{t}{T},
    \label{eq:dual-prescribed-task}
\end{align}
with $w_+=5.625$, $w_-=-w_+$, and $T=6$. Only the task trajectory is
prescribed. The initial and terminal rotor allocations are selected by
the optimizer subject to the instantaneous task constraint. An
internal-state trajectory $v(\cdot)$ satisfying
\begin{align}
    f(v(t))=w(t)
\end{align}
is a lift of the prescribed task trajectory.

\subsubsection{Pointwise allocation without temporal coupling}

We first consider the uncoupled admissible class $\mathcal A_w^0$
introduced in
Proposition~\ref{prop:prescribed-task-pointwise}. At every time $t$,
Theorem~\ref{thm:main} gives the common fiberwise-optimal set
\begin{align}
\begin{aligned}
    \mathcal V^\star(w(t))
    &:=
    \argmin_{u\in\mathcal F_{w(t)}}[-\rho(u)]
    =
    \argmin_{u\in\mathcal F_{w(t)}}[-\widetilde\rho(u)]
    =
    \argmin_{u\in\mathcal F_{w(t)}}[-\rho_H(u)]
    =
    \argmin_{u\in\mathcal F_{w(t)}}\ell_{\mathcal F}(u).
\end{aligned}
\label{eq:dual-common-pointwise-set}
\end{align}
The last equality follows from
Proposition~\ref{prop:fiber-normalized-range}: the minimum value of
$\ell_{\mathcal F}$ is zero and is attained precisely at the
fiberwise maximizers of manipulability.

Since $\mathcal A_w^0$ imposes no condition linking different times,
Proposition~\ref{prop:prescribed-task-pointwise} shows that the four
accumulated objectives have the same complete minimizer set. Their
minimizers are exactly the measurable selections satisfying
\begin{align}
    v(t)\in\mathcal V^\star(w(t))
    \quad
    \text{for almost every }t,
\end{align}
provided that such selections exist.

In this example, the common fiberwise-optimal locus has disconnected
branches, and no continuous selection can remain on that locus over
the complete prescribed task trajectory. A pointwise-optimal lift must
therefore contain instantaneous jumps between branches, as illustrated
in Fig.~\ref{fig:dual-rotor-pointwise-jumps}.

\begin{figure}[pos=t]
    \centering
    \includegraphics[width=0.8\linewidth,keepaspectratio]{%
        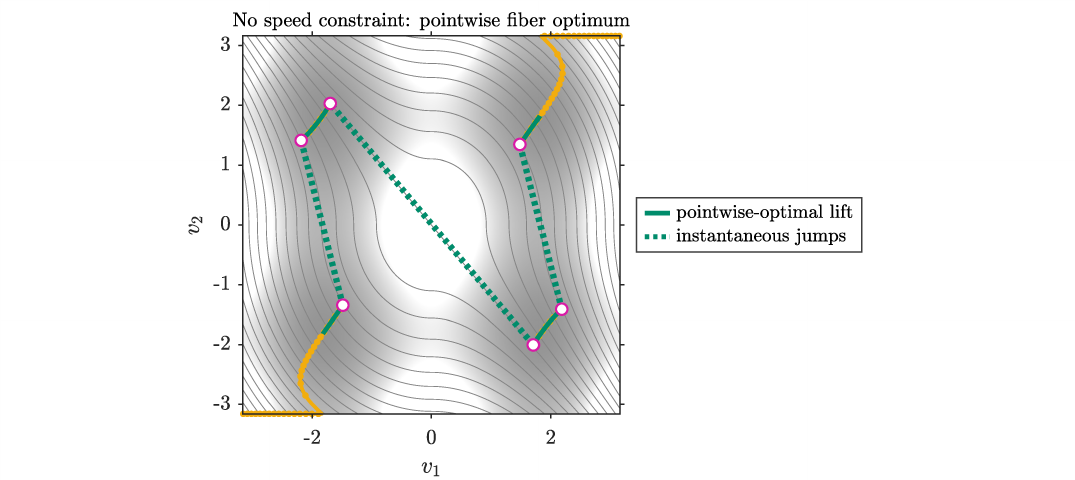}
   \caption{
Pointwise-optimal lift of the prescribed task trajectory without
temporal coupling. The grayscale background represents the raw
logarithmic cost, ranging from medium gray at lower values to white
at higher values. Thin gray curves are fixed-task fibers, while the
gold curves form the common fiberwise-optimal locus. Dark-teal
segments show a measurable selection from
$\mathcal V^\star(w(t))$; dotted connectors and magenta-outlined
markers identify the instantaneous jumps between disconnected
branches. Because different time instants are uncoupled, this
selection is simultaneously optimal for the instantaneous costs
$-\rho$, $-\widetilde\rho$, $-\rho_H$, and
$\ell_{\mathcal F}$.
}
    \label{fig:dual-rotor-pointwise-jumps}
\end{figure}

\subsubsection{Absolutely continuous allocation under an intrinsic speed bound}

The pointwise-optimal lift is not physically realizable because of its
instantaneous jumps. We therefore require absolute continuity and
impose an intrinsic speed bound through the SAC metric:
\begin{align}
    \mathcal A_{w,\bar s}
    :=
    \left\{
        v\in\mathrm{AC}([0,T],\mathcal E)
        \ \middle|\
        \begin{array}{l}
            f(v(t))=w(t)\quad\text{for every }t,\\
            \|\dot v(t)\|_{g_{v(t)}}\leq\bar s
            \quad\text{for almost every }t
        \end{array}
    \right\},
    \label{eq:dual-bounded-admissible-set}
\end{align}
where
\begin{align}
    \|\dot v\|_{g_v}
    &:=
    \sqrt{g_v(\dot v,\dot v)}
    =
    \sqrt{\dot v^\top G(v)\dot v},
    \quad
    \bar s=0.56.
    \label{eq:dual-intrinsic-speed}
\end{align}
The admissible class is intrinsic because both the task constraint and
the scalar $g_v(\dot v,\dot v)$ are independent of the coordinates
used on the internal-state manifold. Absolute continuity rules out
jumps, while the speed bound limits how rapidly a lift can move between
neighboring fibers.

The disconnected fiberwise-optimal branches make temporal coupling
operational. A feasible lift must leave one optimal branch before that
branch becomes unreachable under the speed bound, traverse
configurations with positive fiber-normalized loss, and approach the
next branch in anticipation of the forthcoming task values.

On the common admissible class
$\mathcal A_{w,\bar s}$, consider
\begin{align}
\begin{aligned}
    \mathcal J_\rho[v]
    &:=
    -\int_0^T\rho(v(t))\,\dd t,
    &
    \mathcal J_{\widetilde\rho}[v]
    &:=
    -\int_0^T\widetilde\rho(v(t))\,\dd t,
    \\
    \mathcal J_{\rho_H}[v]
    &:=
    -\int_0^T\rho_H(v(t))\,\dd t,
    &
    \mathcal J_{\mathcal F}[v]
    &:=
    \int_0^T\ell_{\mathcal F}(v(t))\,\dd t.
\end{aligned}
\label{eq:dual-prescribed-task-objectives}
\end{align}
Along the prescribed task trajectory, the three unnormalized
manipulability quantities satisfy
\begin{align}
    \widetilde\rho(v(t))
    &=
    c_C(w(t))\rho(v(t)),
    \quad
    \rho_H(v(t))
    =
    c_H(w(t))\rho(v(t)).
    \label{eq:dual-time-dependent-scalings}
\end{align}
At each fixed time, $c_C(w(t))$ and $c_H(w(t))$ are positive constants
on the instantaneous fiber, so they do not change its pointwise
optimizer. Along the complete trajectory, however, these factors vary
with $t$. Once the intrinsic speed bound couples allocations at
different times, the factors change the relative importance assigned
to different portions of a candidate lift. The three unnormalized
functionals can therefore favor different anticipatory transitions
between the common fiberwise-optimal branches.

By contrast, Theorem~
\ref{thm:intrinsic-cross-fiber-trajectory-optimization} applies
directly to $\mathcal J_{\mathcal F}$ on
$\mathcal A_{w,\bar s}$. The fiber-normalized functional is invariant
under coordinate changes on the internal-state and task manifolds and
independent of the reference task metric. Hence, every task-side
construction of $\ell_{\mathcal F}$ defines the same functional and
the same complete minimizer set on this common physical admissible
class.

\subsubsection{Numerical transcription}

For the numerical solution, $[0,T]$ is divided into $N=150$
intervals. At every time node $t_k$, the instantaneous fiber
$\mathcal F_{w(t_k)}$ is sampled at $1201$ values of $v_1$, and
$v_2$ is recovered analytically from
$f(v)=w(t_k)$. Samples on consecutive fibers are connected only when
the midpoint approximation of their SAC-metric displacement satisfies
\begin{align}
    \left(
        \frac{\Delta v_1}{s_1(\bar v_1)}
    \right)^2
    +
    \left(
        \frac{\Delta v_2}{s_2(\bar v_2)}
    \right)^2
    &\leq
    (\bar s\Delta t)^2,
    \label{eq:dual-discrete-speed-bound}
\end{align}
where
\begin{align}
    \bar v
    &:=
    \frac{v_k+v_{k+1}}{2},
    \quad
    \Delta v:=v_{k+1}-v_k,
    \quad
    \Delta t:=\frac{T}{N}.
\end{align}
A layered dynamic program minimizes a trapezoidal approximation of
each functional in
\eqref{eq:dual-prescribed-task-objectives} over the same transition
graph. Thus, all four numerical problems have identical discrete
feasible sets; differences among their solutions arise solely from
their objective values.

Figure~\ref{fig:dual-rotor-bounded-landscapes} shows the resulting
optimal lifts in the actuator-state plane. The three unnormalized
objectives select different transition corridors because their
task-dependent factors weight the visited fibers differently.
Panel~(d) superposes three independently computed lifts obtained from the equivalent task-side constructions of the intrinsic fiber-normalized loss;
all task-side constructions of $\ell_{\mathcal F}$ produce the same
complete discrete solution set.

Figure~\ref{fig:dual-rotor-speed-profiles} provides the corresponding
time-domain view. The pointwise solution jumps between disconnected
optimal branches, whereas the speed-bounded lifts leave the current
branch in advance and traverse suboptimal allocations to reach a
subsequent branch continuously.

\begin{figure}[pos=t]
    \centering
    \includegraphics[width=0.9\linewidth,keepaspectratio]{%
        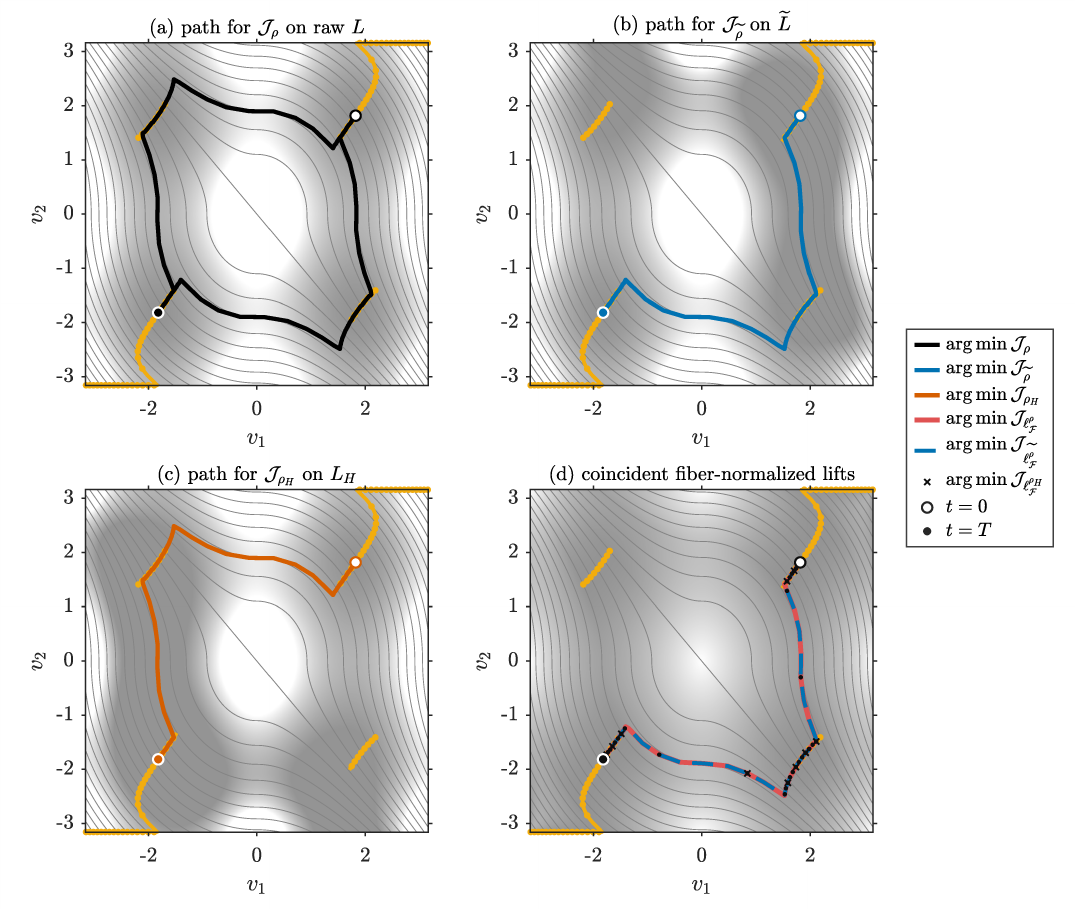}
    \caption{
    Optimal lifts of the same prescribed task trajectory under the
    intrinsic speed bound
    $\|\dot v\|_{g_v}\leq0.56$. In panels~(a)--(c), the grayscale background represents the
corresponding logarithmic running cost $L$, $\widetilde L$, or
$L_H$, while panel~(d) displays $\ell_{\mathcal F}$. Darker gray
indicates lower displayed cost and white indicates higher displayed
cost. Thin gray curves are fixed-task fibers, the gold curves form
the common pointwise fiberwise-optimal locus, and open and filled
circles identify $t=0$ and $t=T$, respectively.
Panels~(a)--(c) show representative minimizers of
$\mathcal J_\rho$, $\mathcal J_{\widetilde\rho}$, and
$\mathcal J_{\rho_H}$ over their corresponding running-cost
landscapes. Panel~(d) superposes the independently computed lifts
obtained by minimizing the functionals based on
$\ell_{\mathcal F}^{\rho}$,
$\ell_{\mathcal F}^{\widetilde\rho}$, and
$\ell_{\mathcal F}^{\rho_H}$.
Their pointwise coincidence illustrates the theoretically predicted
equality of the three fiber-normalized objectives and of their complete
minimizer sets.
    }
    \label{fig:dual-rotor-bounded-landscapes}
\end{figure}

\begin{figure}[pos=t]
    \centering
\includegraphics[width=0.95\linewidth,keepaspectratio]{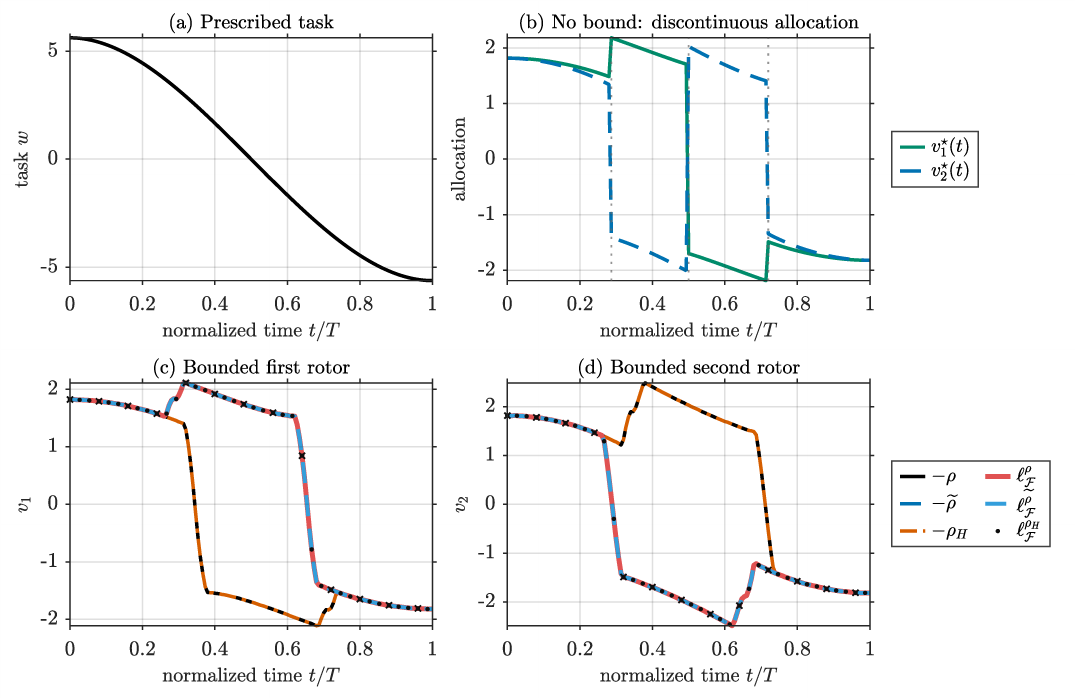}
    \caption{
Time-domain view of the prescribed-task allocation problem.
(a) Prescribed task trajectory $w(t)$.
(b) Pointwise-optimal allocation without temporal coupling; dotted
vertical lines mark instantaneous jumps between disconnected branches.
(c)--(d) Spin-rate profiles of the speed-bounded optimal lifts.
The intrinsic speed bound replaces the jumps with anticipatory
transitions through configurations having positive fiber-normalized
loss. The light-red solid, light-blue dashed, and black-marked
profiles are obtained by independently minimizing
$\ell_{\mathcal F}^{\rho}$,
$\ell_{\mathcal F}^{\widetilde\rho}$, and
$\ell_{\mathcal F}^{\rho_H}$, respectively.
Their pointwise coincidence confirms that the three task-side
constructions define the same objective and select the same optimal
lift. Any overlap between profiles produced by the unnormalized
objectives is instead a numerical symmetry of this example and is not
a general invariance property.
}
    \label{fig:dual-rotor-speed-profiles}
\end{figure}

\subsection{Free-task planning between fixed configurations}
\label{sec:free-task-example}

We finally consider a problem in which the endpoint internal states
are prescribed but the task trajectory is free. Without a motion
constraint, the endpoint-only continuous problem with a state-dependent
running cost is degenerate: an absolutely continuous trajectory can
compress the transitions toward and away from a favorable state into
arbitrarily short intervals. The numerical experiment is therefore
formulated explicitly as a finite graph problem.

Let $\V_h\subset\V$ be a finite grid, and let
$\mathcal E_h\subset\V_h\times\V_h$ contain the eight neighboring
moves and the zero move at every valid grid node. For a fixed number
of intervals $N$, define
\begin{align}
    \mathcal A_{A,B}^{N,h}
    &:=
    \left\{
        (v_0,\ldots,v_N)\in\V_h^{N+1}
        \ \middle|\
        \begin{array}{l}
            v_0=v_A,\quad v_N=v_B,\\
            (v_k,v_{k+1})\in\mathcal E_h,\\
            k=0,\ldots,N-1
        \end{array}
    \right\}.
    \label{eq:free-task-discrete-admissible-set}
\end{align}
For a node cost $C:\V_h\to\R$, use the trapezoidal functional
\begin{align}
    \mathcal J_C^{N,h}(v_0,\ldots,v_N)
    &:=
    \frac{\Delta t}{2}
    \sum_{k=0}^{N-1}
    \left[
        C(v_k)+C(v_{k+1})
    \right],
    \quad
    \Delta t=\frac{T}{N}.
    \label{eq:free-task-discrete-cost}
\end{align}
For either robotic system, the comparison uses
\begin{align}
    C
    \in
    \left\{
        -\rho,
        -\widetilde\rho,
        -\rho_H,
        \ell_{\mathcal F}
    \right\}.
\end{align}
Every objective uses the same grid, endpoints, horizon, edge set, and
treatment of zero moves. The edge rule in
\eqref{eq:free-task-discrete-admissible-set} acts as a common discrete
motion constraint; no convergence to a particular continuous
speed-bounded problem is claimed. When several minimum-cost paths
exist, the figures display one representative minimizer, while the
theoretical comparisons concern the complete solution sets.

Because the optimizer selects the task sequence $f(v_k)$, candidate
paths can visit different fibers at the same stage. The factors
$c_C(f(v_k))$ and $c_H(f(v_k))$ then assign different relative values
to graph nodes on different fibers and can produce different optimal
paths for the unnormalized objectives. This is a temporally coupled
cross-fiber problem and therefore lies outside the fixed-fiber
conclusion of Theorem~\ref{thm:main}.

By contrast, Theorems~\ref{thm:task-metric-independence}
and~\ref{thm:mu-coordinate-invariance} imply, at every node,
\begin{align}
    \ell_{\mathcal F}^{\rho}(v)
    &=
    \ell_{\mathcal F}^{\widetilde\rho}(v)
    =
    \ell_{\mathcal F}^{\rho_H}(v)
    =
    \ell_{\mathcal F}(v).
    \label{eq:free-task-normalized-node-equality}
\end{align}
Hence, the three normalized discrete functionals are identical on
the common finite set
\eqref{eq:free-task-discrete-admissible-set} and have exactly the same
complete solution set at every common grid resolution. This is the
finite-graph counterpart of
Theorem~\ref{thm:intrinsic-cross-fiber-trajectory-optimization}.

\subsubsection{Planar 2R manipulator}

For the 2R manipulator, $T=10$, and $\mathbb T^2$ is sampled on a
$181\times181$ periodic grid. The prescribed endpoint configurations
are
\begin{align}
    q_A
    &=
    (-2.40,0.30),
    \quad
    q_B
    =
    (0,\pi/4).
\end{align}
The transition graph contains the eight periodic neighboring moves
and the zero move. The number of intervals equals the minimum graph
distance between the selected endpoint nodes, so every admissible
discrete path connects them through neighboring grid states.

Figure~\ref{fig:2r-fixed-endpoint-paths} compares representative
minimizers of the four discrete problems. The unnormalized objectives
assign different relative values to states on different fibers and
can consequently select different routes. 

\begin{figure}[pos=t]
    \centering
    \includegraphics[width=0.9\linewidth,keepaspectratio]{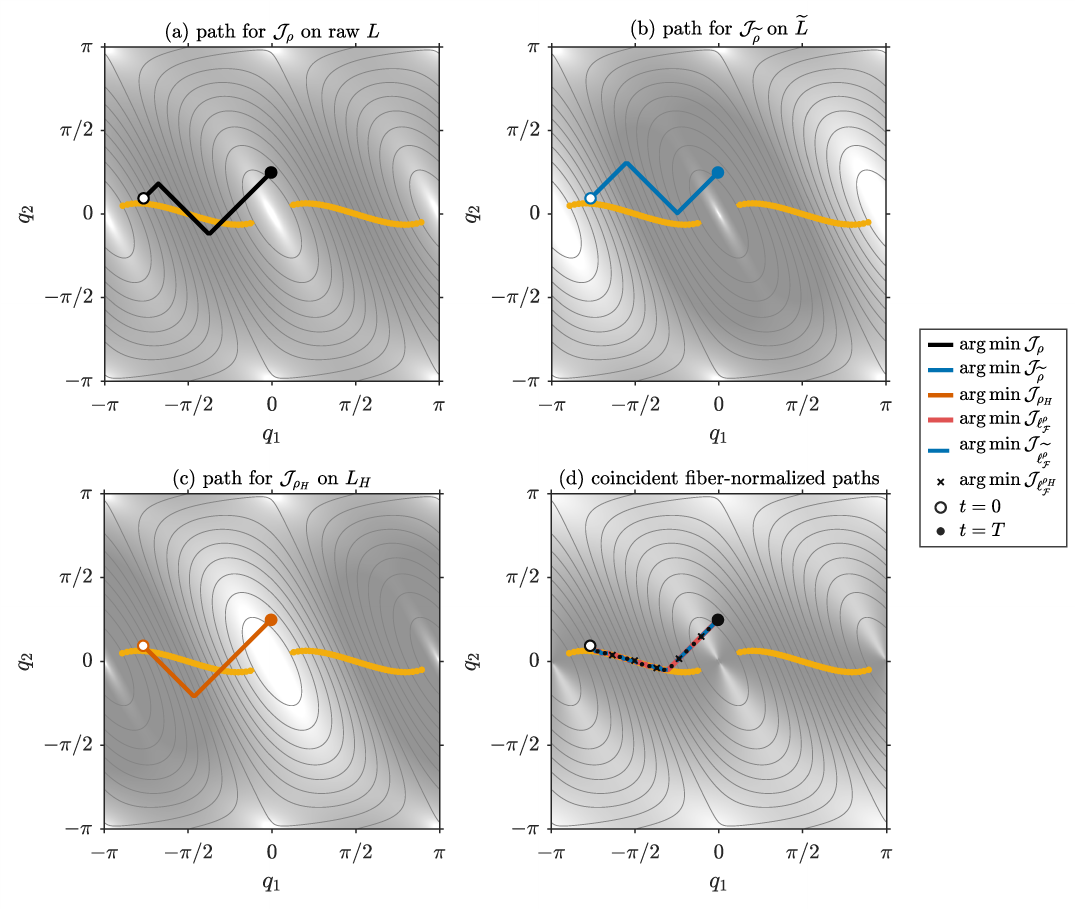}
    \caption{
    Free-task planning for the planar 2R manipulator between the same
    prescribed endpoint configurations. All panels use the same
    periodic grid, horizon, and transition graph. The grayscale backgrounds show $L$, $\widetilde L$, $L_H$, and
$\ell_{\mathcal F}$ in panels~(a)--(d), respectively, with darker
gray denoting lower displayed values. Thin gray curves are
fixed-task fibers and the gold curves form the common fiberwise-
optimal locus. Open and filled circles identify the prescribed
initial and final configurations. Panels~(a)--(c)
    show representative paths minimizing the discrete functionals
    based on $-\rho$, $-\widetilde\rho$, and $-\rho_H$,
    respectively. Panel~(d) superposes the independently computed paths obtained from
$\ell_{\mathcal F}^{\rho}$, 
$\ell_{\mathcal F}^{\widetilde\rho}$, and
$\ell_{\mathcal F}^{\rho_H}$. Their pointwise coincidence confirms
that fiber normalization removes both the task-coordinate and
task-metric dependence.
    }
    \label{fig:2r-fixed-endpoint-paths}
\end{figure}

\subsubsection{Dual rotor}

For the dual rotor, $T=10$, and the planning grid contains
$181\times171$ regularly sampled nodes before insertion of the exact
endpoint states. The prescribed endpoints are
\begin{align}
    v_A
    &=
    (-0.75\sqrt{10},0),
    \quad
    v_B
    =
    (0.75\sqrt{10},0).
\end{align}
The number of intervals equals the minimum graph distance between the
endpoints plus ten additional intervals. Every objective uses the
same valid task band, grid, endpoints, horizon, and edge set.

Figure~\ref{fig:dual-rotor-fixed-endpoint-paths} shows representative
minimizers. The determinant-proxy objective selects a lower corridor,
the task-coordinate modification selects an upper corridor, and the
task-metric construction favors a differently shaped lower passage.

Panel (d) superposes the three paths obtained by independently
minimizing the fiber-normalized objectives.
The normalized path need not coincide with an
unnormalized path: invariance concerns the different constructions of
$\ell_{\mathcal F}$.

\begin{figure}[pos=t]
    \centering
    \includegraphics[width=0.9\linewidth,keepaspectratio]{%
    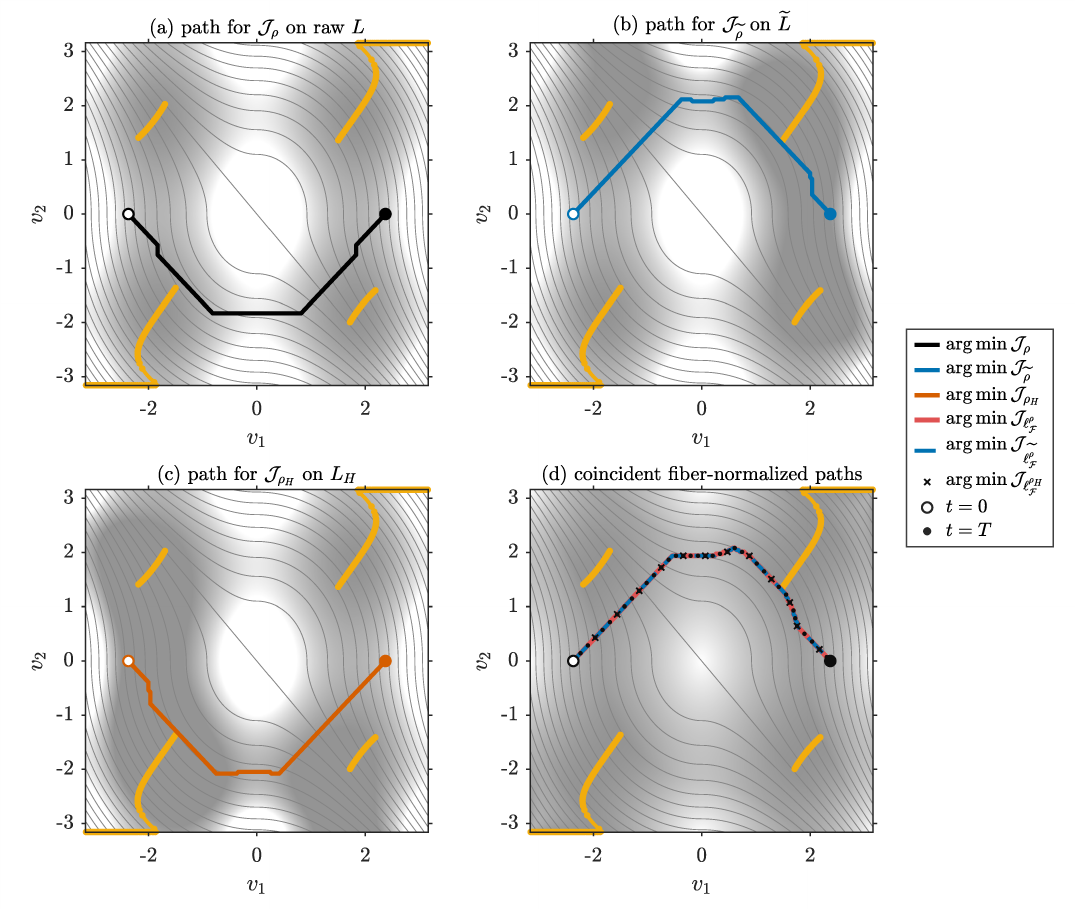}
    \caption{
    Free-task planning for the dual-rotor system between the same
    prescribed endpoint internal states. All panels use the same
    horizon, feasible grid, and transition graph.
    The grayscale backgrounds show $L$, $\widetilde L$, $L_H$, and
$\ell_{\mathcal F}$ in panels~(a)--(d), respectively, with darker
gray denoting lower displayed values. Thin gray curves are
fixed-task fibers and the gold curves form the common fiberwise-
optimal locus. Open and filled circles identify the prescribed
initial and final internal states. Panels~(a)--(c)
    show representative paths minimizing the discrete functionals
    based on $-\rho$, $-\widetilde\rho$, and $-\rho_H$,
    respectively. Panel~(d) superposes the independently computed paths obtained from
$\ell_{\mathcal F}^{\rho}$, 
$\ell_{\mathcal F}^{\widetilde\rho}$, and
$\ell_{\mathcal F}^{\rho_H}$. Their pointwise coincidence confirms
that fiber normalization removes both the task-coordinate and
task-metric dependence.
    }
    \label{fig:dual-rotor-fixed-endpoint-paths}
\end{figure}

Figure~\ref{fig:dual-rotor-fixed-endpoint-profile} evaluates the common
fiber-normalized loss along the three paths selected by the
unnormalized objectives and along the three independently computed
paths selected by the equivalent fiber-normalized constructions. The loss
assesses each visited internal state relative to the best allocation
available on its instantaneous fiber, rather than comparing absolute
manipulability values across different task values.

\begin{figure}[pos=t]
    \centering
    \includegraphics[width=0.85\linewidth,keepaspectratio]{%
        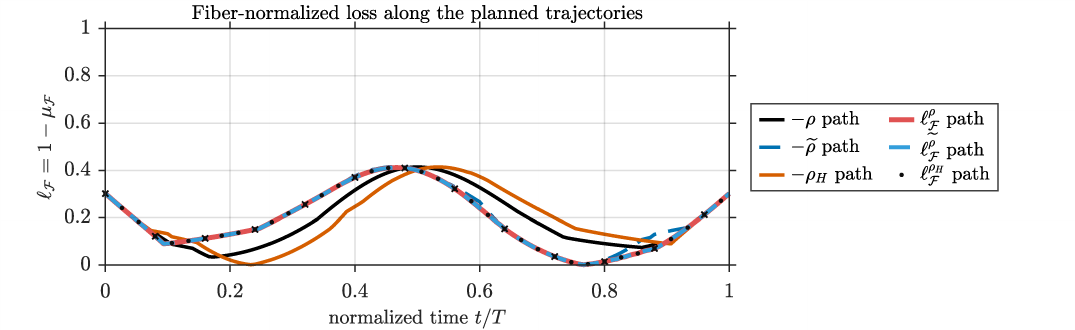}
    \caption{
Fiber-normalized loss $\ell_{\mathcal F}$ evaluated along the
representative paths in
Fig.~\ref{fig:dual-rotor-fixed-endpoint-paths}.
The black, blue, and orange curves correspond to paths obtained by
minimizing the unnormalized costs $-\rho$, $-\widetilde\rho$, and
$-\rho_H$, respectively.
The light-red solid, light-blue dashed, and black-marked curves
correspond to paths obtained by independently minimizing the
functionals based on
$\ell_{\mathcal F}^{\rho}$,
$\ell_{\mathcal F}^{\widetilde\rho}$, and
$\ell_{\mathcal F}^{\rho_H}$.
Their pointwise coincidence confirms that the three
fiber-normalized constructions define the same objective and select
the same optimal path.
}
    \label{fig:dual-rotor-fixed-endpoint-profile}
\end{figure}

\section{Discussion and Conclusions}
\label{sec:discussion}

This work separates three uses of determinant-based manipulability.
On a prescribed regular fiber, the determinant proxy $\rho$, its
representation in any task chart, and every metric-completed
manipulability $\rho_H$ differ only by positive constants. They
therefore induce the same complete ordering, constrained critical
points, gradient directions, and local optimality classifications.
Consequently, $\rho$ is an exact objective for fixed-task redundancy
optimization. The scaling changes gradient magnitudes, however, so
discrete algorithms using fixed gains or step sizes need not have
identical convergence rates.

Comparisons across fibers require additional care. The
metric-completed manipulability $\rho_H$ measures absolute
differential task capability relative to the internal metric $g$ and
the reference task metric $h$. By contrast, the fiber-normalized objective
$\mu_{\mathcal F}$ measures the fraction of the best capability
available on the current fiber that is attained by an internal state.
Thus, internal states $v_1$ and $v_2$ on different fibers may satisfy
\(
    \rho_H(v_1)
    <
    \rho_H(v_2),
    \quad
    \mu_{\mathcal F}(v_1)
    >
    \mu_{\mathcal F}(v_2).
\)
The first state then has lower absolute capability but better relative
redundancy utilization. Fiber normalization intentionally removes the
task-dependent capability scale, and all fiberwise-optimal states have
$\mu_{\mathcal F}=1$ even when their absolute capabilities differ.

The same distinction governs trajectory optimization. For a
prescribed task trajectory without temporal coupling, optimization
decomposes pointwise over the instantaneous fibers, and all
unnormalized and fiber-normalized objectives admit the same measurable
pointwise-optimal selections. Once continuity, intrinsic speed bounds,
endpoint conditions, or motion penalties couple different times, the
task-dependent factors multiplying the unnormalized objectives can
change the selected transitions. The same occurs when the task
trajectory is free. In these cross-fiber problems, the fiber-normalized loss
$\ell_{\mathcal F}$ provides an intrinsic state cost that can be
combined with physical motion constraints or intrinsic motion
penalties without introducing dependence on the task chart or the
reference task metric.

The planar-manipulator and dual-rotor examples illustrate these
statements. The visibly different unnormalized landscapes retain a
common optimizer locus on every prescribed fiber, but can select
different fibers under global comparison. Along a prescribed task
trajectory, disconnected pointwise-optimal branches produce jumps
when different times are uncoupled; an intrinsic speed bound replaces
these jumps with anticipatory transitions through suboptimal internal
states. In free-task graph planning, the unnormalized objectives can
select different routes, whereas every task-side construction of
$\ell_{\mathcal F}$ defines the same node cost and complete discrete
solution set.

These results suggest a direct choice of objective:
\begin{itemize}
    \item use $\rho$ for redundancy optimization on a prescribed
    regular fiber;
    \item use $\rho_H$ with physically meaningful $g$ and $h$ when
    absolute task capability is the intended quantity;
    \item use $\mu_{\mathcal F}$ or $\ell_{\mathcal F}$ when comparing
    relative redundancy utilization across task values or optimizing
    trajectories that cross fibers.
\end{itemize}

The analysis is restricted to regular regions of the task map and to
fibers having finite positive reference values. Near singularities,
the state-induced task co-metric loses positive definiteness, while
nonattained or nonunique fiberwise optima can compromise existence or
regularity of $\mu_{\mathcal F}$. Moreover, invariance does not imply
model independence: the task map $f$ determines the fibers, and the
internal metric $g$ determines how internal capability and motion are
measured. Computing $\mu_{\mathcal F}$ also requires solving or
approximating a fiberwise optimization problem. Efficient online
approximations, continuation across changing optimizer branches, and
regularization near singularities remain important directions for
future work.

\bibliographystyle{elsarticle-num}
\bibliography{bib}

\begin{thebibliography}{10}
\expandafter\ifx\csname url\endcsname\relax
  \def\url#1{\texttt{#1}}\fi
\expandafter\ifx\csname urlprefix\endcsname\relax\def\urlprefix{URL }\fi
\expandafter\ifx\csname href\endcsname\relax
  \def\href#1#2{#2} \def\path#1{#1}\fi

\bibitem{Whitney1969}
D.~E. Whitney, Resolved motion rate control of manipulators and human
  prostheses, IEEE Transactions on Man-Machine Systems 10~(2) (1969) 47--53.
\newblock \href {https://doi.org/10.1109/TMMS.1969.299896}
  {\path{doi:10.1109/TMMS.1969.299896}}.

\bibitem{Liegeois1977}
A.~Li{\'e}geois, Automatic supervisory control of the configuration and
  behavior of multibody mechanisms, IEEE Transactions on Systems, Man, and
  Cybernetics 7~(12) (1977) 868--871.
\newblock \href {https://doi.org/10.1109/TSMC.1977.4309644}
  {\path{doi:10.1109/TSMC.1977.4309644}}.

\bibitem{NakamuraHanafusaYoshikawa1987}
Y.~Nakamura, H.~Hanafusa, T.~Yoshikawa, Task-priority based redundancy control
  of robot manipulators, The International Journal of Robotics Research 6~(2)
  (1987) 3--15.
\newblock \href {https://doi.org/10.1177/027836498700600201}
  {\path{doi:10.1177/027836498700600201}}.

\bibitem{Siciliano1990}
B.~Siciliano, Kinematic control of redundant robot manipulators: A tutorial,
  Journal of Intelligent \& Robotic Systems 3~(3) (1990) 201--212.
\newblock \href {https://doi.org/10.1007/BF00126069}
  {\path{doi:10.1007/BF00126069}}.

\bibitem{Yoshikawa1985}
T.~Yoshikawa, Manipulability of robotic mechanisms, The International Journal
  of Robotics Research 4~(2) (1985) 3--9.
\newblock \href {https://doi.org/10.1177/027836498500400201}
  {\path{doi:10.1177/027836498500400201}}.

\bibitem{Yoshikawa1985Control}
T.~Yoshikawa, Manipulability and redundancy control of robotic mechanisms, in:
  Proceedings of the IEEE International Conference on Robotics and Automation,
  Vol.~2, 1985, pp. 1004--1009.
\newblock \href {https://doi.org/10.1109/ROBOT.1985.1087283}
  {\path{doi:10.1109/ROBOT.1985.1087283}}.

\bibitem{Yoshikawa1985Dynamic}
T.~Yoshikawa, Dynamic manipulability of robot manipulators, Transactions of the
  Society of Instrument and Control Engineers 21~(9) (1985) 970--975.
\newblock \href {https://doi.org/10.9746/sicetr1965.21.970}
  {\path{doi:10.9746/sicetr1965.21.970}}.

\bibitem{Yoshikawa1990}
T.~Yoshikawa, Translational and rotational manipulability of robotic
  manipulators, in: Proceedings of the American Control Conference, 1990, pp.
  228--233.
\newblock \href {https://doi.org/10.23919/ACC.1990.4790733}
  {\path{doi:10.23919/ACC.1990.4790733}}.

\bibitem{KleinBlaho1987}
C.~A. Klein, B.~E. Blaho, Dexterity measures for the design and control of
  kinematically redundant manipulators, The International Journal of Robotics
  Research 6~(2) (1987) 72--83.
\newblock \href {https://doi.org/10.1177/027836498700600206}
  {\path{doi:10.1177/027836498700600206}}.

\bibitem{Doty1995}
K.~L. Doty, C.~Melchiorri, E.~M. Schwartz, C.~Bonivento, Robot manipulability,
  IEEE Transactions on Robotics and Automation 11~(3) (1995) 462--468.

\bibitem{PatelSobh2015}
S.~Patel, T.~Sobh, Manipulator performance measures---a comprehensive
  literature survey, Journal of Intelligent \& Robotic Systems 77~(3--4) (2015)
  547--570.
\newblock \href {https://doi.org/10.1007/s10846-014-0024-y}
  {\path{doi:10.1007/s10846-014-0024-y}}.

\bibitem{Iwatsuki1994}
N.~Iwatsuki, I.~Hayashi, T.~Ohta, Optimum motion control of a redundant robot
  with the objective function of dexterity, JSME International Journal, Series
  C: Dynamics, Control, Robotics, Design and Manufacturing 37~(3) (1994)
  581--587.
\newblock \href {https://doi.org/10.1299/jsmec1993.37.581}
  {\path{doi:10.1299/jsmec1993.37.581}}.

\bibitem{Jin2017}
L.~Jin, S.~Li, H.~M. La, X.~Luo, Manipulability optimization of redundant
  manipulators using dynamic neural networks, IEEE Transactions on Industrial
  Electronics 64~(6) (2017) 4710--4720.
\newblock \href {https://doi.org/10.1109/TIE.2017.2674624}
  {\path{doi:10.1109/TIE.2017.2674624}}.

\bibitem{Su2019}
H.~Su, S.~Li, J.~Manivannan, L.~Bascetta, G.~Ferrigno, E.~De~Momi,
  Manipulability optimization control of a serial redundant robot for
  robot-assisted minimally invasive surgery, in: Proceedings of the IEEE
  International Conference on Robotics and Automation, 2019, pp. 1323--1328.
\newblock \href {https://doi.org/10.1109/ICRA.2019.8793676}
  {\path{doi:10.1109/ICRA.2019.8793676}}.

\bibitem{HuberWollherr2019}
G.~Huber, D.~Wollherr, Efficient closed-form task space manipulability for a
  7-{DOF} serial robot, Robotics 8~(4) (2019) 98.
\newblock \href {https://doi.org/10.3390/robotics8040098}
  {\path{doi:10.3390/robotics8040098}}.

\bibitem{HuberWollherr2021}
G.~Huber, D.~Wollherr, Globally optimal online redundancy resolution for serial
  7-{DOF} kinematics along {$SE(3)$} trajectories, in: Proceedings of the IEEE
  International Conference on Robotics and Automation, 2021, pp. 7570--7576.
\newblock \href {https://doi.org/10.1109/ICRA48506.2021.9560810}
  {\path{doi:10.1109/ICRA48506.2021.9560810}}.

\bibitem{KadenThomas2019}
S.~Kaden, U.~Thomas, Maximizing robot manipulability along paths in
  collision-free motion planning, in: Proceedings of the International
  Conference on Advanced Robotics, 2019, pp. 105--110.
\newblock \href {https://doi.org/10.1109/ICAR46387.2019.8981591}
  {\path{doi:10.1109/ICAR46387.2019.8981591}}.

\bibitem{Shen2023}
H.~Shen, W.-F. Xie, J.~Tang, T.~Zhou, Adaptive manipulability-based path
  planning strategy for industrial robot manipulators, IEEE/ASME Transactions
  on Mechatronics 28~(3) (2023) 1742--1753.
\newblock \href {https://doi.org/10.1109/TMECH.2022.3231467}
  {\path{doi:10.1109/TMECH.2022.3231467}}.

\bibitem{Maric2019}
F.~Mari{\'c}, O.~Limoyo, L.~Petrovi{\'c}, T.~Ablett, I.~Petrovi{\'c}, J.~Kelly,
  Fast manipulability maximization using continuous-time trajectory
  optimization, in: Proceedings of the IEEE/RSJ International Conference on
  Intelligent Robots and Systems, 2019, pp. 8258--8264.
\newblock \href {https://doi.org/10.1109/IROS40897.2019.8968441}
  {\path{doi:10.1109/IROS40897.2019.8968441}}.

\bibitem{Franchi2026Coactivation}
A.~Franchi, Muscle coactivation in the sky: Geometry and pareto optimality of
  energy vs. aerodynamic promptness and multirotors as variable stiffness
  actuators, in: Proceedings of the 2026 International Conference on Unmanned
  Aircraft Systems (ICUAS), IEEE, Corfu, Greece, 2026.
\newblock \href {https://doi.org/10.1109/ICUAS69441.2026.11598607}
  {\path{doi:10.1109/ICUAS69441.2026.11598607}}.

\bibitem{Franchi2026AeroPromptness}
A.~Franchi, Aero-promptness: Drag-aware aerodynamic manipulability for
  propeller-driven vehicles (2026).
\newblock \href {http://arxiv.org/abs/2603.07998} {\path{arXiv:2603.07998}},
  \href {https://doi.org/10.48550/arXiv.2603.07998}
  {\path{doi:10.48550/arXiv.2603.07998}}.

\bibitem{ParkBrockett1994}
F.~C. Park, R.~W. Brockett, Kinematic dexterity of robotic mechanisms, The
  International Journal of Robotics Research 13~(1) (1994) 1--15.
\newblock \href {https://doi.org/10.1177/027836499401300101}
  {\path{doi:10.1177/027836499401300101}}.

\bibitem{ParkKim1998}
F.~C. Park, J.~W. Kim, Manipulability of closed kinematic chains, Journal of
  Mechanical Design 120~(4) (1998) 542--548.
\newblock \href {https://doi.org/10.1115/1.2829312}
  {\path{doi:10.1115/1.2829312}}.

\bibitem{Lachner2020}
J.~Lachner, V.~Schettino, F.~Allmendinger, M.~D. Fiore, F.~Ficuciello,
  B.~Siciliano, S.~Stramigioli, The influence of coordinates in robotic
  manipulability analysis, Mechanism and Machine Theory 146 (2020) 103722.
\newblock \href {https://doi.org/10.1016/j.mechmachtheory.2019.103722}
  {\path{doi:10.1016/j.mechmachtheory.2019.103722}}.

\bibitem{Vahrenkamp2012}
N.~Vahrenkamp, T.~Asfour, G.~Metta, G.~Sandini, R.~Dillmann, Manipulability
  analysis, in: Proceedings of the IEEE-RAS International Conference on
  Humanoid Robots, 2012, pp. 568--573.
\newblock \href {https://doi.org/10.1109/HUMANOIDS.2012.6651576}
  {\path{doi:10.1109/HUMANOIDS.2012.6651576}}.

\bibitem{XuZhanCao2018}
Q.~Xu, Q.~Zhan, X.~Cao, The effect of redundant degrees of freedom on
  manipulator's kinematic characteristics, in: Proceedings of the International
  Conference on Robotics, Control and Automation Engineering, 2018, pp.
  127--131.
\newblock \href {https://doi.org/10.1145/3303714.3303718}
  {\path{doi:10.1145/3303714.3303718}}.

\bibitem{Chiu1988}
S.~L. Chiu, Task compatibility of manipulator postures, The International
  Journal of Robotics Research 7~(5) (1988) 13--21.
\newblock \href {https://doi.org/10.1177/027836498800700502}
  {\path{doi:10.1177/027836498800700502}}.

\bibitem{KazerounianWang1988}
K.~Kazerounian, Z.~Wang, Global versus local optimization in redundancy
  resolution of robotic manipulators, The International Journal of Robotics
  Research 7~(5) (1988) 3--12.
\newblock \href {https://doi.org/10.1177/027836498800700501}
  {\path{doi:10.1177/027836498800700501}}.

\bibitem{Zhang2020}
H.~Zhang, Q.~Sheng, Y.~Sun, X.~Sheng, Z.~Xiong, X.~Zhu, A novel coordinated
  motion planner based on capability map for autonomous mobile manipulator,
  Robotics and Autonomous Systems 129 (2020) 103554.
\newblock \href {https://doi.org/10.1016/j.robot.2020.103554}
  {\path{doi:10.1016/j.robot.2020.103554}}.

\end{thebibliography}

\end{document}